\theoremstyle{plain}
\newtheorem{theorem}{Theorem}[section]
\newtheorem{proposition}[theorem]{Proposition}
\newtheorem{lemma}[theorem]{Lemma}
\newtheorem{corollary}[theorem]{Corollary}
\theoremstyle{definition}
\newtheorem{definition}[theorem]{Definition}
\newtheorem{remark}[theorem]{Remark}
\newcommand{\Char}{\mathbbm{1}}
\newcommand{\cG}{\mathcal{G}}
\newcommand{\cN}{\mathcal{N}}
\newcommand{\cE}{\mathcal{E}}
\newcommand{\cS}{\mathcal{S}}
\newcommand{\cP}{\mathcal{P}}
\newcommand{\cV}{\mathcal{V}}
\newcommand{\cX}{\mathcal{X}}
\newcommand{\N}{\mathbb{N}}
\newcommand{\Z}{\mathbb{Z}}
\newcommand{\R}{\mathbb{R}}
\newcommand{\E}{\mathbb{E}}
\newcommand\va{\mathbf{a}}
\newcommand\vb{\mathbf{b}}
\newcommand\vd{\mathbf{d}}
\newcommand\ve{\mathbf{e}}
\newcommand\vf{\mathbf{f}}
\newcommand\vg{\mathbf{g}}
\newcommand\vu{\mathbf{u}}
\newcommand\vv{\mathbf{v}}
\newcommand\vw{\mathbf{w}}
\newcommand\vx{\mathbf{x}}
\newcommand\vF{\mathbf{F}}
\newcommand\vzero{\mathbf{0}}
\newcommand\vone{\mathbf{1}}
\newcommand\mA{\mathbf{A}}
\newcommand\mB{\mathbf{B}}
\newcommand\mC{\mathbf{C}}
\newcommand\mD{\mathbf{D}}
\newcommand\mM{\mathbf{M}}
\newcommand\mN{\mathbf{N}}
\newcommand\mI{\mathbf{I}}
\newcommand\mJ{\mathbf{J}}
\newcommand\mK{\mathbf{K}}
\newcommand\mQ{\mathbf{Q}}
\newcommand\mP{\mathbf{P}}
\newcommand\mV{\mathbf{V}}
\newcommand\mU{\mathbf{U}}
\newcommand\mSigma{\boldsymbol{\Sigma}}
\newcommand\vpi{\boldsymbol{\pi}}
\title{Efficiency Ordering of Stochastic Gradient Descent}
\author{%
  Jie Hu$^1$\thanks{Equal contributors.\vspace{-5mm}} ~~~~~~~ Vishwaraj Doshi$^2$$^*$ ~~~~~~~ Do Young Eun$^1$\\
  $^1$ Department of Electrical and Computer Engineering, North Carolina State University\\
  $^2$ Data Science and Advanced Analytics, IQVIA\\
  \texttt{jhu29@ncsu.edu} ~~~~ \texttt{vishwaraj.doshi@iqvia.com} ~~~~ \texttt{dyeun@ncsu.edu} \\
}
\begin{document}

\maketitle

\begin{abstract}
We consider the stochastic gradient descent (SGD) algorithm driven by a general stochastic sequence, including \textit{i.i.d} noise and random walk on an arbitrary graph, among others; and analyze it in the asymptotic sense. Specifically, we employ the notion of `efficiency ordering', a well-analyzed tool for comparing the performance of Markov Chain Monte Carlo (MCMC) samplers, for SGD algorithms in the form of Loewner ordering of covariance matrices associated with the scaled iterate errors in the long term. Using this ordering, we show that input sequences that are more efficient for MCMC sampling also lead to smaller covariance of the errors for SGD algorithms in the limit. This also suggests that an arbitrarily weighted MSE of SGD iterates in the limit becomes smaller when driven by more efficient chains. Our finding is of particular interest in applications such as decentralized optimization and swarm learning, where SGD is implemented in a random walk fashion on the underlying communication graph for cost issues and/or data privacy. We demonstrate how certain non-Markovian processes, for which typical mixing-time based non-asymptotic bounds are intractable, can outperform their Markovian counterparts in the sense of efficiency ordering for SGD. We show the utility of our method by applying it to gradient descent with shuffling and mini-batch gradient descent, reaffirming key results from existing literature under a unified framework. Empirically, we also observe efficiency ordering for variants of SGD such as accelerated SGD and Adam, open up the possibility of extending our notion of efficiency ordering to a broader family of stochastic optimization algorithms.
\end{abstract}

\section{Introduction}
\vspace{-1mm}
Stochastic gradient descent (SGD) is widely used in machine learning, signal processing and other engineering fields to solve the optimization problem
\begin{equation}\label{eqn:objective_function}
    \theta^* = \arg\min_{\theta \in \Theta} \left\{f(\theta) \triangleq \frac{1}{n}\sum_{i=1}^n F(\theta,i)\right\},
\end{equation} 
where $\Theta \subset \R^d$ is some closed and convex set, and $F(\cdot,i):\R^d \to \R$ for $i \in [n]\triangleq \{1,\cdots,n\}$ are smooth functions on $\Theta$, not necessarily convex, such that their summation $f:\R^d \to \R$ exhibits a minimizer $\theta^* \in \Theta$ satisfying $\nabla \! f(\theta^*) \!=\! 0$. The update rule of the iterative SGD scheme is of the form
\begin{equation}\label{eqn:update_rule}
    \theta_{t+1} = \text{Proj}_{\Theta}\left(\theta_t -\gamma_{t+1} \nabla_\theta F\left(\theta_t, X_{t+1}\right)\right), 
\end{equation}
where $\gamma_t$ is the step size that can be constant or diminishing as $t\!\to\!\infty$, $\text{Proj}_{\Theta}$ is a projection operator onto the constraint set $\Theta$, and $\{X_t\}_{t \geq 0}$ is some sequence taking values in $[n]$. This sequence is often generated in a stochastic manner, and samples can be drawn from temporally independent and identically distributed \textit{(i.i.d)} random variables that are either uniformly distributed over $[n]$ \citep{robbins1951stochastic,nemirovskij1983problem,bottou2010large}, or leverage importance sampling techniques for variance reduction \citep{namkoong2017adaptive,borsos2019online,el2020adaptive}. $\{X_t\}_{t\geq 0}$ can also be constructed by repeatedly shuffling over all possible states without repetition,\footnote{One complete pass over the entire set $[n]$ is typically called an epoch. Shuffling can refer to passing over $[n]$ in the same order for every epoch (single shuffling), or in a random order (random shuffling).} leading to faster convergence than stochastic counterparts drawing \textit{i.i.d} samples from $[n]$ \citep{safran2020good,ahn2020sgdshuffling,gurbuzbalaban2021random,yun21can}.

\textbf{Random Walk Stochastic Gradient Descent (RWSGD):}
Some applications observe restricted access to the state space, such as decentralized optimization  \citep{scaman2017optimal,xin2020general,mao2020walkman}, where communication occurs between nodes in a network to collaboratively solve the optimization problem \eqref{eqn:objective_function}. For instance, disease classification in confidential clinical swarm learning \citep{warnat2021swarm} considers peer-to-peer networks due to the highly private nature of medical data. In such a setting, the random sequence $\{X_t\}_{t\geq 0}$ is usually realized as a Markov chain on a general graph $\cG(\cV,\cE)$ that only samples local gradients of the nodes in $\cV \triangleq [n]$ and traverses the network via edges connecting them without divulging the update history or its own gradient. The randomness of the communication path ensures that the compromised node can not easily leak the data of its neighbors \citep{mao2020walkman}. 

Apart from the privacy concern, such dynamics are also employed in swarm learning/optimization in robotics \citep{brambilla2013swarm} and wireless sensor networks \citep{lesser2003distributed} due to their low communication cost and asynchronous nature. The need for data privacy and demand for communication-efficient algorithms for decentralized optimization has spurred the study of RWSGD algorithms in recent years \citep{ram2009incremental,johansson2010randomized,sun2018markov}, with the underlying Markov chain in the form of Metropolis-Hasting random walk (MHRW) \citep{metropolis1953equation}.

\textbf{Common analytical approach - Finite time bounds based on mixing time:} Most of the existing works analyzing iteration \eqref{eqn:update_rule} provide so-called finite-time upper bounds on expected error in either the objective function $\E[f(\tilde \theta_t)-f(\theta^*)]$, 
where $\tilde \theta_t$ is some weighted average of the iterates, 
or its gradient $\E[|\!|\nabla f(\theta_t)|\!|_2^2]$; and are used to infer the convergence rate of the iterate sequence \citep{ram2009incremental,duchi2012ergodic,sun2018markov}.
For diminishing step sizes $\gamma_t = t^{-\alpha}$ with $\alpha \in (0.5,1)$,\footnote{We only need the step size to be $O(t^{-\alpha})$, but we omit the $O(\cdot)$ notation for simplicity. We also consider a slightly more general case, allowing for $\alpha = 1$ as well.} the upper bound on $\E[ \| \nabla f(\theta_t) \|_2^2]$ reads as 
\begin{equation}\label{eqn:slem_bound}
    \E[ \| \nabla f(\theta_t) \|_2^2] \leq O\left(\frac{\max\left\{M,1/\log(1/\beta)\right\}}{t^{1-\alpha}}\right),
\end{equation}
and a similar form for  $\E[f(\tilde \theta_t)-f(\theta^*)]$ as well \citep{sun2018markov,doan2020finite,ayache2021private}. Here, $\beta \in (0,1)$ is the second largest eigenvalue modulus (SLEM) of the underlying Markov chain's transition matrix and is related to its mixing time property, since smaller SLEM leads to faster mixing of the Markov chain \citep{bremaud2013markov,levin2017markov}. On the other hand, $M>0$ is usually a quantity proportional to the local gradients evaluated at the minimizer, or their upper bound. Both the gradient information and the mixing time play a key role in quantifying the convergence rate derived from this upper bound, and the mixing time is especially important since it hints that convergence rate of the SGD algorithm can potentially be accelerated using faster mixing Markov chains for the input driving sequence. It has also been noted that the inherent correlation of the underlying random walk has to be addressed in any analysis concerning Markov-chain-driven gradient descent \citep{sun2018markov}. The mixing time technique, by capturing the rate at which the chain converges to its stationary distribution \citep{bremaud2013markov,levin2017markov}, is one way of doing so. 

\textbf{Alternative approach - Asymptotic analysis and efficiency ordering:} In addition to the aforementioned mixing time, another widely used metric for characterizing the second order properties of Markov chains is the asymptotic variance (AV). For any scalar valued function $g:[n]\to\R$, the estimator $\hat \mu_t(g) \triangleq \frac{1}{t}\sum_{i=1}^t g(X_i)$, associated with an irreducible Markov chain $\{X_t\}_{t\geq0}$ with stationary distribution $\vpi$, is the average of the samples of $g(\cdot)$ obtained along the chain's sample path up to time $t > 0$. The AV of the Markov chain, denoted by $\sigma^2_X(g)$, is then defined as the the limiting variance of the estimator; that is, 
\begin{equation}\label{eqn:AV scalar}
    \sigma^2_X(g) \triangleq \lim_{t\to\infty}t\cdot\text{Var}(\hat \mu_t(g)).
\end{equation}
For all functions $g(\cdot)$ satisfying $\E_{\vpi}(g^2) < \infty$, the AV is associated with the Central Limit Theorem (CLT) for any Markovian kernel on a finite state space, as the variance of the normally distributed estimates in the limit \citep{roberts2004general,jones2004markov,bremaud2013markov}. More formally, we have
\begin{equation}\label{eqn:CLT scalar mcmc}
    \sqrt{t}\cdot\left[\hat \mu_t (g) - \E_{\vpi}(g)\right] \xrightarrow[t \to \infty]{dist} \cN(0,\sigma^2_X(g)).
\end{equation}
A smaller AV means that fewer samples are required \textit{post} mixing of the chain\footnote{Achieved by employing a burn-in period to get rid of the correlation with the initial state \citep{gjoka2011practical}.} in order to obtain a desired accuracy - in some sense quantifying the chain's \textit{efficiency}.

Both the AV and the mixing time of a Markov chain are very strongly related concepts\footnote{For reversible Markov chains, the AV can be written explicitly as an increasing function of \emph{every} eigenvalue of the transition matrix \citep{ bremaud2013markov}, while the mixing time is related to the SLEM as mentioned earlier.}. In fact, the AV has an upper bound in terms of the SLEM, which decreases as the SLEM gets smaller (chain mixes faster) \citep{mira2001ordering}. However, an ordering of the SLEM between two Markov chains does not imply an ordering of their AV, as we shall demonstrate later in Section \ref{section:application} for a special case. Both of these second-order properties therefore lead to different notions of optimality; and the comparison of two chains based on their AV leads to the concept of \textit{efficiency ordering} \citep{mira2001ordering}, where we say that a chain is more efficient than the other if it has a smaller AV, uniformly over all functions $g:[n]\to \R$.

\begin{wrapfigure}{r}{0.46\textwidth}
  \begin{center} \vspace{-7mm}
  \includegraphics[width = 0.46\textwidth]{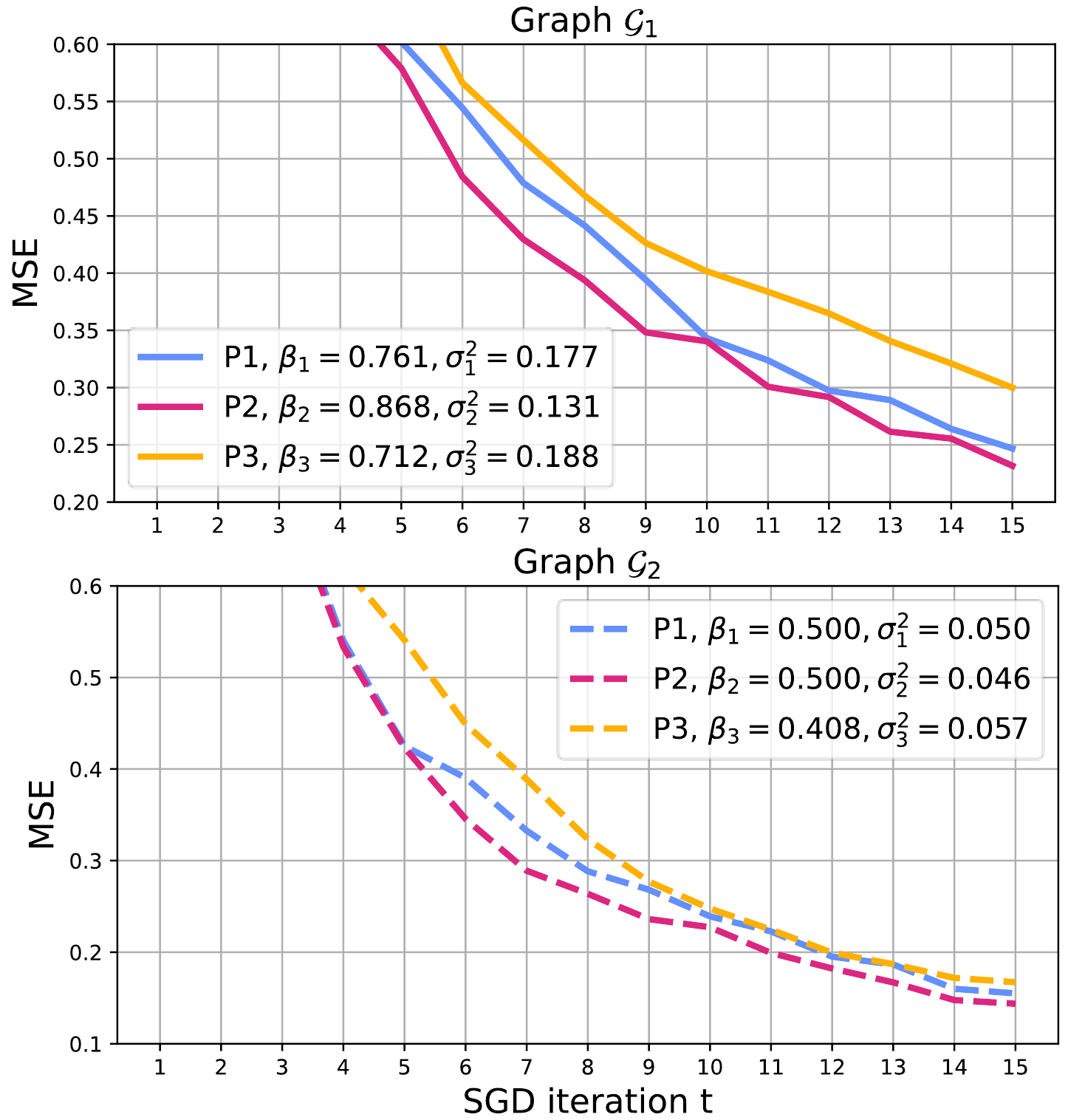}\vspace{-2mm}
  \end{center}
  \caption{Comparison of MHRW (P1), Modified-MHRW (P2) and FMMC (P3) as stochastic inputs for RWSGD on two different graphs $\cG_1$ and $\cG_2$.}\vspace{-3mm}\label{fig:fmmc}
\end{wrapfigure}
As mentioned earlier, the common intuition asserted by finite time bounds such as \eqref{eqn:slem_bound} is that Markov chains with smaller SLEM lead to faster convergence of the SGD iteration \eqref{eqn:update_rule} to the minimizer \citep{sun2018markov,ayache2021private}. We put this logic to test by simulating the RWSGD algorithm with three different reversible Markov chains (w.r.t. uniform stationary distribution) as the stochastic inputs - the MHRW, a modification of MHRW, which is also shown in Appendix \ref{app:simulation} to be more \textit{efficient} than MHRW, and the so-called 'fastest mixing Markov chain' (FMMC) as defined in \citep{boyd2004fastest} as the Markov chain obtained by minimizing the SLEM over the entire class of reversible chains for a given graph topology. We employ RWSGD to minimize a quadratic objective function for two underlying graphs. The exact details of the setup are deferred to Appendix \ref{app:simulation}, and our numerical results in Figure \ref{fig:fmmc} show that even though the FMMC is theoretically guaranteed to have the smallest SLEM ($\beta_i$ for $i \!\in\! \{1,2,3\}$) of the three reversible chains simulated, it is the worst performing one with largest mean square error (MSE). Although MHRW and Modified-MHRW share the same SLEM in the lower plot of Figure \ref{fig:fmmc}, they still have performance differences. This is contradictory to the intuition derived from \eqref{eqn:slem_bound}, and could be attributed to the finite time results providing upper bounds for \textit{all} times $t>0$, which may therefore not necessarily be tight. On the other hand, the performance of the chains seem to be ordered according to their AV ($\sigma_i^2$ for $i \!\in\! \{1,2,3\}$) evaluated for a test function. This lends credence to developing techniques based on AV, for judging the performance of different stochastic inputs for SGD, as possible alternatives to using SLEM as the sole performance metric.

The asymptotic variance also appears in the CLT for stochastic approximation (SA) algorithms \citep{benveniste2012adaptive,chen2020explicit}, though this time not directly as the variance in the limit, but as a component of the limiting covariance matrix of the scaled iterate errors. Recent works \citep{chen2020explicit, mou2020linear} point out that the covariance matrix itself is of special interest, and typically contains more information than the non-asymptotic MSE bounds \citep{mou2020linear}. In the sense of SGD algorithms, we will show in Section \ref{Section:Results random walk} that it embeds explicit information of the exact vector-valued gradient evaluated at the optimizer as well as the entire spectrum of the transition matrix; as opposed to the upper bound $M$ of the gradients and only the second largest eigenvalue modulus commonly found in mixing time based non-asymptotic bounds. It has been suggested \citep{chen2021accelerating,devraj2021qlearning}, and also proved for the special case of linear SA \citep{chen2020explicit}, that the covariance matrix emerging out of the CLT dominates as a leading term of the finite-time MSE bounds. This also holds true for finite-time bounds on weighted MSE for any preferred weight; the weighted MSE being utilized in fields such as wireless MIMO \citep{wang2013wireless} and process optimization \citep{gomes2013weighted}. Overall, while finite-time bounds have enjoyed great success in the literature, the potential for performance gains out of the asymptotic analysis of SGD algorithms have remained largely unexplored.

\textbf{Contributions:} We employ asymptotic analysis to propose a general framework that offers seamless connection between AV in the MCMC literature with efficiency ordering and covariance matrix in the SGD algorithms. Our framework can be used to design different random walk variants and also to systematically compare the existing sampling methods in the SGD iteration \eqref{eqn:update_rule} with diminishing step size, not just limited to random walks. In particular, we show that any two random walks following an efficiency ordering have their covariance matrices Loewner ordered, including \textit{non-Markovian} stochastic processes versus its Markovian counterpart, which defies any mixing-time (SLEM) based analysis.  Such ordering can be harnessed into improving the accuracy of SGD iterates, which implies a reduction in the weighted MSE with arbitrary weights. Moreover, via a specific augmentation of the state space, we are able to analyze SGD for both single and random shuffling and show the efficiency of shuffling over \textit{i.i.d} sampling for a set of objective functions that may not satisfy `Polyak-Łojasiewicz inequality'. We further extend such comparison to mini-batch SGD algorithms. Lastly, we present numerical results where the efficiency ordering via asymptotic analysis tends to hold over all time periods and input sequences with higher efficiency have smaller errors in SGD.

\section{Modeling setup} \label{Section:Model description}

\textbf{Basic notations:}
We use lower case, bold-faced letters to denote vectors ($\vv \in \R^d$), and use upper case, bold-faced letters to denote matrices ($\mM \in \R^{d\times d}$). $\|\cdot\|_2$ denotes the $l^2$ norm for vectors or $2$-norm for matrices. We use $\nabla \vf(\cdot)$ as Jacobian matrix of vector-valued function $\vf(\cdot)$, and $\nabla^2 g(\cdot)$ as Hessian matrix of scalar-valued function $g(\cdot)$. We let $\nabla_{\theta} g(\theta,X)$ be the gradient of the scalar-valued function $g(\theta,X)$ with respect to $\theta$ and omit the subscript $\theta$ for simplicity. Loewner ordering of matrices is denote by `$\leq_L$' such that $\mA \leq_L \mB \Longleftrightarrow \vx^T(\mA-\mB)\vx \leq 0$ for any $\vx \in \R^d$. The term $Tr(\mA)$ denotes the trace of matrix $\mA$, and let $\mathbbm{1}_{\{\cdot\}}$ be the indicator function. We write $\cN(0,\mV)$ to represent a multivariate Gaussian distribution with zero mean and covariance matrix $\mV$. For a connected and undirected graph $\cG(\cV,\cE)$ with node set $\cV$ and edge set $\cE$, we use $N(i)$ for the set of neighbors of node $i\in\cV$ and $\vd\triangleq [d_1,d_2,\cdots,d_n]^T$ for the degree vector where $d_i=|N(i)|$. 

\textbf{SGD algorithm with arbitrary input sequence:}
We consider random walks $\{X_t\}_{t\geq 0}$ for which the limit $\pi_{i} \triangleq \lim_{t \to \infty} \frac{1}{t}\sum_{k=1}^t \Char_{\{X_k=i\}}$
exists almost surely and is positive for all $i \in [n]$, with $\vpi = [\pi_i]_{i \in [n]}$ denoting the limiting or \textit{stationary} distribution. This is trivially satisfied via strong law of large numbers \citep{durrett2019probability} when $X_t$ for each $t>0$ are \textit{i.i.d} random variables with distribution $\vpi$ over $[n]$, and via the ergodic theorem \citep{bremaud2013markov} when $\{X_t\}_{t \geq 0}$ is an irreducible, aperiodic and positive recurrent (ergodic) Markov chain. Note however that this way of defining the stationary distribution $\vpi$ allows for the input sequence $\{X_t\}_{t\geq 0}$ to be more general, possibly being non-Markov on $[n]$. Then, we can use $\vpi$ to rewrite the objective in \eqref{eqn:objective_function} as 
\begin{equation}\label{eqn:general_objective_function}
    f(\theta) = \frac{1}{n}\!\sum_{i=1}^n F(\theta,i) = \E_{X \sim \vpi}\left[G(\theta,X)\right],
\end{equation}
where function $G(\theta,i) \triangleq \frac{1}{n\pi_i} F(\theta,i)$ for any $\theta\!\in\!\Theta,i \!\in\! [n]$. The generalized update rule then becomes
\vspace{-2mm}
\begin{equation}\label{eqn:update_rule_general}
    \theta_{t+1} = \text{Proj}_{\Theta}\left(\theta_t -\gamma_{t+1} \nabla G\left(\theta_t, X_{t+1}\right)\right).
\end{equation}
This change of notation allows us to consider input sequences having possibly non-uniform stationary distributions, and is a version of importance sampling for RWSGD schemes, as in \citep{ayache2021private}. For example, the iteration \eqref{eqn:update_rule_general} with the input sequence generated from a MHRW with uniform target distribution $\vpi = \vone/n$ will reduce down to \eqref{eqn:update_rule} with $G(\theta,i) = F(\theta,i)$ for all $\theta \in \Theta$, $i\in[n]$. If the input sequence is instead a simple random walk on a connected graph $\cG(\cV,\cE)$ with $\cV=[n]$, we have $\vpi \propto \vd$, and $G(\theta,i) = \frac{\vone^T\vd}{n d_i} F(\theta, i)$ for all $\theta \in \Theta$, $i\in\cV$.\footnote{In practice, knowing $\pi_i$ up to a multiplicative constant is enough to converge to the optimal point.} 

\textbf{Asymptotic covariance matrix.}
We now quickly review the multivariate CLT for Markov chains, since it is a natural way to introduce the \textit{asymptotic covariance} matrix, used heavily throughout the paper. For any finite, irreducible Markov chain $\{X_t\}_{t \geq 0}$ with stationary distribution $\vpi$, its \textit{estimator} is defined as $\hat \mu_t(\vg) \triangleq \frac{1}{t}\sum_{k = 1}^{t} \vg(X_k)$ for any vector-valued function $\vg:[n]\to\R^d$. Then, the ergodic theorem \citep{bremaud2013markov,HandbookMCMC} states that for any initial distribution and any $\vg(\cdot)$ such that $\E_{\vpi}(\vg)\textbf{} = \sum_{i \in [n]} \vg(i) \pi_i < \infty$, we have $\hat \mu_t(\vg) \xrightarrow[t \to \infty]{a.s.} \E_{\vpi}(\vg)$.
Similarly to the asymptotic variance $\sigma^2_X(g)$ for a scalar-valued function $g(\cdot)$, we can also define the \textit{asymptotic covariance} matrix $\mSigma_X(\vg)$ for vector-valued function $\vg(\cdot)$,
\begin{equation}\label{eqn:asymptotic covariance matrix}
    \mSigma_X(\vg) \triangleq \lim_{t\to\infty}t\cdot\text{Var}(\hat \mu_t(\vg)) \\
    = \lim_{t\to\infty}\frac{1}{t}\cdot\E\left\{\Delta_t\Delta_t^T\right\},
\end{equation}
where $\Delta_t \triangleq \sum_{s=1}^t(\vg(X_s) - \E_{\vpi}(\vg))$. The associated multivariate CLT is then given as follows.
\begin{theorem}[Chapter 1 \citep{HandbookMCMC}]\label{Thm:CLT multivariate mcmc}
For any function $\vg:[n]\to\R^d$ that satisfies $\E_{\vpi}(\vg^2)<\infty$, we have
\begin{equation*}
    \pushQED{\qed}\sqrt{t}\cdot\left[\hat \mu_t (\vg) - \E_{\vpi}(\vg)\right] \xrightarrow[t \to \infty]{dist} \cN(0,\mSigma_X(\vg)).\qedhere \popQED
\end{equation*}
\end{theorem}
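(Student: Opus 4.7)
The plan is to reduce the multivariate statement to its scalar analogue in \eqref{eqn:CLT scalar mcmc} via the Cramér--Wold device, and then establish the scalar CLT through a martingale approximation built from the Poisson equation of the chain.

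First I would invoke Cramér--Wold: it suffices to show that for every fixed $\vu \in \R^d$, the scalar sequence $\sqrt{t}\,\vu^T[\hat\mu_t(\vg) - \E_\vpi(\vg)]$ converges in distribution to $\cN(0,\vu^T\mSigma_X(\vg)\vu)$. Setting $h(\cdot) \triangleq \vu^T \vg(\cdot)$, the definition \eqref{eqn:asymptotic covariance matrix} immediately gives $\vu^T \mSigma_X(\vg)\vu = \sigma^2_X(h)$, so the claim collapses to the scalar CLT \eqref{eqn:CLT scalar mcmc} applied to $h:[n]\to \R$ (which inherits $\E_\vpi(h^2)<\infty$ from $\E_\vpi(\vg^2)<\infty$ via Cauchy--Schwarz).

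For the scalar case I would use a martingale approximation. Let $\bar h \triangleq h - \E_\vpi(h)$ and let $\mP$ denote the chain's transition matrix. Since $[n]$ is finite and the chain is irreducible, the fundamental matrix $\mZ \triangleq (\mI - \mP + \vone\vpi^T)^{-1}$ exists and the Poisson equation $(\mI-\mP)\hat h = \bar h$ admits the bounded solution $\hat h \triangleq \mZ\bar h$. Define $M_k \triangleq \hat h(X_k) - (\mP\hat h)(X_{k-1})$ with respect to the natural filtration $\cF_k \triangleq \sigma(X_0,\ldots,X_k)$; by the Markov property $\{M_k\}$ is a bounded martingale difference sequence, and a one-line telescoping yields
\begin{equation*}
  \sum_{k=1}^t \bar h(X_k) \;=\; \sum_{k=1}^t M_k \;+\; (\mP\hat h)(X_0) - (\mP\hat h)(X_t),
\end{equation*}
so the partial sum and the martingale differ by an $O(1)$ term that becomes $o_p(1)$ once divided by $\sqrt{t}$.

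Finally, I would apply a standard martingale CLT (e.g., Hall--Heyde) to $t^{-1/2}\sum_{k=1}^t M_k$. The Lindeberg condition is immediate from uniform boundedness of $M_k$, and the conditional variance $\E[M_k^2\mid \cF_{k-1}] = [\mP\hat h^2 - (\mP\hat h)^2](X_{k-1})$ is a bounded function of $X_{k-1}$ whose Cesàro average converges almost surely by the ergodic theorem. This gives $t^{-1/2}\sum_{k=1}^t M_k \xrightarrow[t\to\infty]{dist} \cN(0,\sigma^2)$ for some $\sigma^2 \geq 0$; combined with the telescoping remainder, the same Gaussian limit holds for $\sqrt{t}[\hat\mu_t(h)-\E_\vpi(h)]$, and $\sigma^2$ must coincide with $\sigma^2_X(h)$ by definition \eqref{eqn:AV scalar}. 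I expect the main obstacle to be the construction and boundedness of the Poisson solution $\hat h$ together with matching the limiting martingale variance to $\sigma^2_X(h)$; both steps are routine on a finite state space but would require additional hypotheses (e.g., geometric ergodicity) in more general state-space settings.
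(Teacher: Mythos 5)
The paper offers no proof of this statement at all: Theorem~2.1 is imported verbatim as a citation from Chapter~1 of the Handbook of MCMC, so there is no internal argument to diverge from. Your proposal supplies a complete and correct proof, and it is in fact the classical route for this exact result (the Gordin--Lif\v{s}ic martingale-approximation proof, which is also the one sketched in the cited Handbook chapter): Cram\'er--Wold reduces to the scalar case, and your identity $\vu^T\mSigma_X(\vg)\vu=\sigma^2_X(\vu^T\vg)$ is precisely the computation the paper itself performs later in \eqref{eqn:direction_co_matrix} when proving Theorem~\ref{Thm:main_result}(i); the Poisson solution $\hat h=(\mI-\mP+\vone\vpi^T)^{-1}\bar h$, the telescoping decomposition, and the martingale CLT with conditional variance $[\mP\hat h^2-(\mP\hat h)^2](X_{k-1})$ are all sound on a finite state space. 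A welcome feature of your argument is that it needs only irreducibility, not aperiodicity -- the fundamental matrix exists for periodic chains too (the paper proves exactly this in Lemma~\ref{lemma:periodic_chain_in_CLT} for its shuffling analysis), and the ergodic theorem for time averages you invoke does not require aperiodicity, so your proof covers every chain the paper later feeds into this CLT. The one step stated too quickly is the final identification ``$\sigma^2$ must coincide with $\sigma^2_X(h)$ by definition'': convergence in distribution alone does not transfer second moments, so the definition \eqref{eqn:AV scalar} is not automatically matched by the Gaussian limit. In this bounded setting the patch is immediate and worth making explicit: writing $S_t=\sum_{k=1}^t M_k+R_t$ with $|R_t|$ uniformly bounded and using orthogonality of the martingale increments,
\begin{equation*}
\frac{1}{t}\,\E\!\left[S_t^2\right]=\frac{1}{t}\sum_{k=1}^t\E\!\left[M_k^2\right]+O\!\left(t^{-1/2}\right)\xrightarrow[t\to\infty]{}\E_{\vpi}\!\left[\mP\hat h^2-(\mP\hat h)^2\right]=\sigma^2,
\end{equation*}
which simultaneously shows that the defining limit in \eqref{eqn:AV scalar} exists and equals the variance in your martingale CLT (alternatively, uniform integrability of $S_t^2/t$ follows from a Burkholder/Rosenthal fourth-moment bound for bounded martingale differences). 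With that one sentence added, your proof is a self-contained replacement for the paper's black-box citation.
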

In the next section, we will show how the the asymptotic covariance matrix $\mSigma_X(\cdot)$ also appears as part of the CLT result for SGD algorithms.

\section{Efficiency Ordering of SGD Algorithms}\label{Section:Results random walk}
In this section, we present our main result concerning the performance comparison of different SGD algorithms to solve \eqref{eqn:objective_function}. We first begin by stating our assumptions on the objective function and the stochastic input sequence, providing a CLT result for SGD algorithms, and analyzing the covariance matrix arising therein. We then introduce the notion of \textit{efficiency ordering} of Markov chains in the context of MCMC sampling, and form the connection with covariance matrices as our main result in Theorem \ref{Thm:main_result}. 

For the rest of this section we assume that the functions $F(\cdot,i)$ (possibly non-convex), the summands of the objective function in \eqref{eqn:objective_function}, and the input process $\{X_t\}_{t\geq 0}$ for the SGD iteration \eqref{eqn:update_rule_general} satisfy:

\begin{enumerate}
    \item[(A1)] The step size is given by $\gamma_t = t^{-\alpha}$ for $\alpha \in (1/2,1]$;
    
    \item[(A2)] There exists a unique minimizer $\theta^*$ in the interior of the compact set $\Theta$ with $\nabla f(\theta^*) = 0$, and matrix $\nabla^2 f(\theta^*)$ (resp.\! $\nabla^2 f(\theta^*)-\mI/2$) is positive definite for $\alpha \!\in\! (1/2,1)$ (resp.\! $\alpha \!=\! 1$);
    
    \item[(A3)] Gradients are bounded in the compact set $\Theta$, that is, $~\sup_{\theta \in \Theta}\sup_{i \in [n]} |\!|\nabla F(\theta,i)|\!|_2 < \infty$;
    
    \item[(A4)] For every $z\in[n],\theta\in\R^d$, the solution $\Tilde{F}(\theta,z)\in\R^d$  of the Poisson equation $\Tilde{F}(\theta,z) \!-\! \E[\Tilde{F}(\theta,X_{t+1})~|~X_t=z] = \nabla F(\theta, z) \!-\! \nabla f(\theta)$ exists, and $\sup_{\theta\in\Theta,z\in[n]} \|\Tilde{F}(\theta,z)\|_2<\infty$;
    
    \item[(A5)] The functions $F(\theta,i)$ are $L$-smooth for all $i \in [n]$, that is, $\forall \theta_1,\theta_2 \in \Theta, \forall i \in [n]$, we have $\|\nabla F(\theta_1, i) - \nabla F(\theta_2, i)|\!|_2 \leq L |\!|\theta_1 - \theta_2 \|_2$.
\end{enumerate}

We then have the following CLT result for SGD algorithms.

\begin{lemma} \label{Lemma:CLT_SGD}
For iterates $\{\theta_t\}_{t \geq 0}$ of the SGD algorithm \eqref{eqn:update_rule_general} satisfying (A1)--(A5), we have
\begin{equation}\label{eqn:error}
    \theta_t \xrightarrow[t \to \infty]{a.s.} \theta^*,~~~\text{and}~~~\left(\theta_t - \theta^*\right)/\sqrt{\gamma_t} \xrightarrow[t \to \infty]{Dist} \cN(0, \mV_{\!\!X}),
\end{equation}
where covariance matrix $\mV_{\!\!X}$ is the unique solution to the Lyapunov equation $\mSigma_X \!+\! \mK\mV_{\!\!X} \!+\! \mV_{\!\!X}\mK^T\!=\! \vzero$ when $\alpha \!\in\! (0.5,1)$ (resp. $\mSigma_X \!+\! \left(\mK\!+\!\frac{\mI}{2}\right)\!\!\mV_{\!\!X} \!+\! \mV_{\!\!X}\!\!\left(\mK\!+\!\frac{\mI}{2}\right)^T \!=\! \vzero)$ when $\alpha \!=\! 1$). Here, $\mSigma_X \!\triangleq\ \mSigma_X(\nabla G(\theta^*,\cdot))$ is the asymptotic covariance matrix\footnote{We slightly abuse the notation and shorten $\mSigma_X(\nabla G(\theta^*,\cdot))$, that is, the asymptotic covariance matrix evaluated at $\nabla G(\theta^*,\cdot)$), to $\mSigma_X$ for better readability. In this paper, $\mSigma_X(\nabla G(\theta^*,\cdot))$ and $\mSigma_X$ are equivalent.} as in \eqref{eqn:asymptotic covariance matrix}, and $\mK \triangleq \nabla^2 f(\theta^*)$.

Additionally, for the averaged iterates $\{ \bar \theta_t\}_{t\geq0}$ where $\bar{\theta}_t \triangleq \frac{1}{t}\sum_{i=0}^{t-1}\theta_t$, we have
\begin{equation}\label{eqn:error_pr_averaging}
    \bar{\theta}_t \xrightarrow[t \to \infty]{a.s.} \theta^*,~~~\text{and}~~~\sqrt{t}(\bar{\theta}_t-\theta^*) \xrightarrow[t \to \infty]{Dist} \cN(0, \mV'_{\!\!X}),
\end{equation}
where $\mV'_{\!\!X} = \mK^{-1}\mSigma_X(\mK^{-1})^T$ with the same matrices $\mK$ and $\mSigma_X$ as in the non-averaged case. \qed
\end{lemma}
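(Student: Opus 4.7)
The plan is to recognize the iteration \eqref{eqn:update_rule_general} as a stochastic approximation (SA) scheme driven by a Markov-type noise, and to invoke the classical SA CLT machinery (e.g., Benveniste-M\'etivier-Priouret or Kushner-Yin). Assumption (A2) places $\theta^*$ in the interior of $\Theta$, so once almost-sure convergence $\theta_t \to \theta^*$ is in hand, the projection $\text{Proj}_\Theta$ is inactive for all large $t$ and can be dropped in the CLT analysis.

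I would first cast the recursion in the canonical SA form $\theta_{t+1} = \theta_t - \gamma_{t+1}\bigl[\nabla f(\theta_t) + \xi_{t+1}\bigr]$ with noise $\xi_{t+1} \triangleq \nabla G(\theta_t, X_{t+1}) - \nabla f(\theta_t)$. Because $\{X_t\}$ is in general correlated, $\xi_{t+1}$ is not a martingale difference, and I would use (A4) to perform the Poisson-equation decomposition $\xi_{t+1} = D_{t+1} + R_{t+1}$, where $D_{t+1} = \widetilde F(\theta_t, X_{t+1}) - (P\widetilde F)(\theta_t, X_t)$ is a martingale difference with respect to $\cF_t = \sigma(X_0,\dots,X_t)$ and $R_{t+1}$ is a telescoping/drift-coupling remainder that is summable after scaling by $\gamma_t$, thanks to the uniform bound on $\widetilde F$ in (A4), the gradient bound (A3) and the $L$-smoothness (A5). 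Almost-sure convergence $\theta_t \to \theta^*$ then follows by the standard ODE method (or Robbins-Siegmund) applied to the mean-field ODE $\dot\theta = -\nabla f(\theta)$, whose unique global attractor on the compact $\Theta$ is $\theta^*$ by (A2).

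For the non-averaged CLT I would linearize $\nabla f(\theta_t) = \mK(\theta_t - \theta^*) + o(\|\theta_t - \theta^*\|_2)$ and study the rescaled error $z_t \triangleq (\theta_t - \theta^*)/\sqrt{\gamma_t}$. A short calculation gives, to leading order, a linear noisy recursion of the form $z_{t+1} \approx (\mI - \gamma_{t+1}\mK) z_t - \sqrt{\gamma_{t+1}}\, D_{t+1}$, with an extra drift of order $\gamma_{t+1}/2$ appearing when $\alpha = 1$ because of the rescaling mismatch $\sqrt{\gamma_{t+1}/\gamma_t} = 1 - \gamma_{t+1}/2 + o(\gamma_t)$. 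Matching the covariance of $z_{t+1}$ with that of $z_t$ in the stationary limit produces the stated discrete Lyapunov recursion, whose continuous limit is exactly the equation in the lemma (with the $\mI/2$ shift appearing precisely when $\alpha=1$); the driving covariance equals $\mSigma_X = \mSigma_X(\nabla G(\theta^*,\cdot))$ via the classical Poisson representation $\mSigma_X = \E_\vpi[D_1 D_1^T]$. The positive-definiteness hypothesis in (A2) then guarantees a unique positive-definite solution $\mV_{\!\!X}$ and tightness of $z_t$, after which the martingale CLT applied to the noise term delivers the Gaussian limit.

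For the averaged iterates I would appeal to the Polyak-Juditsky averaging theorem: for SA schemes with step $t^{-\alpha}$, $\alpha\in(1/2,1)$, averaging restores the optimal $\sqrt t$-rate and asymptotically removes $\mK$ from the scaling, yielding $\sqrt t(\bar\theta_t - \theta^*) \Rightarrow \cN(0, \mK^{-1}\mSigma_X \mK^{-T})$. The principal technical obstacle throughout is the uniform-in-$\theta$ control of $\widetilde F$ needed to make the remainder $R_{t+1}$ asymptotically negligible and to identify the limiting quadratic variation of the martingale part with $\mSigma_X$; for genuinely non-Markovian inputs (a central motivation of this paper) the Poisson equation in (A4) must be reinterpreted in a functional/operator sense, and verifying (A4) in that broader setting is the delicate step. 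Handling the projection before a.s.\ convergence takes over, and the borderline $\alpha=1$ case where the rescaling contributes at the same order as the drift, account for the remaining bookkeeping.
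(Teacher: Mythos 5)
Your proposal is correct in outline and, at the top level, follows the same route as the paper: Appendix \ref{app: proof of clt lemma} likewise treats \eqref{eqn:update_rule_general} as a projected stochastic approximation scheme and invokes the classical SA CLT (Theorem \ref{Thm:CLT_SA}, drawn from Benveniste--M\'etivier--Priouret, Delyon, and Fort). The difference lies in where the work is done. The paper's proof consists entirely of verifying the cited theorem's hypotheses (B1)--(B4): it sets $H(\theta,X)=-\nabla F(\theta,X)$, takes $V=f$ and $\mathcal{K}=\{\theta^*\}$ for the Lyapunov condition, checks the step-size conditions via $\gamma_t-\gamma_{t+1}\leq t^{-2\alpha}$, and---the only substantial step---constructs the Poisson-equation solution in closed form, $\Tilde{F}(\theta,z)=\sum_{l}\left[\left(\mI-\mP+\vone\vpi^T\right)^{-1}\right]_{z,l}\nabla F(\theta,l)-\nabla f(\theta)$ as in \eqref{eqn:solution_poisson_equation}, in order to establish the regularity condition \eqref{eqn:l-lipschitz}, namely $\sup_{z}\|\mP\Tilde{F}(\theta,z)-\mP\Tilde{F}(\theta',z)\|_2\leq (C+1)L\|\theta-\theta'\|_2$, from (A5). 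You instead re-derive the internals of the black box (Poisson/martingale decomposition, the rescaled recursion with the $\gamma_{t+1}/2$ drift at $\alpha=1$, Lyapunov matching, Polyak--Juditsky averaging); these sketches are standard and sound---your expansion $\sqrt{\gamma_{t+1}/\gamma_t}=1-\gamma_{t+1}/2+o(\gamma_t)$ at $\alpha=1$ and the identification $\mSigma_X=\E_{\vpi}[D_1D_1^T]$ are exactly right---but they duplicate the cited literature while leaving implicit the one verification the paper actually performs: your remainder $R_{t+1}$ telescopes only up to increments of the form $(P\widetilde{F})(\theta_{t+1},\cdot)-(P\widetilde{F})(\theta_t,\cdot)$, and controlling these requires precisely the Lipschitz continuity of $\theta\mapsto P\widetilde{F}(\theta,\cdot)$, a property not contained in the boundedness statement of (A4) and which the paper must derive from (A5) through the fundamental matrix $(\mI-\mP+\vone\vpi^T)^{-1}$. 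What each approach buys: the paper's hypothesis-checking is economical and fully rigorous by citation, while your derivation is self-contained and makes transparent where $\mSigma_X$ and the $\mI/2$ shift originate, at the cost of being incomplete as written without that regularity step. One last point of emphasis: your closing concern about reinterpreting (A4) ``in a functional/operator sense'' for non-Markovian inputs is not part of this lemma's proof in the paper---there the input is a $\theta$-independent ergodic Markov chain on $[n]$ (see the footnote in Appendix \ref{app: proof of clt lemma}), and the non-Markovian cases (NBRW, shuffling) are handled later by state-space augmentation so that Lemma \ref{Lemma:CLT_SGD} is applied to the augmented chain, not by generalizing (A4) itself.
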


\begin{remark} 
Lemma \ref{Lemma:CLT_SGD} is itself a special case of the more general CLT result for SA algorithms provided in Appendix \ref{Appendix:CLT_assumptions}, and as proved in Appendix \ref{app: proof of clt lemma}. \qed
\end{remark}

\begin{remark}
While (A2) may appear to be too strict at first, it can be relaxed to the setting of the objective function $f(\cdot)$ having multiple minimizers, by leveraging more general CLT results from SA literature, such as Theorem 2.1 in \citep{fort2015central}. However, this comes at a cost of cumbersome notation, requiring conditioning of iterates converging to one of the minimizers, potentially making the mathematical parts harder to follow. We also show in Appendix \ref{app:discussion_pl_pdm} that (A2) is no stricter than the Polyak-Łojasiewicz inequality -- a popularly adopted weak assumption in recent SGD literature studying non-convex objective functions \citep{karimi2016linear,mertikopoulos2020almost,wojtowytsch2021stochastic,yun21can}. \qed
\end{remark}

\begin{remark}
Assumptions (A3) and (A5) are widely seen in the RWSGD literature \citep{ram2009incremental,johansson2010randomized,sun2018markov}, while (A4) is automatically satisfied for any ergodic Markov chain (see \citep{meyn2012markov,chen2020explicit} for details), a common assumption for the stochastic noise sequence \citep{johansson2010randomized,duchi2012ergodic,ayache2021private}. The compactness in (A3) can also be relaxed, given assumptions on the objective function in \citep{karmakar2021stochastic}, such that the estimator $\theta_t$ generated by Markov-driven sequences can still be ‘locked in’ a compact set after a sufficiently long time.
\qed
\end{remark}

Lemma \ref{Lemma:CLT_SGD} implicitly indicates that the asymptotic convergence rate (in distribution) for $\theta_t - \theta^*$ (resp. $\bar \theta_t - \theta^*$) is $O(\sqrt{\gamma_t})$ (resp. $O(1/\sqrt{t})$). While this does not necessarily translate to $O(\sqrt{\gamma_t})$ convergence rate for $\E[\|\theta_t - \theta^*\|_2]$ ($O(1/\sqrt{t})$ for $\E[\|\bar \theta_t - \theta^*\|_2]$), it has been suggested \citep{chen2021accelerating,devraj2021qlearning}, and is in fact true for cases such as quadratic objective functions since they satisfy the linear stochastic approximation in \citep{chen2020explicit}, which is of the form
\begin{equation}\label{eqn:linear_SA}
    \theta_{t+1} = \theta_t - \gamma_{t+1}(\mA\theta_t - \vb(X_{t+1})),
\end{equation}
for which the connection between finite-time MSE and covariance matrix $\mV_{\!\!X}$ has been established \citep{chen2020explicit}. This is also true for arbitrarily weighted MSE, which can be obtained as a weighted sum of diagonal entries of the covariance matrices $\mV_{\!\!X}$ and $\mV'_{\!\!X}$.

In addition to the apparent connection to MSE, the covariance matrix plays a wider role in SGD performance. Given any vector of weights $\vw\in\R^d$, from Lemma \ref{Lemma:CLT_SGD} we also have that the weighted sum of errors $\vw^T(\theta_t-\theta^*)$ converges to zero almost surely, and that $\vw^T(\theta_t-\theta^*)/\sqrt{\gamma_t} \xrightarrow[t\to \infty]{Dist} \cN(0, \vw^T\mV_{\!\!X}\vw)$. This means that, for sufficiently large $t$, we can estimate\vspace{-1mm}
\begin{equation*}
    P\left(\frac{\vw^T(\theta_t-\theta^*)}{\sqrt{\gamma_t\vw^T\mV_{\!\!X}\vw}}>\alpha\right) \approx \frac{1}{2\pi}\int_{\alpha}^{\infty} e^{-x^2/2}dx,
\end{equation*}
such that, for instance, the $95\%$ confidence interval for $\vw^T\theta_t$ is approximately $\vw^T\theta^*  \pm  2\sqrt{\gamma_t\vw^T\mV_{\!\!X}\vw}$. In other words, smaller $\vw^T\mV_{\!\!X}\vw$ leads to narrower confidence interval and higher accuracy. The form $\vw^T\mV_{\!\!X}\vw$ for any vector $\vw\in\R^d$ naturally implies that Loewner ordering should come into play when concerning the performance of SGD algorithms.

To proceed, we first employ the widely used notion of \textit{efficiency ordering} of Markov chains. The efficiency of different chains is compared by ordering them using their respective AV as follows.

\begin{definition}[\textbf{Efficiency Ordering} \citep{mira2001ordering}]\label{def:efficiency_order}
For two random walks $\{X_t\}_{t\geq 0}$ and $\{Y_t\}_{t\geq 0}$ with the same stationary distribution $\vpi$, we say $\{X_t\}_{t\geq 0}$ is more \textit{efficient} than $\{Y_t\}_{t\geq 0}$, which we write as $X \geq_E Y$, if and only if $\sigma^2_X(g) \leq \sigma^2_Y(g)$ for any $g:[n] \to \R$.\qed
\end{definition}

We are now ready to state our main result. We first extend the efficiency ordering of Markov chains by proving the equivalence of comparing their scalar-valued AVs, to comparing their asymptotic covariance matrices via Loewner ordering. We then use this extension to show that more efficient inputs $\{X_t\}_{t\geq0}$ (as in Definition \ref{def:efficiency_order}) to the SGD algorithm lead to performance improvements in the form of smaller covariance matrices in the Loewner ordering sense.

\begin{theorem}\label{Thm:main_result}
Consider the SGD iteration \eqref{eqn:update_rule_general} with two random walks $\{X_t\}_{t \geq 0}$ and $\{Y_t\}_{t \geq 0}$ as input sequences, with the same stationary distribution $\vpi$, satisfying (A1)--(A5). Then,
\begin{itemize}
    \item[(i)] $X \!\geq_E\! Y$ if and only if $\mSigma_X(\vg) \!\leq_L\! \mSigma_Y(\vg)$ for any vector-valued function $\vg$;
    \item[(ii)] If $\mathbf{\Sigma}_X(\nabla G(\theta^*,\cdot)) \!\leq_L\! \mathbf{\Sigma}_Y(\nabla G(\theta^*,\cdot))$, then $\mV_{\!\!X} \!\leq_L\! \mV_{\!\!Y}$ $(\mV'_{\!\!X} \!\leq_L\! \mV'_{\!\!Y}$ for the case of averaged iterates$)$;
\end{itemize}
where function $\nabla G(\theta^*,\cdot): [n]\to \R^d$ is defined in the SGD iteration \eqref{eqn:update_rule_general}, $\mV_X$ and $\mV'_X$ (resp. $\mV_Y$ and $\mV'_Y$) are the covariance matrices from Lemma \ref{Lemma:CLT_SGD}, corresponding to $\{X_t\}_{t \geq 0}$ (resp. $\{Y_t\}_{t \geq 0}$) as the stochastic input sequence.\qed
\end{theorem}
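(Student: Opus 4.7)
\textbf{Plan for proving Theorem \ref{Thm:main_result}.}

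For part (i), the forward direction is essentially a specialization: if $\mSigma_X(\vg) \leq_L \mSigma_Y(\vg)$ for \emph{every} vector-valued $\vg$, then in particular for scalar $g:[n]\to\R$ (i.e. $d=1$) we recover $\sigma^2_X(g) \leq \sigma^2_Y(g)$, which is exactly $X \geq_E Y$. For the reverse direction, the key identity I would use is that, for any vector $\vx \in \R^d$ and any vector-valued $\vg:[n]\to\R^d$,
\begin{equation*}
\vx^T \mSigma_X(\vg)\,\vx \;=\; \lim_{t\to\infty} t\cdot \text{Var}\bigl(\vx^T \hat\mu_t(\vg)\bigr) \;=\; \lim_{t\to\infty} t\cdot \text{Var}\bigl(\hat\mu_t(\vx^T\vg)\bigr) \;=\; \sigma^2_X(\vx^T\vg),
\end{equation*}
using linearity of expectation inside $\hat\mu_t$. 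Since $\vx^T\vg:[n]\to\R$ is a scalar function, efficiency ordering $X \geq_E Y$ gives $\sigma^2_X(\vx^T\vg) \leq \sigma^2_Y(\vx^T\vg) = \vx^T \mSigma_Y(\vg)\,\vx$. As $\vx$ was arbitrary, this yields $\mSigma_X(\vg) \leq_L \mSigma_Y(\vg)$.

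For part (ii), my approach is to exploit the linearity of the Lyapunov equation from Lemma \ref{Lemma:CLT_SGD} in the noise covariance $\mSigma$. Define $\mH \triangleq \mK$ when $\alpha \in (1/2,1)$ and $\mH \triangleq \mK + \mI/2$ when $\alpha=1$; under (A2), $\mH$ is positive definite in both cases, so $-\mH$ is Hurwitz and the unique PSD solution to the Lyapunov equation admits the integral representation
\begin{equation*}
\mV_{\!\!X} \;=\; \int_0^\infty e^{-\mH s}\,\mSigma_X\,e^{-\mH^T s}\,ds,
\end{equation*}
and analogously for $\mV_{\!\!Y}$. Subtracting,
\begin{equation*}
\mV_{\!\!Y} - \mV_{\!\!X} \;=\; \int_0^\infty e^{-\mH s}\bigl(\mSigma_Y - \mSigma_X\bigr) e^{-\mH^T s}\,ds.
\end{equation*}
For any $\vx \in \R^d$, letting $\vy(s) \triangleq e^{-\mH^T s}\vx$, the hypothesis $\mSigma_X \leq_L \mSigma_Y$ gives $\vy(s)^T(\mSigma_Y - \mSigma_X)\vy(s) \geq 0$ pointwise, so integrating yields $\vx^T(\mV_{\!\!Y}-\mV_{\!\!X})\vx \geq 0$, i.e.\ $\mV_{\!\!X} \leq_L \mV_{\!\!Y}$. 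The averaged case is even more direct: since $\mV'_{\!\!X} = \mK^{-1}\mSigma_X(\mK^{-1})^T$, the difference $\mV'_{\!\!Y} - \mV'_{\!\!X} = \mK^{-1}(\mSigma_Y - \mSigma_X)(\mK^{-1})^T$ is a congruence transformation of a PSD matrix, hence PSD.

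The main conceptual content sits in part (i) reverse direction (the $\vx^T\vg$ trick reduces the matrix inequality to a family of scalar AV inequalities), while part (ii) is essentially an application of the standard fact that the Lyapunov map is monotone in the Loewner order when the drift matrix is stable. The only subtlety worth flagging would be verifying that $\mH = \mK + \mI/2$ is positive definite in the $\alpha = 1$ regime, but this is precisely what (A2) assumes in that case; no separate argument is needed.
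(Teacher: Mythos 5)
Your proposal is correct and follows essentially the same route as the paper: part (i) rests on the identity $\vx^T\mSigma(\vg)\vx = \sigma^2(\vx^T\vg)$ for linear combinations (the paper's $g_{\vv,\vf}$ construction, with the scalar direction recovered by embedding $g$ as one coordinate of a vector-valued function), and part (ii) uses the integral representation of the Lyapunov solution together with the fact that congruence by $e^{-\mH s}$ (resp.\ $\mK^{-1}$ for averaged iterates) preserves the Loewner order, exactly as in the paper's appendix. One cosmetic caveat: in the $\alpha=1$ regime the stability matrix should be $\mK - \mI/2$ with $\mK = \nabla^2 f(\theta^*)$ --- positive definiteness of $\mK-\mI/2$ is precisely what (A2) assumes there, whereas your $\mK+\mI/2$ reflects the sign convention of the appendix (where $\mK = \nabla h(\theta^*) = -\nabla^2 f(\theta^*)$ is Hurwitz); this slip mirrors the paper's own notational inconsistency and does not affect the argument.
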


Theorem \ref{Thm:main_result} enables us to provide a sense of \textit{efficiency ordering of SGD algorithms} which are driven by different stochastic inputs. Since this is achieved via Loewner ordering, it also leads to smaller confidence intervals in the long run as mentioned earlier, as well as potentially smaller MSE\footnote{The mean square error can be retrieved as the trace of the covariance matrix (weighted sum of its diagonal entries in case of weighted MSE). Loosely speaking, an iterate having a smaller covariance matrix in the Loewner ordering will then also have a smaller MSE (weighted MSE).} depending on the objective function.
\begin{remark}
In addition to the CLT result for SGD algorithms with diminishing step size described in Lemma \ref{Lemma:CLT_SGD}, we include in Appendix \ref{appendix:constant_step_size} similar results for constant step sizes and quadratic objective functions, where the statement of Theorem \ref{Thm:main_result} still holds. \qed
\end{remark}
\section{Applications: Towards More Efficient SGD}\label{section:application}

In this section, we present some SGD variants and compare them in terms of efficiency ordering of SGD. Specifically, we first show that a certain class of non-Markov random walks can provide a better input sequence than its Markovian counterpart. We then analyze shuffling-based gradient descent and compare it to the SGD with \textit{i.i.d} input in terms of efficiency ordering for SGD algorithm. We also extend our approach to a more general mini-batch version, the discussion for which is deferred to Appendix \ref{app:appendixH3}.

\textbf{High-Order Efficient Random Walk for SGD:} The simple random walk (SRW) is a popular Markov chain that has been extensively studied in the literature \citep{ross1996stochastic,rasti2009respondent,gjoka2011practical}. Several recent works have focused on the non-backtracking random walk (NBRW) on a connected undirected graph $\cG(\cV,\cE)$ in the MCMC literature, which is an extension of SRW with the same limiting distribution $\vpi = \vd/\vone^T\vd$ \citep{neal2004improving,alon2007non,lee2012beyond,kempton2016non,ben2018comparing}. Intuitively speaking, NBRW is a random walk that selects one of its neighbors uniformly at random \emph{except} the one it just came/transitioned from. Specifically, the NBRW $\{Y_t\}_{t\geq 0}$ is a second-order non-reversible Markov chain (i.e., it is non-Markov on $\cV = [n]$) with its transition probability given by
\begin{equation}\label{eqn:nbrw_trans_prob}
    P(Y_{t+1} = j|Y_{t} = i, Y_{t-1} = k)
    = \begin{cases} \frac{1}{d_i-1} \!\!& \text{if} ~ j\neq k, j\!\in\! N(i), d_i > 1, \\ 1 \!\!& \text{if} ~ d_i \!=\! 1, j\!\in\! N(i), \\ 0 \!\!& \text{otherwise.}\end{cases}
\end{equation}
Since the limiting distributions of NBRW and SRW are the same, NBRW can be used as the input for SGD iterations \eqref{eqn:update_rule_general} with the same re-weighted local functions $G(\theta^*,i)$ as that of SRW for all $i \in [n]$ whenever the applications call for random-walk type of inputs.
Let $\mSigma_Y(\nabla G(\theta^*,\cdot))$ be the asymptotic covariance matrix of this NBRW $\{Y_t\}_{t\geq 0}$, as defined in \eqref{eqn:asymptotic covariance matrix}. One of the main results in \citep{lee2012beyond} concerns the efficiency ordering of NBRW and SRW. They show that NBRW has a smaller AV, or equivalently, from our Theorem \ref{Thm:main_result} (i), a smaller asymptotic covariance in terms of Loewner ordering. Our next result forms the necessary connection between the asymptotic covariance matrix arising in the CLT result and $\mSigma_Y(\nabla G(\theta^*,\cdot))$. 
\begin{proposition}\label{cor:reweight_nonMarkov}
Consider the SGD iteration \eqref{eqn:update_rule_general} with two input sequences SRW $\{X_t\}_{t \geq 0}$ and NBRW $\{Y_t\}_{t \geq 0}$ respectively. Then, both the respective estimators $\theta^X_t, \theta^Y_t \xrightarrow[t \to \infty]{a.s.} \theta^*$, and $\mV_Y \leq_L \mV_X$, that is, NBRW is more efficient than SRW in the SGD algorithm.\qed
\end{proposition}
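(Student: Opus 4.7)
The plan is a three-step application of Theorem \ref{Thm:main_result}, leveraging a known efficiency comparison between NBRW and SRW from the MCMC literature. First, I would invoke the main result of \citep{lee2012beyond}, which establishes that on any connected undirected graph the NBRW has asymptotic variance no larger than the SRW uniformly over every scalar test function $g:\cV\to\R$, i.e., $Y \geq_E X$ in the sense of Definition \ref{def:efficiency_order}. Second, applying Theorem \ref{Thm:main_result}(i) would lift this scalar statement to Loewner ordering of the full asymptotic covariance matrices, $\mSigma_Y(\vg) \leq_L \mSigma_X(\vg)$ for every vector-valued $\vg:[n]\to\R^d$, in particular for $\vg = \nabla G(\theta^*,\cdot)$. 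Third, feeding this into Theorem \ref{Thm:main_result}(ii) immediately yields $\mV_Y \leq_L \mV_X$, while the almost sure convergence $\theta^X_t,\theta^Y_t \to \theta^*$ follows from the first half of Lemma \ref{Lemma:CLT_SGD}.

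Before assembling the above, I would verify that the CLT framework of Lemma \ref{Lemma:CLT_SGD} and Theorem \ref{Thm:main_result} applies to both chains. Since SRW and NBRW share the stationary distribution $\vpi = \vd/\vone^T\vd$ on $\cV$, the re-weighted summand $G(\theta,i) = (\vone^T\vd/(n d_i))F(\theta,i)$ and hence the iteration \eqref{eqn:update_rule_general} are identical for the two driving sequences, so assumptions (A1)--(A3) and (A5) hold simultaneously for both. The subtle point is (A4): SRW is ergodic Markov on $\cV$ so the Poisson equation is standard, whereas NBRW is non-Markov on $\cV$ and must first be lifted to the state space of directed edges $\vec{\cE}$, on which it becomes an ergodic Markov chain with uniform stationary distribution that projects back to $\vpi$ on $\cV$. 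The function $\nabla G(\theta,\cdot)$ extends to $\vec{\cE}$ by depending only on the head node of each directed edge, and on this enlarged space (A4) holds with a bounded Poisson solution via standard ergodic-Markov-chain arguments.

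The main obstacle is precisely this non-Markovian character of NBRW on $\cV$: one must verify that the asymptotic covariance $\mSigma_Y(\nabla G(\theta^*,\cdot))$ computed via \eqref{eqn:asymptotic covariance matrix} directly from the node-level samples $\{Y_t\}$ agrees with the one obtained after lifting to $\vec{\cE}$, so that the scalar efficiency ordering of \citep{lee2012beyond} and Theorem \ref{Thm:main_result}(i) can be chained together unambiguously. This invariance under the edge-to-node projection follows because the partial sums $\Delta_t$ in \eqref{eqn:asymptotic covariance matrix} depend on the NBRW trajectory only through the visited nodes, while existence of the limit is guaranteed by the ergodic theorem on $\vec{\cE}$. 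Once this consistency is pinned down, the three-step deduction -- scalar efficiency, then Loewner ordering of $\mSigma$'s, then Loewner ordering of $\mV$'s -- is immediate.
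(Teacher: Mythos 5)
Your proposal is correct and follows essentially the same route as the paper's own proof in Appendix \ref{app:apply_nbrw}: lifting NBRW to the directed-edge Markov chain $Z_t=(Y_{t-1},Y_t)$, defining the lifted function via the head node ($\Phi(\theta,e_{ij})=G(\theta,j)$), verifying (A1)--(A5) (in particular (A4)) on the augmented space, checking that the augmented chain's asymptotic covariance coincides with $\mSigma_Y(\nabla G(\theta^*,\cdot))$ via the projection of the edge-level stationary distribution to $\vpi$, and then chaining the scalar AV ordering of \citep{neal2004improving,lee2012beyond} through Theorem \ref{Thm:main_result}(i)--(ii). You also correctly flag the edge-to-node covariance consistency as the one subtle step, which is exactly the point the paper handles in its display \eqref{eqn:ztoy}.
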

By augmenting the state space, we can represent NBRW as a Markov chain $Z_t = (Y_{t-1},Y_t)\in \cV \times \cV$, as was done in \citep{neal2004improving,lee2012beyond}. This transformation then allows us to build CLT for an SGD iteration with $\{Z_t\}_{t \geq 0}$ as the input. The subtlety here is to prove that the asymptotic covariance matrix arising out of the CLT with respected to the augmented process $\{Z_t\}_{t \geq 0}$ is indeed equal to $\mSigma_Y(\nabla G(\theta^*,\cdot))$. This is shown by cultivating the relationship between the stationary distribution of $\{Z_t\}_{t\geq 0}$ on the augmented state space $\cV \times \cV$ and $\{Y_t\}_{t\geq 0}$ on the node space $\cV$, as provided in \citep{neal2004improving}.

Thus, our Theorem \ref{Thm:main_result} together with the existing works on efficiency ordering of NBRW versus SRW in the MCMC literature \citep{neal2004improving,lee2012beyond} enable us to show that NBRW is a more efficient input sequence than SRW for the SGD iteration \eqref{eqn:update_rule_general}. Interestingly, it has been shown that non-backtracking walks mix faster when the underlying graph is $d-$regular \citep{alon2007non}. In this case, a faster convergence rate is also suggested by mixing time based non-asymptotic bounds prevalent in RWSGD literature. However, no such results concerning mixing time and SLEM exists for NBRW on a general graph. Thus, in the form of Proposition \ref{cor:reweight_nonMarkov}, we demonstrate the utility of our approach in settings where mixing time based comparisons are unavailable.

\textbf{Shuffling versus \textit{i.i.d} Input Sequence:} Shuffling-based methods have been widely used in machine learning applications \citep{bottou-slds-open-problem-2009}. They work by repeatedly passing over the entire state space $[n]$ without repetition, each complete pass forming an \textit{epoch}. \textit{Random shuffling} and \textit{single shuffling} are two versions therein and differ in the order in which they pass over $[n]$. Random shuffling, as the name suggests, makes the pass in a randomly chosen order in each epoch, while single shuffling maintains the same predetermined order (often randomly chosen once at the beginning) for all epochs.  Shuffling-based methods are known to show better empirical performance than \textit{i.i.d} input \citep{bottou2012stochastic}, although intense theoretical analysis for shuffling-based gradient descent has only emerged in recent years \citep{shamir2016without,haochen2019random,safran2020good,ahn2020sgdshuffling,gurbuzbalaban2021random}. In what follows, we use our results from Section \ref{Section:Results random walk} to compare shuffling-based gradient descent to SGD with \textit{i.i.d} input. To do so, we first analyze the asymptotic covariance matrix for shuffling-based methods.

\begin{lemma}\label{Lem:Shuffling finite time var}
Let the input process $\{X_t\}_{t\geq 0}$ be single or random shuffling. Then, for any vector-valued function $\vg: [n] \to \R^d$, $\mSigma_X(\vg) = \vzero$, where $\mSigma_X(\vg)$ is defined in \eqref{eqn:asymptotic covariance matrix}.\qed
\end{lemma}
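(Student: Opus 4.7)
The plan is to exploit the key combinatorial property that shuffling visits every state in $[n]$ exactly once per epoch of length $n$, which forces the cumulative deviation $\Delta_t$ to be uniformly bounded (not growing with $t$), whence $\frac{1}{t}\E[\Delta_t\Delta_t^T]\to \vzero$.

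First, I would observe that for both single and random shuffling, the empirical distribution over any epoch is exactly uniform, so $\vpi = \vone/n$ and $\E_{\vpi}(\vg) = \frac{1}{n}\sum_{i=1}^n \vg(i)$. Next, write any time index as $t = kn + r$ with $k = \lfloor t/n\rfloor$ and $0 \leq r < n$, and split
\begin{equation*}
\Delta_t = \sum_{j=1}^{k}\left[\sum_{s=(j-1)n+1}^{jn}\!\!\bigl(\vg(X_s) - \E_{\vpi}(\vg)\bigr)\right] + \sum_{s=kn+1}^{kn+r}\bigl(\vg(X_s) - \E_{\vpi}(\vg)\bigr).
\end{equation*}
The crucial point is that within any single epoch $\{(j-1)n+1,\ldots,jn\}$, the sequence $X_{(j-1)n+1},\ldots,X_{jn}$ is a permutation of $[n]$ (deterministic for single shuffling, random for random shuffling, but either way a permutation). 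Hence
\begin{equation*}
\sum_{s=(j-1)n+1}^{jn}\vg(X_s) = \sum_{i=1}^n\vg(i) = n\,\E_{\vpi}(\vg),
\end{equation*}
so each epoch contribution is exactly zero. Therefore $\Delta_t$ equals only the tail contribution over the incomplete last epoch, a sum of at most $n-1$ terms.

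Since $[n]$ is finite, $M \triangleq \max_{i\in[n]}\|\vg(i) - \E_{\vpi}(\vg)\|_2 < \infty$, and therefore $\|\Delta_t\|_2 \leq (n-1)M$ for every $t$ (deterministically for single shuffling, pathwise for random shuffling). Consequently $\|\E[\Delta_t\Delta_t^T]\|_2 \leq (n-1)^2 M^2$, which is a constant independent of $t$. Dividing by $t$ and taking $t\to\infty$ yields $\mSigma_X(\vg) = \lim_{t\to\infty}\frac{1}{t}\E[\Delta_t\Delta_t^T] = \vzero$.

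There is really no main obstacle here: the argument is just the observation that shuffling exactly cancels within each epoch, so $\Delta_t$ does not accumulate. The only care needed is that for random shuffling one must note that the cancellation within each epoch is a pathwise (almost sure) identity, not merely in expectation, so the uniform bound on $\|\Delta_t\|_2$ holds sample-path-wise and the $L^2$ bound on $\Delta_t\Delta_t^T$ follows trivially.
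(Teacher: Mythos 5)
Your proof is correct and follows essentially the same route as the paper's own argument: both exploit the exact epoch-wise cancellation $\sum_{s}\vg(X_s) = n\,\E_{\vpi}(\vg)$ over each complete permutation, reduce $\Delta_t$ to a residual of at most $n-1$ bounded terms, and conclude $\frac{1}{t}\E[\Delta_t\Delta_t^T]\to\vzero$. Your explicit remark that the cancellation is pathwise (not merely in expectation) for random shuffling is a small but welcome clarification that the paper leaves implicit.
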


For \textit{i.i.d} input sequence with distribution $\hat{\vpi}$, the asymptotic covariance from Lemma \ref{Lemma:CLT_SGD} reduces to
\begin{equation}\label{eqn:cov_matrix_iid}
    \mSigma_X(\nabla G(\theta^*,\cdot)) \triangleq \text{Var}_{X_0 \sim \hat{\vpi}}\left( \nabla G(\theta^*,X_0)\right)
\end{equation}
following its definition in  \eqref{eqn:asymptotic covariance matrix}, and thus, trivially, $\mSigma_X(\nabla G(\theta^*,\cdot)) \geq_L \vzero$. Lemma \ref{Lem:Shuffling finite time var} shows that shuffling-based methods are more efficient than \textit{i.i.d} input sequence due to a smaller asymptotic covariance matrix in Loewner ordering. Next, we show that they also outperform \textit{i.i.d} input when used for driving the input sequence of SGD algorithms. 
\begin{proposition}\label{Cor:Shuffling vs iid}
Consider the SGD iteration \eqref{eqn:update_rule_general} with stochastic inputs single/random shuffling $\{X_t\}_{t\geq 0}$ and \textit{i.i.d} sampling $\{Y_t\}_{t\geq 0}$, we have $\theta^X_t, \theta^Y_t \xrightarrow[t \to \infty]{a.s.} \theta^*$ and $\mV_X = \vzero \leq_L \mV_Y$.\qed
\end{proposition}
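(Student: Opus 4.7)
The plan is to derive this result as a direct corollary of Lemma \ref{Lemma:CLT_SGD}, Lemma \ref{Lem:Shuffling finite time var}, and Theorem \ref{Thm:main_result}(ii), essentially by chaining what has already been established. First, I would invoke Lemma \ref{Lemma:CLT_SGD} separately for each input sequence to obtain both the almost sure convergence $\theta^X_t, \theta^Y_t \xrightarrow{a.s.} \theta^*$ and the CLT with covariance matrices $\mV_X$ and $\mV_Y$ arising from the Lyapunov equations driven by $\mSigma_X(\nabla G(\theta^*,\cdot))$ and $\mSigma_Y(\nabla G(\theta^*,\cdot))$ respectively. For i.i.d. sampling, assumptions (A1)--(A5) are essentially immediate; for single/random shuffling, the Poisson-equation hypothesis (A4) is the only nontrivial item and is handled by augmenting the state to track the remaining indices of the current epoch, turning shuffling into an ergodic Markov chain on a finite augmented state space (a trick used earlier for NBRW in this section).

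Next, I would identify the two asymptotic covariance matrices. Applying Lemma \ref{Lem:Shuffling finite time var} with $\vg = \nabla G(\theta^*,\cdot)$ yields $\mSigma_X(\nabla G(\theta^*,\cdot)) = \vzero$. For the i.i.d.\ case, equation \eqref{eqn:cov_matrix_iid} gives $\mSigma_Y(\nabla G(\theta^*,\cdot)) = \mathrm{Var}_{Y_0 \sim \vpi}(\nabla G(\theta^*, Y_0))$, which is positive semidefinite and therefore $\mSigma_Y(\nabla G(\theta^*,\cdot)) \geq_L \vzero = \mSigma_X(\nabla G(\theta^*,\cdot))$. Plugging this Loewner ordering into Theorem \ref{Thm:main_result}(ii) immediately yields $\mV_X \leq_L \mV_Y$.

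To sharpen this to the equality $\mV_X = \vzero$, I would appeal to the uniqueness statement of the Lyapunov equation in Lemma \ref{Lemma:CLT_SGD}. With $\mSigma_X = \vzero$, the equation $\mK \mV_X + \mV_X \mK^T = \vzero$ (or $(\mK + \mI/2)\mV_X + \mV_X(\mK + \mI/2)^T = \vzero$ when $\alpha = 1$) has a unique solution, and since (A2) ensures $\mK$ (resp.\ $\mK + \mI/2$) is positive definite, so that $-\mK$ is Hurwitz, that unique solution is $\mV_X = \vzero$. Combining with the Loewner inequality gives $\vzero = \mV_X \leq_L \mV_Y$, as claimed.

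The main obstacle I anticipate is the verification of (A4) for shuffling: since shuffling is deterministic within an epoch and not Markovian on $[n]$, the Poisson equation in (A4) must be interpreted on the augmented (lifted) Markov process, and one must argue that the CLT framework of Lemma \ref{Lemma:CLT_SGD} does carry over to this augmented process in such a way that the asymptotic covariance matrix of the lifted chain, evaluated at the natural lift of $\nabla G(\theta^*,\cdot)$, coincides with the quantity $\mSigma_X(\nabla G(\theta^*,\cdot))$ whose vanishing is guaranteed by Lemma \ref{Lem:Shuffling finite time var}. Once this identification is made, the remainder is bookkeeping.
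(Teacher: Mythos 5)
Your overall route---chaining Lemma \ref{Lemma:CLT_SGD} for both inputs, Lemma \ref{Lem:Shuffling finite time var} with $\vg = \nabla G(\theta^*,\cdot)$ to get $\mSigma_X = \vzero$, equation \eqref{eqn:cov_matrix_iid} to get $\mSigma_Y \geq_L \vzero$, Theorem \ref{Thm:main_result}(ii) for the Loewner ordering, and uniqueness of the Lyapunov solution to sharpen to $\mV_X = \vzero$---is exactly the paper's, and you correctly flag the verification of (A4) for shuffling as the crux. But your plan for that step contains a genuine error: you propose to turn shuffling into an \emph{ergodic} Markov chain on an augmented state space, and no such lift exists. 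Any Markov representation of shuffling is \emph{periodic} with period $n$: the position-within-epoch counter advances deterministically through $1,\ldots,n$, so a return to any augmented state can occur only at multiples of $n$ (indeed single shuffling on $[n]$ is already a time-homogeneous Markov chain---a deterministic $n$-cycle---and needs no augmentation, but it too is periodic, with SLEM equal to $1$). Since the standard justification that (A4) holds automatically applies only to \emph{ergodic} chains (aperiodicity included), your argument as written has a hole precisely at the step you identified as the main obstacle; ergodicity cannot be the mechanism that closes it.

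The paper closes this hole with a dedicated spectral lemma (Lemma \ref{lemma:periodic_chain_in_CLT}): for a finite, irreducible, \emph{periodic} chain, Perron--Frobenius theory guarantees the eigenvalue $1$ of $\mP$ is simple even though other eigenvalues lie on the unit circle at roots of unity, and a Jordan-form computation then shows every eigenvalue of $\mI - \mP + \vone\vpi^T$ is nonzero, so $(\mI - \mP + \vone\vpi^T)^{-1}$ exists and the Poisson solution \eqref{eqn:solution_poisson_equation} is well defined and bounded, validating (A4). This lemma is applied directly to single shuffling, and to random shuffling only after the augmentation of Proposition \ref{prop:random_shuffling_markov_chain}, which records the last $n$ visited indices together with the epoch position (close in spirit to what you describe, though substantially larger than the edge-based NBRW lift you invoke as precedent, and yielding a periodic, not ergodic, chain). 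Your remaining steps---the identification of the lifted asymptotic covariance with $\mSigma_X(\nabla G(\theta^*,\cdot))$, and the conclusion $\vzero = \mV_X \leq_L \mV_Y$ via the Lyapunov equation with $\mSigma_X = \vzero$---do match the paper's proof once the periodic-chain Poisson solvability is supplied.
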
\vspace{-2mm}

Though it may seem so at first, Proposition \ref{Cor:Shuffling vs iid} is not a simple application of Theorem \ref{Thm:main_result}, especially for random shuffling because it is hard to check if random shuffling, formulated as a time-inhomogeneous Markov chain, indeed satisfies (A4). To overcome this difficulty, in Appendix \ref{app: generalized_minibatch_shuffling_proof} we come up with a non-trivial augmentation to a much higher dimensional state space ($[n]^{n+1}$) to make random shuffling a time-homogeneous periodic Markov chain in order to show that both single shuffling and random shuffling satisfy (A4) and thus apply Theorem \ref{Thm:main_result}.

The case of shuffling versus \textit{i.i.d} inputs is an example of a setting where the sequence with larger SLEM is more efficient than one with smaller SLEM\footnote{The single shuffling when realized as a periodic Markov chain has SLEM $=1$ (transition matrix is unitary), while the \textit{i.i.d} input sequence has SLEM $=0$ (transition matrix is rank one).} as an input sequence to the SGD iteration \eqref{eqn:update_rule_general}. For quadratic objective functions that satisfy the linear SA iteration in \citep{chen2020explicit}, it also attains a faster convergence speed in terms of MSE than \textit{i.i.d} inputs to SGD algorithms. Although some recent works provide more informative finite-time error bounds on the MSE of the objective function for shuffling-based methods, by studying a special case of the matrix norm AM-GM inequality and proving faster convergence rate than \textit{i.i.d} inputs \citep{rajput2020closing,ahn2020sgdshuffling,gurbuzbalaban2021random}, our result is not a subset of theirs. To be precise, we show in Appendix \ref{app:discussion_pl_pdm} that our assumption (A2) on the objective function is no less general than their most general setting based on the Polyak-Łojasiewicz inequality.

\begin{remark}
Mini-batch gradient descent is another popular gradient descent variant and is widely used in the machine learning tools \citep{chollet2015keras,abadi2016tensorflow,paszke2019pytorch} to accelerate the learning process when compared to SGD. In Appendix \ref{app:appendixH3} we show how our framework can be applied to study min-batch based SGD algorithms, and include the asymptotic analysis on mini-batch gradient descent with shuffling. 

Besides mini-batch gradient descent, there are other SGD variants, e.g., momentum SGD, Nesterov accelerated SGD and ADAM, that have been studied in the SGD literature for non-asymptotic analysis \citep{Kingma2015,Reddi2018,assran2020convergence}. However, asymptotic analysis on the SGD variants is not well studied in the literature, with only very recent results on the CLT for \textit{i.i.d} input sequences \citep{lei2020variance,barakat2021convergence,barakat2021stochastic,li2022revisiting}. Asymptotic analysis and CLT for variants more general Markovian input sequences, which would be a prerequisite for Theorem \ref{Thm:main_result}, remains an open problem. We defer the discussion on related works to Appendix \ref{app:simulation_ext}, where we also empirically test the SGD variants and find that the efficiency ordering result still holds for these SGD variants - opening up the possibility for further theoretical analysis.\qed 
\end{remark}
\section{Numerical Experiments}\label{section:simulation}

In this section, we empirically validate our theoretical analysis.
We select two convex objective functions as follows.\vspace{-2mm}
\begin{equation}\label{eqn:obj}
   \tilde{f}(\theta) \!=\! \frac{1}{n}\!\sum_{i=1}^n \log(1\!+\!\text{exp}(-y_i\vx_i^T\theta)) \!+\! \frac{1}{2}\|\theta\|_2^2~,~~~ \hat{f}(\theta) \!=\! \frac{1}{n}\sum_{i=1}^n \theta^T(\va_i\va_i^T \!+\! \mD_i)\theta \!+\! \vb^T\theta.
\end{equation}
For $l_2$-regularized logistic regression $\tilde{f}(\theta)$, we choose the dataset CIFAR-10 \citep{krizhevsky2009learning} where $n$ is the total number of data points. Here, $\vx_i\in\R^{108}$ is the vector flattened from the cropped image $i$ with shape $(6, 6, 3)$, and $y_i\in\R$ is the label. For sum-of-non-convex functions $\hat{f}(\theta)$, which is based on the experiment setup in \citep{gower2019sgd,allen2016improved},
we generate random vectors $\va_i,\vb$ and matrices $\mD_i$ which ensure the invertibility of matrix $\sum_{i=1}^n \va_i\va_i^T$ and $\sum_{i=1}^n \mD_i = \mathbf{0}$ (details are deferred to Appendix \ref{subsection:additional_result}). For both experiments, we assign a data point to each node $i$ on the general graph `Dolphins' ($62$ nodes) \citep{nr}.  We set the step size in the SGD algorithm as $1/t^{0.9}$, and use MSE $\E[\|\theta_t-\theta^*\|_2^2]$ to measure the relative performance of different inputs. We also employ the scaled MSE $\E[\|\theta_t-\theta^*\|_2^2]/\gamma_t$ to empirically show its relationship to the CLT result \eqref{eqn:error}. Due to space constraints, additional simulation results which support our efficiency ordering result in the setting of large graphs, and non-convex functions are deferred to Appendix \ref{app:simulation_ext}. Therein, via numerical simulations, we also observe the efficiency ordering for other SGD variants such as Nesterov accelerated SGD and ADAM when comparing their iterations under efficiency ordered noise sequences.

In Figure \ref{fig:5} we compare NBRW and SRW as input sequences on the graph `Dolphins' for two objective functions in \eqref{eqn:obj}. We also compare uniform sampling, random shuffling and single shuffling, assuming that they can access any node on the graph in each iteration. We can see in Figure \ref{fig:1} and \ref{fig:3} that NBRW always falls below SRW throughout all time periods, which indicates that NBRW tends to have smaller MSE than SRW. Single and random shuffling are both better than uniform sampling in terms of smaller MSE. The oscillation of single shuffling comes from a predetermined fixed data sampling sequence, while random shuffling changes the permutation whenever traversing all nodes. Such oscillation is not visible in Figure \ref{fig:3} and Figure \ref{fig:4} because it is small on the current y-axis scale. The curves of single and random shuffling in Figure \ref{fig:2} and \ref{fig:4} fall below that of uniform sampling and still decrease in the linear rate because eventually their covariance matrices will be zero matrix, as indicated in Proposition \ref{Cor:Shuffling vs iid}. Figure \ref{fig:2} shows that the scaled MSEs of NBRW, SRW and uniform sampling approach some constants after some time, which is consistent with the CLT result \eqref{eqn:error}. The curves of NBRW are still below that of SRW, showing that the input with smaller scaled MSE tends to have higher efficiency, which supports Proposition \ref{cor:reweight_nonMarkov}. We can see from Figure \ref{fig:4} that the curves NBRW, SRW are still increasing andd they have not yet entered the regime where the covariance matrix becomes the main factor. On the other hand, uniform sampling and both shuffling methods are just entering this regime.
\vspace{-2mm}
\begin{figure}[!h]
    \centering
     \begin{subfigure}[b]{0.37\textwidth}
         \centering
         \includegraphics[width=\textwidth]{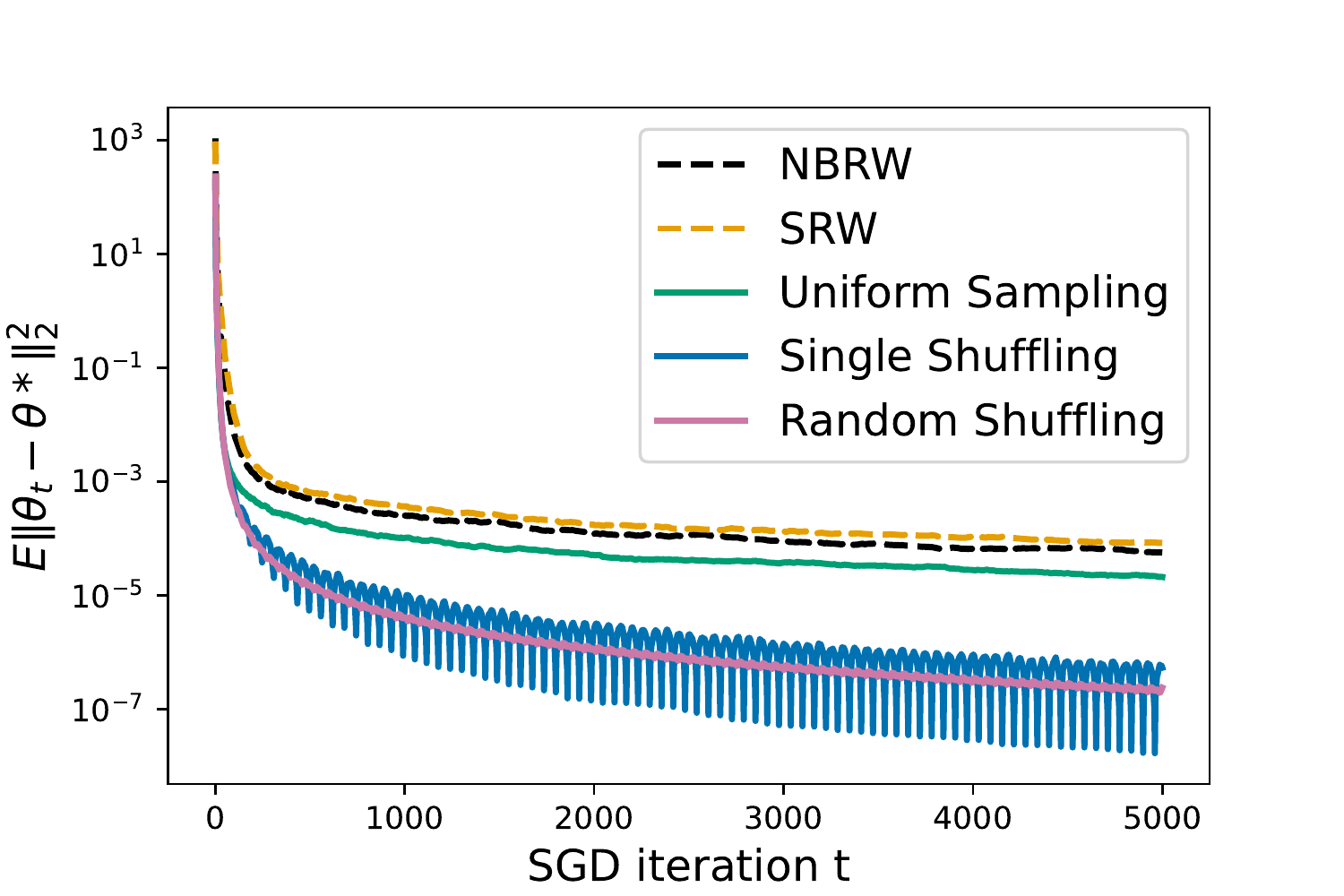}
         \caption{Logistic regression;
         (MSE)}
         \label{fig:1}
     \end{subfigure}\quad
     \begin{subfigure}[b]{0.37\textwidth}
         \centering
         \includegraphics[width=\textwidth]{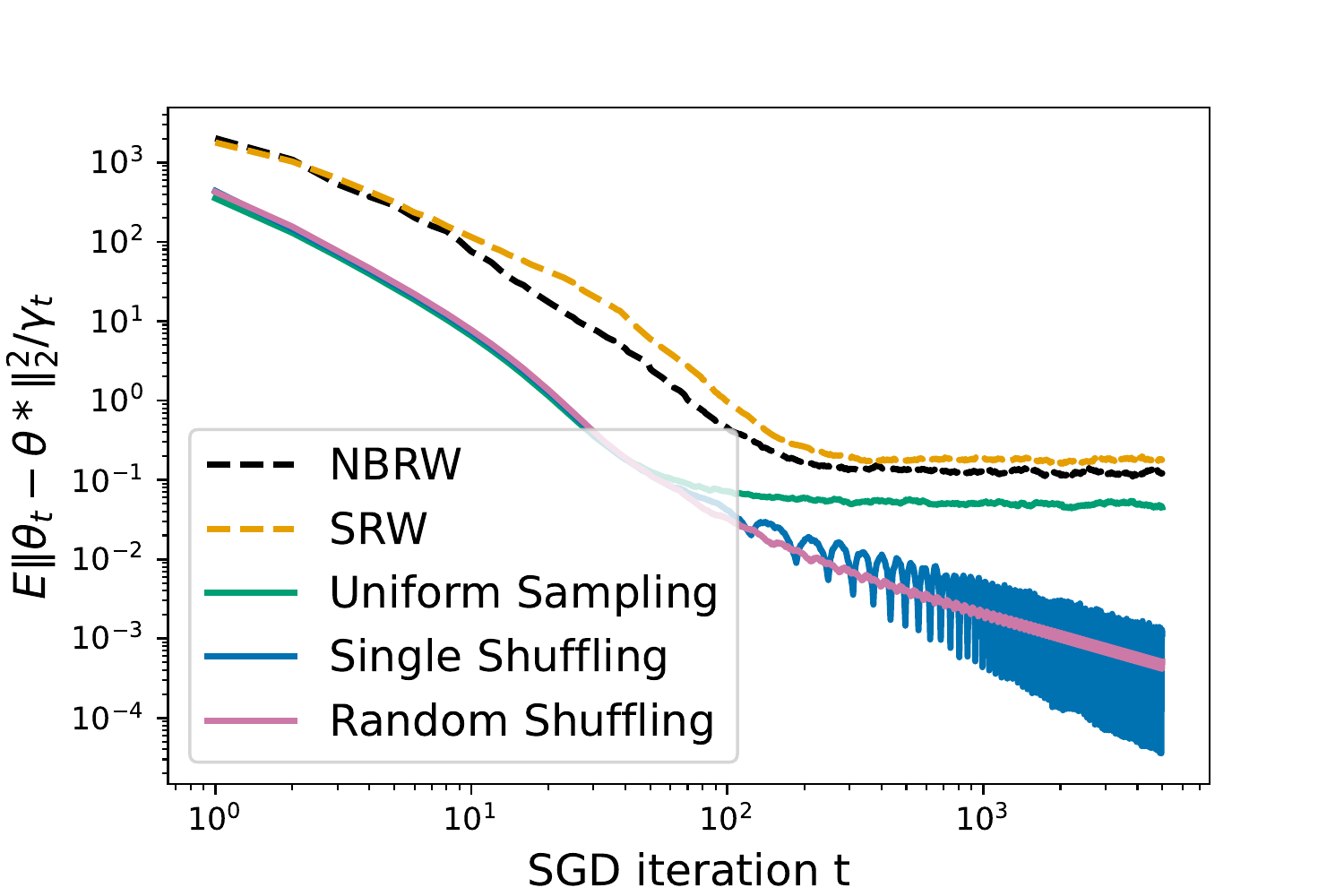}
         \caption{Logistic regression;
         (scaled MSE)}
         \label{fig:2}
     \end{subfigure} 

     \begin{subfigure}[b]{0.37\textwidth}
         \centering
         \includegraphics[width=\textwidth]{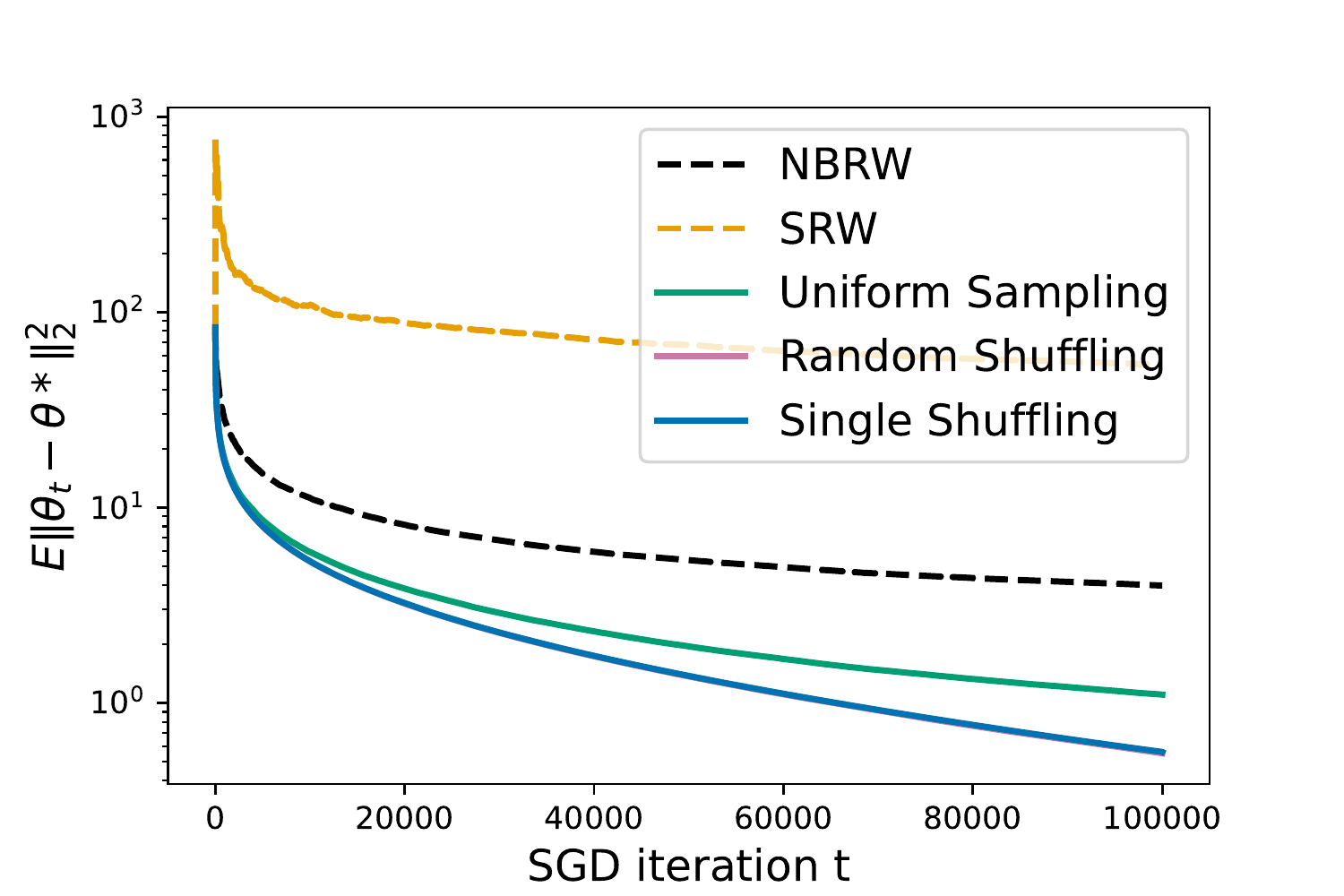}
         \caption{sum-non-convex fn.;
         (MSE)}
         \label{fig:3}
     \end{subfigure}\quad
     \begin{subfigure}[b]{0.37\textwidth}
         \centering
         \includegraphics[width=\textwidth]{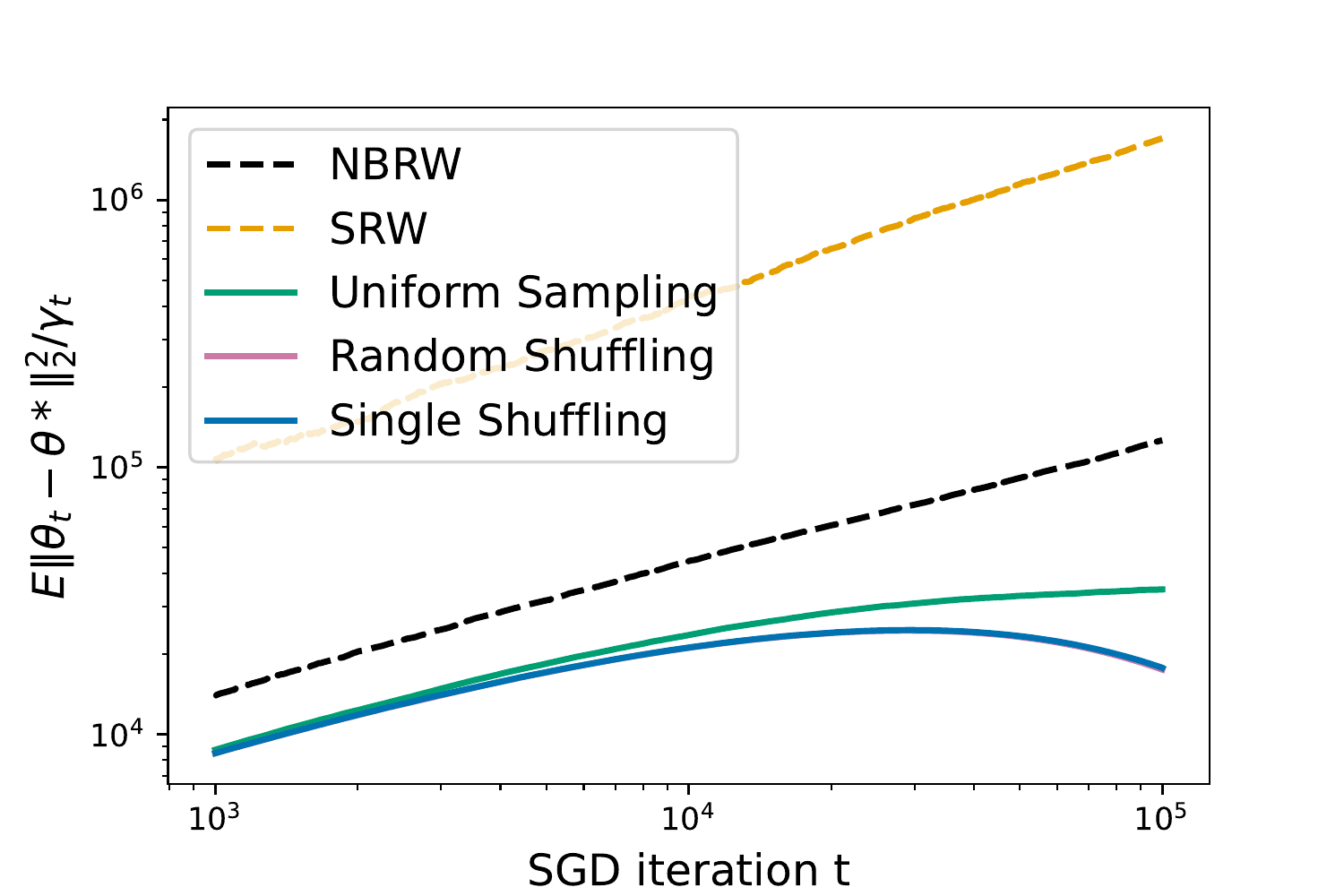}
         \caption{sum-non-convex fn.;
         (scaled MSE)}
         \label{fig:4}
     \end{subfigure}
     \vspace{-2mm}
     \caption{Performance comparison of different stochastic inputs on the graph `Dolphins'.}
     \label{fig:5}
     \vspace{-4mm}
\end{figure}
\vspace{-3mm}

\section{Conclusion}\label{Section:conclusion}
We have introduced the notion of efficiency ordering of SGD algorithms, and shown that processes with smaller asymptotic covariance are always more efficient as input sequences for SGD algorithms. Furthermore, we numerically observe that this sense of efficiency ordering is also seen SGD variants such as Nesterov accelerated SGD and ADAM. Since the asymptotic analysis of such SGD variants is not well-established under general stochastic inputs, establishing theoretical results on efficiency ordering remain an open problem.

\newpage
\section{Acknowledgments and Disclosure of Funding}
We thank the anonymous reviewers for their constructive comments. The research was conducted while Vishwaraj Doshi was with the Operations Research Graduate Program, North Carolina State University. This work was supported in part by National Science Foundation under Grant Nos. CNS-2007423, IIS-1910749, and CNS-1824518. 

\section*{Checklist}

The checklist follows the references.  Please
read the checklist guidelines carefully for information on how to answer these
questions.  For each question, change the default \answerTODO{} to \answerYes{},
\answerNo{}, or \answerNA{}.  You are strongly encouraged to include a {\bf
justification to your answer}, either by referencing the appropriate section of
your paper or providing a brief inline description.  For example:
\begin{itemize}
  \item Did you include the license to the code and datasets? \answerYes{See Section \_\_.}
  \item Did you include the license to the code and datasets? \answerNo{The code and the data are proprietary.}
  \item Did you include the license to the code and datasets? \answerNA{}
\end{itemize}
Please do not modify the questions and only use the provided macros for your
answers.  Note that the Checklist section does not count towards the page
limit.  In your paper, please delete this instructions block and only keep the
Checklist section heading above along with the questions/answers below.

\begin{enumerate}

\item For all authors...
\begin{enumerate}
  \item Do the main claims made in the abstract and introduction accurately reflect the paper's contributions and scope?
    \answerYes{}
  \item Did you describe the limitations of your work?
    \answerNo{}
  \item Did you discuss any potential negative societal impacts of your work?
    \answerNA{}
  \item Have you read the ethics review guidelines and ensured that your paper conforms to them?
    \answerYes{}
\end{enumerate}

\item If you are including theoretical results...
\begin{enumerate}
  \item Did you state the full set of assumptions of all theoretical results?
    \answerYes{}
        \item Did you include complete proofs of all theoretical results?
    \answerYes{All the proofs are included in the supplementary material.} 
\end{enumerate}

\item If you ran experiments...
\begin{enumerate}
  \item Did you include the code, data, and instructions needed to reproduce the main experimental results (either in the supplemental material or as a URL)?
    \answerNo{}
  \item Did you specify all the training details (e.g., data splits, hyperparameters, how they were chosen)?
    \answerYes{The details of the simulation setup are provided in the supplementary material.}
        \item Did you report error bars (e.g., with respect to the random seed after running experiments multiple times)?
    \answerNo{}
        \item Did you include the total amount of compute and the type of resources used (e.g., type of GPUs, internal cluster, or cloud provider)?
    \answerNo{}
\end{enumerate}

\item If you are using existing assets (e.g., code, data, models) or curating/releasing new assets...
\begin{enumerate}
  \item If your work uses existing assets, did you cite the creators?
    \answerYes{The citations can be found in Section 5.}
  \item Did you mention the license of the assets?
    \answerNA{}
  \item Did you include any new assets either in the supplemental material or as a URL?
    \answerNA{}
  \item Did you discuss whether and how consent was obtained from people whose data you're using/curating?
    \answerNA{}
  \item Did you discuss whether the data you are using/curating contains personally identifiable information or offensive content?
    \answerNA{}
\end{enumerate}

\item If you used crowdsourcing or conducted research with human subjects...
\begin{enumerate}
  \item Did you include the full text of instructions given to participants and screenshots, if applicable?
    \answerNA{}
  \item Did you describe any potential participant risks, with links to Institutional Review Board (IRB) approvals, if applicable?
    \answerNA{}
  \item Did you include the estimated hourly wage paid to participants and the total amount spent on participant compensation?
    \answerNA{}
\end{enumerate}

\end{enumerate}

\appendix
\section{CLT for Stochastic Approximation Algorithms}\label{Appendix:CLT_assumptions}
The existing central limit theorem (CLT) for stochastic approximation (SA) with Markovian dynamics \citep{benveniste2012adaptive,delyon2000stochastic,fort2015central} usually studied a general Markov process $\{X_t\}_{t\geq 0}$ on the finite state space $\cV$ and its transition kernel $P_{\theta}$ dependent on $\theta$ such that $P(X_{t+1}\in A|X_t=x,\theta_t=\theta) = P_{\theta}(x,A)$ for any subset $A\subseteq \cV$. Denote $\vpi_{\theta}$ the stationary distribution of $P_{\theta}$. Define $P_{\theta}v_{\theta}(x) \triangleq \sum_{l\in\cV} [P_{\theta}]_{x,l}\cdot v_{\theta}(l)$. The general SA algorithm is of the form
\begin{equation}\label{eqn:SA_iteration}
    \theta_{t+1} = \text{Proj}_{\Theta}\left(\theta_t + \gamma_{t+1}H(\theta_t,X_{t+1})\right),
\end{equation}
where $\Theta \subset \R^d$ is a closed and convex set. The main goal is to find the root $\theta^*$ of function $$h(\theta) \triangleq \E_{X\sim\vpi_{\theta}}[H(\theta,X)] ~\text{i.e.,} ~ h(\theta^*) = 0.$$
As mentioned in \citep{benveniste2012adaptive} p.332 Theorem 13, and \citep{delyon2000stochastic} p.31 Theorem 15, p.59 Theorem 25, the usual assumptions are given as
\begin{enumerate}
    \item[(B1)] Function $h:\Theta\to\R^d$ is continuous on $\Theta$, there exists a non-negative $C^1$ function $V$ such that $\langle \nabla V(\theta),h(\theta)\rangle \leq 0, \forall \theta\in\Theta$ and the set $\cS=\{\theta; \langle \nabla V(\theta),h(\theta)\rangle = 0\}$ is such that $V(\cS)$ has empty interior. Also, $V(\theta)$ tends to $+\infty$ if $\theta \to \partial \Theta$, where $\partial\Theta$ is the boundary of $\Theta$, or $|\!|\theta|\!|_2 \to \infty$. There exists a compact set $\mathcal{K}\subset \Theta$ such that $\langle \nabla V(\theta),h(\theta)\rangle < 0$ if $\theta\notin \mathcal{K}$; 
    \item[(B2)] For every $\theta$, there exist a function $v_{\theta}(x)$ such that the Poisson equation
    \begin{equation}\label{eqn:possion_b2}
       v_{\theta}(x) \!-\! \E[v_{\theta}(X_{t+1})|X_t=x,\theta_t=\theta] \!=\! H(\theta, X) \!-\! h(\theta).
    \end{equation}
    For any compact set $\mathcal{C} \subset \Theta$, 
    \begin{equation}\label{eqn:boundedness}
        \sup_{\theta\in\mathcal{C},x\in\cV} \|H(\theta,x)\|_2 + \|v_{\theta}(x)\|_2 < \infty.
    \end{equation}
    There exists a continuous function $\phi_{\mathcal{C}}$, $\phi_{\mathcal{C}}(0)=0$, such that for any $\theta,\theta'\in\mathcal{C}$,
    \begin{equation}\label{eqn:l-lipschitz}
        \sup_{X\in\cV} \|P_{\theta}v_{\theta}(x) - P_{\theta'}v_{\theta'}(x)\|_2 \leq \phi_{\mathcal{C}}\left(\| \theta-\theta'\|_2\right).
    \end{equation}
    \item[(B3)] The step size follows $\gamma_t\geq 0, \sum_{t\geq 1} \gamma_t = \infty, \sum_{t\geq 1} \gamma_t^2 < \infty$ and $\sum_{t\geq 1} |\gamma_{t+1} - \gamma_{t}| < \infty$.
    \item[(B4)] Assume $\theta_t$ converges to some limit $\theta^*\in \cS$. Function $h$ is $C^1$ in some neighborhood of $\theta^*$ with first derivatives Lipschitz, and matrix $\nabla h(\theta^*)$ has all its eigenvalues with negative real part.

\end{enumerate}
Then, we have the following convergence and CLT result.
\begin{theorem}\label{Thm:CLT_SA}\citep{benveniste2012adaptive,delyon2000stochastic,fort2015central}
Assume $\theta_t$ is given by the SA iteration \eqref{eqn:SA_iteration} that satisfies assumptions (B1) -- (B3) above, then iterate $\theta_t$ converges almost surely to the set $\cS$ defined in (B1). Moreover, with additional assumption (B4), we have
\begin{equation}\label{eqn:error2}
    \frac{1}{\sqrt{\gamma_t}}\cdot\left(\theta_t - \theta^*\right) \xrightarrow[t \to \infty]{Dist} \cN(0, \mV_{\!\!X}),
\end{equation}
where covariance matrix $\mV_{\!\!X}$ is the unique solution to the following Lyapunov equation:
\begin{equation}\label{eqn:Lyapunov_equation}
    \!\!\begin{cases}
    \mSigma_X \!+\! \mK\mV_{\!\!X} \!+\! \mV_{\!\!X}\mK^T\!=\! \vzero &\text{if}~ \alpha \!\in\! (\frac{1}{2},1), \\
    \mSigma_X \!+\! \left(\mK\!+\!\frac{\mI}{2}\right)\!\!\mV_{\!\!X} \!+\! \mV_{\!\!X}\!\!\left(\mK\!+\!\frac{\mI}{2}\right)^T \!=\! \vzero &\text{if}~ \alpha = 1. \\
    \end{cases}
\end{equation}
Here, $\mK \triangleq \nabla h(\theta^*)$ and $\mSigma_X \triangleq \mSigma_X(H(\theta^*,\cdot))$ is the asymptotic covariance matrix as in \eqref{eqn:asymptotic covariance matrix}, evaluated at function $H(\theta^*,\cdot)$. 

In addition, for averaged iterates $\Bar{\theta}_t \triangleq \frac{1}{t}\sum_{i=0}^{t-1}\theta_t$, we still have
$\bar{\theta}_t \xrightarrow[t \to \infty]{a.s.} \theta^*$, and
\begin{equation}\label{eqn:error_pr_averaging2}
    \sqrt{t}\cdot(\bar{\theta}_t-\theta^*) \xrightarrow[t \to \infty]{Dist} \cN(0, \mV'_{\!\!X}),
\end{equation}
where $\mV'_{\!\!X} = \mK^{-1}\mSigma_X(\mK^{-1})^T$ with the same matrices $\mK$ and $\mSigma_X$ as in \eqref{eqn:Lyapunov_equation}. \qed
\end{theorem}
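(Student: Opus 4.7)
Since Theorem~\ref{Thm:CLT_SA} is attributed to the cited works \citep{benveniste2012adaptive,delyon2000stochastic,fort2015central}, the most direct strategy is to verify that the stated hypotheses (B1)--(B4) match the hypotheses in those references and invoke their theorems. Here is the outline of the argument I would carry out to make this self-contained.

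\textbf{Almost sure convergence.} Under (B1)--(B3) I would run the ODE method. The Lyapunov function $V$ in (B1) is a decreasing quantity for the mean-field flow $\dot \theta = h(\theta)$, and the set $\cS$ is exactly its equilibrium set. To pass from the flow to the iteration \eqref{eqn:SA_iteration}, I decompose the noise $H(\theta_t,X_{t+1}) - h(\theta_t)$ via the Poisson equation \eqref{eqn:possion_b2}, writing it as a martingale increment $v_{\theta_t}(X_{t+1}) - \E[v_{\theta_t}(X_{t+2})\mid \cF_{t+1}]$ plus telescoping and remainder terms. The uniform bound \eqref{eqn:boundedness} controls the remainders, while \eqref{eqn:l-lipschitz} ensures the telescoping sums are summable once we use the step-size conditions in (B3). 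A standard Robbins--Siegmund type argument then yields $\theta_t \to \cS$ a.s.

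\textbf{CLT for non-averaged iterates.} Around $\theta^*$ I linearize:
\begin{equation*}
\theta_{t+1} - \theta^* = (\mI + \gamma_{t+1}\mK)(\theta_t-\theta^*) + \gamma_{t+1}\xi_{t+1} + o(\gamma_{t+1}),
\end{equation*}
where $\xi_{t+1} = H(\theta^*,X_{t+1}) - h(\theta^*)$ and $\mK = \nabla h(\theta^*)$ has eigenvalues with negative real part by (B4). Using the Poisson equation to rewrite $\xi_{t+1}$ as a martingale difference plus boundary terms (again controlled by (B2)), I would apply the martingale CLT to $(\theta_t-\theta^*)/\sqrt{\gamma_t}$. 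The per-step martingale covariance is $\Sigma_X$ defined via \eqref{eqn:asymptotic covariance matrix}, because the sum of the Poisson-equation martingale increments has precisely the asymptotic covariance of $\sum g(X_s)$. The Lyapunov equation in \eqref{eqn:Lyapunov_equation} is then identified by the standard heuristic of setting the time derivative of the normalized covariance matrix to zero: the drift pull-back by $\mK$ balances the noise injection $\Sigma_X$. For $\alpha=1$ the extra $\mI/2$ appears because the rescaling factor $1/\sqrt{\gamma_t}$ grows at the same rate as the mean dynamics, contributing an additional $\tfrac{1}{2}$ to the effective drift.

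\textbf{Averaged iterates.} For the Polyak--Ruppert statement \eqref{eqn:error_pr_averaging2}, I would telescope the linearized recursion, solve formally to get $\theta_t - \theta^* \approx -\mK^{-1}\xi_t$ in a weak sense, and average. The asymptotic covariance of $\sqrt{t}\,\bar\theta_t$ is then $\mK^{-1}\Sigma_X(\mK^{-1})^T$, obtained either by a direct CLT for the time-averaged Poisson-equation martingale differences or by invoking the general averaging theorem of Polyak--Juditsky.

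\textbf{Main obstacle.} The delicate step is handling the Markovian noise: the state $X_{t+1}$ is correlated with past iterates through the parameter-dependent kernel $P_{\theta_t}$. The Poisson-equation device in (B2) is precisely the tool that converts this correlated noise into a martingale difference modulo boundary terms, but to apply it one needs the uniform bounds \eqref{eqn:boundedness} and the Lipschitz continuity in $\theta$ of the solution \eqref{eqn:l-lipschitz} in order to bound cross-terms between $\theta$-updates and changes in $v_\theta$. Carefully tracking these remainders, and showing they are negligible at the $\sqrt{\gamma_t}$ scale for $\alpha \in (1/2,1)$ and at the borderline scale for $\alpha=1$, is the most technical part of the argument, and is exactly where the different step-size regimes in \eqref{eqn:Lyapunov_equation} become distinguishable.
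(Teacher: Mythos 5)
Your proposal matches the paper's treatment: the paper gives no independent proof of Theorem \ref{Thm:CLT_SA}, stating it as a known result imported from \citep{benveniste2012adaptive,delyon2000stochastic,fort2015central}, which is precisely your primary strategy of checking that (B1)--(B4) coincide with the hypotheses in those references and invoking their theorems. Your accompanying sketch (ODE method driven by the Lyapunov function in (B1), Poisson-equation decomposition of the Markovian noise into martingale differences plus remainders controlled by \eqref{eqn:boundedness} and \eqref{eqn:l-lipschitz}, martingale CLT with the extra $\mI/2$ drift at $\alpha=1$ arising from the per-step growth of the scaling $1/\sqrt{\gamma_t}$, and Polyak--Ruppert averaging yielding $\mV'_{\!\!X}=\mK^{-1}\mSigma_X(\mK^{-1})^T$) is a faithful outline of how those references actually establish the result, so nothing further is required.
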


\section{Proof of Lemma \ref{Lemma:CLT_SGD}}\label{app: proof of clt lemma}
To prove Lemma \ref{Lemma:CLT_SGD} with existing Theorem \ref{Thm:CLT_SA}, we need to show that (A1) -- (A5) is a special case of (B1) -- (B4). We list (A1) -- (A5) here for self-contained purpose.

\begin{enumerate}
    \item[(A1)] The step size is given by $\gamma_t = t^{-\alpha}$ for $\alpha \in (1/2,1]$;
    
    \item[(A2)] There exists a unique minimizer $\theta^*$ in the interior of the compact set $\Theta$ with $\nabla f(\theta^*) = 0$, and matrix $\nabla^2 f(\theta^*)$ (resp.\! $\nabla^2 f(\theta^*)-\mI/2$) is positive definite for $a \!\in\! (1/2,1)$ (resp.\! $a \!=\! 1$);
    
    \item[(A3)] Gradients are bounded in the compact set $\Theta$, that is, $~\sup_{\theta \in \Theta}\sup_{i \in [n]} |\!|\nabla F(\theta,i)|\!|_2 < \infty$;
    
    \item[(A4)] For every $z\in[n],\theta\in\R^d$, the solution $\Tilde{F}(\theta,z)\in\R^d$  of the Poisson equation 
    \begin{equation}\label{eqn:poisson_equation}
        \Tilde{F}(\theta,z) \!-\! \E[\Tilde{F}(\theta,X_{t+1})~|~X_t=z] = \nabla F(\theta, z) \!-\! \nabla f(\theta)
    \end{equation}
    exists, and $\sup_{\theta\in\Theta,z\in[n]} \|\Tilde{F}(\theta,z)\|_2<\infty$;
    
    \item[(A5)] The functions $F(\theta,i)$ are $L$-smooth for all $i \in [n]$, that is, $\forall \theta_1,\theta_2 \in \Theta, \forall i \in [n]$, we have $\|\nabla F(\theta_1, i) - \nabla F(\theta_2, i)|\!|_2 \leq L |\!|\theta_1 - \theta_2 \|_2$.
\end{enumerate}

Let $H(\theta,X) \triangleq -\nabla F(\theta,X)$ for function $F(\theta,X)$ defined in \eqref{eqn:objective_function}. Then, we have $h(\theta) \triangleq \E_{X\sim \vpi}[H(\theta,X)] = -\nabla f(\theta)$. By choosing $V(\theta) \triangleq f(\theta)$, we know $\langle\nabla V(\theta),h(\theta)\rangle = -\nabla f(\theta)^2\leq 0$. From (A2) we know $\theta^*$ is the unique minimizer of function $f$, by letting $\mathcal{K} = \{\theta^*\}$,  we have $\langle \nabla V(\theta), h(\theta) \rangle < 0$ when $\theta \notin \mathcal{K}$. Therefore, \textbf{(B1) is satisfied}.

Now we need to check assumption (B2). Assumption (A4) is a direct translation to \eqref{eqn:possion_b2} in (B2), and $\sup_{\theta\in\Theta,z\in[n]} \|\Tilde{F}(\theta,z)\|_2<\infty$, as well as assumption (A3), implies \eqref{eqn:boundedness}. We still need to show \eqref{eqn:l-lipschitz}.
By assuming an $n$-state ergodic Markov chain $\{X_t\}_{t\geq 0}$ ($\theta$-independent) with transition kernel $\mP\in\R^{n \times n}$ and stationary distribution $\vpi$, the solution $\Tilde{F}(\theta,z)$ to the Poisson equation \eqref{eqn:poisson_equation} in (A4) exists and is given as follows.\footnote{In this paper, we only consider $\theta$-independent Markovian inputs. The more general conditions of the $\theta$-dependent Markov chain under which the solution of \eqref{eqn:poisson_equation} in (A4) exists are referred to \citep{delyon2000stochastic} p.71 Theorem 35 or \citep{benveniste2012adaptive} p.217.}
\begin{equation}\label{eqn:solution_poisson_equation2}
    \Tilde{F}(\theta,z) \!=\! \nabla F(\theta, z) \!-\! \nabla f(\theta) \!+\! \sum_{l=1}^n \mP_{z,l}(\nabla F(\theta, l) \!-\! \nabla f(\theta)) \!+\! \sum_{l=1}^n \left[\mP^2\right]_{z,l}(\nabla F(\theta, l) \!-\! \nabla f(\theta)) \!+\! \cdots.
\end{equation}
Next, we can rewrite $\Tilde{F}(\theta,z)$ in the closed form and show that it is Lipschitz continuous and satisfies (20) in assumption (B2). Note that by definition of expectation and Chapman–Kolmogorov equation ($\sum_{k=1}^n\sum_{l=1}^n \mP_{z,k}\mP_{k,l} = \sum_{l=1}^n [\mP^2]_{z,l}$), we have
\begin{equation}\label{eqn:25}
\begin{split}
    \E[\Tilde{F}(\theta,X_{t+1})~|~X_t=z] =& \sum_{l=1}^n \mP_{z,l}(\nabla F(\theta, l) \!-\! \nabla f(\theta)) \!+\! \sum_{l=1}^n \left[\mP^2\right]_{z,l}(\nabla F(\theta, l) \!-\! \nabla f(\theta)) \\
    &+ \sum_{l=1}^n \left[\mP^3\right]_{z,l}(\nabla F(\theta, l) \!-\! \nabla f(\theta)) + \cdots.    
\end{split}
\end{equation}
Then, from \eqref{eqn:solution_poisson_equation2} and \eqref{eqn:25} we have $\Tilde{F}(\theta,z) - \E[\Tilde{F}(\theta,X_{t+1})~|~X_t=z] = \nabla F(\theta,z) - \nabla f(\theta)$, which is exactly \eqref{eqn:poisson_equation} in (A4). Moreover, since $\vone$ and $\vpi$ are the right and left eigenvectors of $\mP$ respectively with eigenvalue $1$, by induction we know 
\begin{equation}\label{eqn:26}
    \mP^k - \vone\vpi^T = (\mP-\vone\vpi^T)^k,\forall ~k\in\Z,~k\geq 1.
\end{equation}
Along with the fact that 
\begin{equation}\label{eqn:27}
    \nabla f(\theta) = \sum_{l=1}^n{\pi_l}\nabla F(\theta,l) = \sum_{l=1}^n [\vone\vpi^T]_{z,l} \nabla F(\theta,l),
\end{equation}
we can further simplify and get a closed form of \eqref{eqn:solution_poisson_equation2}, which is given below.
\begin{equation}\label{eqn:solution_poisson_equation}
\begin{split}
            \Tilde{F}(\theta,z) &= \nabla F(\theta, z) \!-\! \nabla f(\theta) \!+\! \sum_{l=1}^n \mP_{z,l}(\nabla F(\theta, l) \!-\! \nabla f(\theta)) \!+\! \sum_{l=1}^n \left[\mP^2\right]_{z,l}(\nabla F(\theta, l) \!-\! \nabla f(\theta)) \!+\! \cdots\\
        &= \sum_{l=1}^n \left[\mP^0-\vone\vpi^T\right]_{z,l} \nabla F(\theta, l) + \sum_{l=1}^n \left[\mP^1-\vone\vpi^T\right]_{z,l} \nabla F(\theta, l) + \cdots \\
        &= \left(\sum_{k=0}^{\infty}  \sum_{l=1}^n \left[\mP^k-\vone\vpi^T\right]_{z,l} \nabla F(\theta, l)\right) - \nabla f(\theta) \\
        &= \left(\sum_{l=1}^n \left[\sum_{k=0}^{\infty} \left[\mP^k-\vone\vpi^T\right]_{z,l}\right] \nabla F(\theta, l)\right) - \nabla f(\theta) \\
        &= \left(\sum_{l=1}^n \left[\sum_{k=0}^{\infty} \left[\left(\mP-\vone\vpi^T\right)^k\right]_{z,l}\right] \nabla F(\theta, l)\right) - \nabla f(\theta) \\
        &= \sum_{l=1}^n \left[\left(\mI - \mP + \vone\vpi^T\right)^{-1}\right]_{z,l} \nabla F(\theta,l) - \nabla f(\theta),
\end{split}
\end{equation}
where the second equality comes from \eqref{eqn:27} and the fifth equality is from \eqref{eqn:26}. Recall the definition $\mP F(\theta,z) \triangleq \sum_{l=1}^n \mP_{z,l} F(\theta,l)$, we can show that
\begin{equation}\label{eqn:29}
\begin{split}
    &\|\mP\Tilde{F}(\theta,z) - \mP\Tilde{F}(\theta',z)\|_2 \\ =& \left\|\sum_{l=1}^n \mP_{z,l} \left(\Tilde{F}(\theta,l) - \Tilde{F}(\theta',l)\right)\right\|_2 \\
    \leq& \sum_{l=1}^n \mP_{z,l}\|\Tilde{F}(\theta,l) - \Tilde{F}(\theta',l)\|_2 \\
    \leq& \sup_{z\in\cV}\|\Tilde{F}(\theta,z) - \Tilde{F}(\theta',z)\|_2 \\
    \leq& \sup_{z\in\cV}\left\|\sum_{l=1}^n \left[\left(\mI - \mP + \vone\vpi^T\right)^{-1}\right]_{z,l} \left(\nabla F(\theta,l)-\nabla F(\theta',l)\right) \right\|_2 + \|\nabla f(\theta) - \nabla f(\theta')\|_2 \\
    \leq& C\sup_{z\in\cV}\|\nabla F(\theta,z) - \nabla F(\theta',z)\|_2 + \|\nabla f(\theta) - \nabla f(\theta')\|_2 \\
    \leq& (C+1)L|\!|\theta - \theta'|\!|_2
\end{split}
\end{equation}
for some constant $C$ related to matrix $(\mI-\mP+\vone\vpi^T)^{-1}$, where the first and the third inequalities are from triangular inequality and the last inequality comes from assumption (A5). Note that we have 
$$\|\nabla f(\theta)\!-\!\nabla f(\theta')\|_2 \!=\! \left\|\!\sum_{i=1}^n\! \pi_i \left(\nabla F(\theta,i)\!-\!\nabla F(\theta',i)\right)\right\|_2\!\!\leq\!\sup_{z\in\cV} \|\nabla F(\theta,i)\!-\!\nabla F(\theta',i)\|_2 \!\leq\! L\|\theta-\theta'\|_2$$ 
in the last inequality of \eqref{eqn:29}. So $\eqref{eqn:l-lipschitz}$ is shown and \textbf{(B2) is satisfied}. 

For assumption (A1) with respect to the conditions on the step size, we know for $a\in(1/2,1]$, $\sum_{t \geq 1} 1/t^a = \infty$ and $\sum_{t\geq 0} 1/t^{2a} < \infty$. Besides, 
\begin{equation*}
    \gamma_t - \gamma_{t+1} = \frac{1}{t^a} - \frac{1}{(t+1)^a} = \frac{(t+1)^a-t^a}{t^{a}(t+1)^a} \leq \frac{(t+1)^a-t^a}{t^{2a}} \leq \frac{1}{t^{2a}}
\end{equation*}
where the second inequality comes from $(t+1)^a-t^a$ monotone decreasing in $t$ for $a\in(1/2,1]$. Then, we have $\sum_{t\geq 1}|\gamma_t - \gamma_{t+1}| \leq \sum_{t\geq 1} 1/t^{2a} < \infty$. Then, \textbf{(B3) is satisfied}. 

Since (B1) -- (B3) are satisfied and (A2) assumes unique minimizer such that $\mathcal{K} = \{\theta^*\}$, from Theorem \ref{Thm:CLT_SA} we know $ \theta_t \xrightarrow[t \to \infty]{a.s.} \theta^*$. Along with assumption (A2) on the positive definite matrix $\nabla^2 f(\theta^*)$, \textbf{(B4) is satisfied}.

Therefore, (A1)-(A5) implies (B1) -- (B4) and all the results from Theorem \ref{Thm:CLT_SA} can be carried over to Lemma \ref{Lemma:CLT_SGD}. 

\section{Discussion on Polyak-Lojasiewicz Inequality and Positive Definite Matrix $\nabla^2 f(\theta^*)$}\label{app:discussion_pl_pdm}
In this part, we discuss the strictness of the condition on the objective function $f$ between Polyak-Lojasiewicz (P-L) inequality and our assumption (A2) - positive definite matrix $\nabla^2 f(\theta^*)$. We say that if a scalar-valued function $f$ satisfies $\mu$-P-L inequality, then for any $\theta\in\R^d$, the following condition holds:
\begin{equation}\label{eqn:PLinequality}
    \frac{1}{2}\|\nabla f(\theta)\|_2^2\geq \mu(f(\theta) - f(\theta^*)),
\end{equation}
where $\nabla f(\theta)\in \R^d$, $f(\theta^*) = \min_{\theta\in\R^d} f(\theta)$ and the minimizer $\theta^*$ belongs to a  non-empty solution set. We define a new function $$g(\theta)\triangleq \frac{1}{2} \|\nabla f(\theta)\|_2^2 - \mu(f(\theta) - f^*).$$ 
Then, \eqref{eqn:PLinequality} is equivalent to saying $\min_{\theta} g(\theta) \geq 0$, and the necessary condition to ensure that $\theta^*$ is the local minimizer is $\nabla^2 g(\theta^*) \geq_L \vzero$ (e.g., Chapter 1.2 \citep{liberzon2011calculus}). We have
\begin{equation*}
    \nabla^2 g(\theta) = \left(\nabla^2 f(\theta) - \mu\mI\right)\nabla^2 f(\theta) + \mM \otimes \nabla f(\theta),
\end{equation*}
where matrix $\mM$ is a 3D matrix with dimension $d\times d \times d$ and $\otimes$ is the tensor product. Since $\nabla f(\theta^*)=0$, we have $\mM \otimes (\nabla f(\theta^*)) = 0$. Then, $\nabla^2 g(\theta^*) \geq_L \vzero$ implies $(\nabla^2 f(\theta^*))^2 \geq_L \mu \nabla^2 f(\theta^*)$. Denote $\lambda_i\geq 0, i=1,2,\cdots,d$ the eigenvalues of matrix $\nabla^2 f(\theta^*)$, by spectral decomposition we need $\lambda_i\geq \mu$ or $\lambda_i = 0$ for each $i$. If all the eigenvalues of $\nabla^2 f(\theta^*)$ are no smaller than $\mu$, then $\nabla^2 f(\theta^*)$ is a positive definite matrix by definition. For example, $\mu$-strongly convex objective function $f$ satisfies both P-L inequality and $\nabla^2 f(\theta^*)$ being positive definite. If there exists at least one eigenvalue with zero value, then $\nabla^2 f(\theta^*)$ is no longer positive definite. 

On the other hand, positive definite matrix $\nabla^2 f(\theta^*)$ does not necessarily imply P-L inequality. We give a toy example of objective function $f$ that satisfies positive definite matrix $\nabla^2 f(\theta^*)$ while fails to satisfy P-L inequality. For some smooth convex function 
$$f(\theta) = \sqrt{\|\theta\|_2^2 + 1}\geq 1,$$
we know $$f'(\theta)=\frac{\theta}{\sqrt{\|\theta\|_2^2+1}},~~~~ f''(\theta) = \frac{1}{\sqrt{\|\theta\|_2^2+1}} \mI - \frac{1}{(\|\theta\|_2^2+1)^{3/2}}\theta\theta^T$$ 
for any $\theta \in \R^d$. Since $\theta^* = \vzero$, $f(\theta^*) = 1$ and $f''(\theta^*) = \mI$ is a positive definite matrix such that this objective function satisfies our assumption (A2). However, for any $\theta\in\R^d$, there always exists a constant $\epsilon>0$ such that
$$\epsilon (f(\theta)-f(\theta^*)) \geq \|f'(\theta)\|_2^2,$$ 
which fails to satisfy \eqref{eqn:PLinequality}. Therefore, there is no inclusive relationship between P-L inequality and positive definite matrix $\nabla^2 f(\theta^*)$. Both of the conditions can cover different types of functions.


\section{Proof of Theorem \ref{Thm:main_result}}\label{app:main_result}
\subsection{Proof of Theorem \ref{Thm:main_result} (i)}
We first prove the direction that efficiency ordering implies Loewner ordering. For any vector $\vv \triangleq [v_1,v_2,\cdots,v_d]^T \in \R^d\backslash \{\vzero\}$ and vector-valued function $\vf(X) \triangleq [f_1(X), f_2(X), \cdots, f_d(X)]^T$, with $\mSigma(\vf)$ defined in \eqref{eqn:asymptotic covariance matrix} we can get
\begin{equation}\label{eqn:direction_co_matrix}
\begin{split}
    \vv^T \mSigma(\vf) \vv &= \lim_{t\to\infty} \vv^T\mSigma(\vf,t)\vv \\
    &= \lim_{t\to\infty} \frac{1}{t}\E\left\{\vv^T\left[\sum_{s=1}^t\left(\vf(X_s) - \E_{\vpi}\left[\vf(X) \right]\right)\right] \left[\sum_{s=1}^t\left(\vf(X_s) - \E_{\vpi}\left[\vf(X) \right]\right)\right]^T\vv  \right\} \\ 
    &= \lim_{t\to\infty} \frac{1}{t}\E\left\{\left[\sum_{s=1}^t\left(g_{\vv,\vf}(X_s) - \E_{\vpi}\left[g_{\vv,\vf}(X) \right]\right)\right]^2 \right\} \\ 
    &= \sigma^2(g_{\vv,\vf}),
\end{split}
\end{equation}
where function $$g_{\vv,\vf}(X) \triangleq v_1f_1(X) + v_2f_2(X) + \cdots + v_df_d(X)$$
is a linear combination of $f_i(X)$. For two random processes with efficiency ordering and an arbitrary vector-valued function $\vf$, $\sigma^2_X(g_{\vv,\vf}) \leq \sigma^2_Y(g_{\vv,\vf})$ for any vector $\vv$, which is exactly $\vv^T \mSigma_X(\vf) \vv \leq \vv^T \mSigma_Y(\vf) \vv$. Then, by definition of Loewner ordering, we have $\mSigma_X(\vf) \leq_L \mSigma_Y(\vf)$ for any vector-valued function $\vf$.

On the other direction, let $\vv = [1,0,\cdots,0]^T$ and vector-valued function $\vf(X) = [g(X), 0, \cdots, 0]^T$, where $g$ can be any scalar-valued function. Then, \eqref{eqn:direction_co_matrix} can be written as
$\vv^T \mSigma(\vf) \vv = \sigma^2(g)$ and it holds for any scalar-valued function $g$. For two Markov chains $\{X_t\}, \{Y_t\}$ with $\mSigma_X(\vf) \leq_L \mSigma_Y(\vf)$ for any vector-valued function $\vf$, we have $\vv^T \mSigma_X(\vf) \vv \leq \vv^T \mSigma_Y(\vf) \vv$ for any vector $\vv$. Then, with $\vv = [1,0,\cdots,0]^T$ we show that $\sigma^2_X(g) \leq \sigma^2_Y(g)$ for any scalar-valued function $g$, which proves the efficiency ordering.

\subsection{Proof of Theorem \ref{Thm:main_result} (ii)}
We first introduce the closed form of the solution $\mV$ to the Lyapunov equation in Lemma \ref{Lemma:CLT_SGD} and the useful lemma on Loewner ordering.
\begin{lemma}[\citep{chellaboina2008nonlinear} Theorem 3.16 and (3.160)]
If all the eigenvalues of matrix $\mK$ have negative real part, then for every positive-definite matrix $\mU$ there exists a unique positive-definite matrix $\mV$ satisfying $\mU + \mK\mV + \mV\mK^T = \vzero$. The explicit solution $\mV$ is given as
\begin{equation}\label{eqn:solution_lyapunov}
    \mV = \int_0^{\infty} e^{\mK t}\mU e^{(\mK^T)t} dt.
\end{equation}
\end{lemma}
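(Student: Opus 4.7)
The plan is to prove this standard Lyapunov-theory result in four steps: (a) convergence of the improper integral, (b) direct verification that the stated $\mV$ satisfies the equation, (c) positive definiteness, and (d) uniqueness.

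For step (a), since every eigenvalue of $\mK$ has strictly negative real part, there exist constants $C>0$ and $\alpha>0$ such that $\|e^{\mK t}\|_2 \leq C e^{-\alpha t}$ for all $t\geq 0$; this follows from the Jordan decomposition together with the standard bound on the exponential of a Jordan block. Because $\|e^{\mK^T t}\|_2 = \|e^{\mK t}\|_2$, the matrix-valued integrand is dominated by $C^2 \|\mU\|_2\, e^{-2\alpha t}$, so the integral defining $\mV$ converges absolutely.

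For step (b), I would differentiate under the integral sign. Using $\tfrac{d}{dt}\bigl[e^{\mK t}\mU e^{\mK^T t}\bigr] = \mK e^{\mK t}\mU e^{\mK^T t} + e^{\mK t}\mU e^{\mK^T t}\mK^T$, the fundamental theorem of calculus yields
\begin{equation*}
    \mK\mV + \mV\mK^T = \lim_{t\to\infty} e^{\mK t}\mU e^{\mK^T t} - \mU = -\mU,
\end{equation*}
where the boundary term at infinity vanishes by the exponential decay from step (a). For step (c), given any nonzero $\vx \in \R^d$, one has $\vx^T \mV \vx = \int_0^\infty (e^{\mK^T t}\vx)^T \mU (e^{\mK^T t}\vx)\,dt > 0$, since $e^{\mK^T t}$ is invertible so the vector $e^{\mK^T t}\vx$ is nonzero for every $t$, and $\mU$ is positive definite; this makes the integrand strictly positive on $[0,\infty)$.

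The main obstacle is step (d), uniqueness. If $\mV_1, \mV_2$ both solve the equation, then $\mD \triangleq \mV_1 - \mV_2$ solves the homogeneous equation $\mK\mD + \mD\mK^T = \vzero$. Vectorizing gives $(\mI \otimes \mK + \mK \otimes \mI)\,\mathrm{vec}(\mD) = \vzero$; the eigenvalues of this Kronecker-sum operator are exactly the pairwise sums $\lambda_i(\mK) + \lambda_j(\mK)$, each of which has strictly negative real part and is therefore nonzero. Hence the operator is invertible, forcing $\mD = \vzero$. An equivalent route is to invoke the Sylvester--Rosenblum theorem with $\mA = \mK$ and $\mB = -\mK^T$: the spectra of $\mA$ and $\mB$ are disjoint because eigenvalues of $\mK$ have negative real part while those of $-\mK^T$ have positive real part, which immediately yields uniqueness.
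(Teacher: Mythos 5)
Your proof is correct and complete, but note that the paper itself contains no proof of this lemma to compare against: it is imported as a known fact, cited to \citep{chellaboina2008nonlinear} (Theorem 3.16 and equation (3.160)), and used as a black box in the proof of Theorem 4.2. What you have written is essentially the standard textbook argument behind that citation: the exponential bound $\|e^{\mK t}\|_2 \leq Ce^{-\alpha t}$ from the Jordan form gives absolute convergence, differentiating $e^{\mK t}\mU e^{\mK^T t}$ and applying the fundamental theorem of calculus verifies the equation, and the quadratic-form computation with invertibility of $e^{\mK^T t}$ gives strict positivity. The one place you diverge from the classical treatment is uniqueness: the textbook route notes that any solution $\mD$ of the homogeneous equation satisfies $\tfrac{d}{dt}\bigl[e^{\mK t}\mD e^{\mK^T t}\bigr] = e^{\mK t}(\mK\mD + \mD\mK^T)e^{\mK^T t} = \vzero$, so $\mD = e^{\mK t}\mD e^{\mK^T t}$ for all $t \geq 0$, and letting $t \to \infty$ forces $\mD = \vzero$; your Kronecker-sum vectorization (equivalently, Sylvester--Rosenblum) is an equally valid alternative that buys slightly more, since it identifies the full spectrum of the Lyapunov operator as $\{\lambda_i(\mK)+\lambda_j(\mK)\}$ rather than just its injectivity. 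One cosmetic omission: the lemma asserts $\mV$ is positive \emph{definite}, which in this context includes symmetry, so you should add the one-line observation that $\mV^T = \int_0^{\infty} e^{\mK t}\mU^T e^{\mK^T t}\,dt = \mV$ because $\mU$ is symmetric; your step (c) establishes only positivity of the quadratic form.
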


\begin{lemma}[\citep{mitra2010matrix} Theorem 8.2.7]\label{lemma:loewener_order_property}
If two real matrix $\mA,\mB \in \R^{m \times m}$ are Loewner ordered $\mA \leq_L \mB$, then $\mC\mA\mC^T \leq_L \mC\mB\mC^T$ for any real matrix $\mC\in \R^{m\times m}$.
\end{lemma}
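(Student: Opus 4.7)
The plan is to prove this by reducing it to the defining inequality of the Loewner ordering via a simple change of variables. Recall from Section~\ref{Section:Model description} that $\mA \leq_L \mB$ is defined by $\vx^T(\mA-\mB)\vx \leq 0$ for every $\vx\in\R^m$ (so $\mB-\mA$ is positive semidefinite). I would therefore start from an arbitrary test vector $\vy\in\R^m$ and compute
\begin{equation*}
\vy^T\bigl(\mC\mA\mC^T - \mC\mB\mC^T\bigr)\vy = \vy^T\mC(\mA-\mB)\mC^T\vy = (\mC^T\vy)^T(\mA-\mB)(\mC^T\vy),
\end{equation*}
using only bilinearity and $(\mC^T)^T = \mC$. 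Setting $\vx := \mC^T\vy \in \R^m$ turns the right-hand side into $\vx^T(\mA-\mB)\vx$, which is non-positive by the hypothesis $\mA \leq_L \mB$. Since $\vy$ was arbitrary, this gives $\mC\mA\mC^T \leq_L \mC\mB\mC^T$.

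There is essentially no obstacle. The only subtlety worth flagging is that the map $\vy\mapsto\mC^T\vy$ need not be surjective when $\mC$ is singular, but this does not matter: the defining inequality of $\leq_L$ quantifies over \emph{all} $\vx\in\R^m$, so it holds in particular at the specific point $\mC^T\vy$, regardless of the rank of $\mC$. This matches the statement, which requires no invertibility assumption on $\mC$. In the broader arc of the paper, this is exactly the tool needed to complete the proof of Theorem~\ref{Thm:main_result}(ii): combined with the integral representation $\mV_X = \int_0^\infty e^{\mK t}\mSigma_X e^{\mK^T t}\,dt$ from the preceding Lyapunov lemma, one applies the lemma pointwise in $t$ with $\mC = e^{\mK t}$ and then integrates over $t \geq 0$, so that a Loewner ordering $\mSigma_X \leq_L \mSigma_Y$ transfers to $\mV_X \leq_L \mV_Y$; for the averaged-iterate case $\mV'_X = \mK^{-1}\mSigma_X(\mK^{-1})^T$, a single application with $\mC = \mK^{-1}$ suffices.
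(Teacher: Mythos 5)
Your proof is correct: the one-line congruence computation $\vy^T\mC(\mA-\mB)\mC^T\vy=(\mC^T\vy)^T(\mA-\mB)(\mC^T\vy)\le 0$ is the standard argument, and your observation that no invertibility of $\mC$ is needed—the Loewner definition quantifies over \emph{all} test vectors, so surjectivity of $\vy\mapsto\mC^T\vy$ is irrelevant—correctly disposes of the only subtlety. Note that the paper itself gives no proof, importing the lemma verbatim as Theorem 8.2.7 of \citep{mitra2010matrix}, and your account of how it feeds into Theorem \ref{Thm:main_result}(ii) (applied pointwise in $t$ with $\mC=e^{\mK t}$ under the integral representation of $\mV_{\!\!X}$, and once with $\mC=\mK^{-1}$ for the averaged iterates) matches the paper's use in Appendix \ref{app:main_result} exactly.
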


From Theorem \ref{Thm:main_result} (i), we know efficiency ordering $\sigma^2_X(g) \leq \sigma^2_Y(g)$ for any scalar-valued function $g$ leads to Loewner ordering $\mSigma_X(\vf) \leq_L \mSigma_Y(\vf)$ for any vector-valued function $\vf$. Consider two random process $\{X_t\}_{t\geq 0}, \{Y_t\}_{t\geq 0}$ with efficiency ordering $X \geq_E Y$, we have $\mSigma_X \leq_L \mSigma_Y$. By Lemma \ref{lemma:loewener_order_property} and \eqref{eqn:Lyapunov_equation}, for any $t$ in \eqref{eqn:solution_lyapunov}, we have $e^{\mK t}\mSigma_X e^{(\mK^T)t} \leq_L e^{\mK t}\mSigma_Y e^{(\mK^T)t}$. Then, for any vector $\vv\in\R^d\backslash\{\vzero\}$, we have 
\begin{equation*}
    \vv^T\mV_X\vv = \int_0^{\infty} \vv^Te^{\mK t}\mSigma_X e^{(\mK^T)t}\vv dt \leq \int_0^{\infty} \vv^Te^{\mK t}\mSigma_Y e^{(\mK^T)t}\vv dt = \vv^T\mV_Y\vv,
\end{equation*}
such that $\mV_X \leq_L \mV_Y$ by definition of Loewner ordering. Similarly, for averaged iterates, we have $\mV'_X \leq_L \mV'_Y$ immediately from Lemma \ref{lemma:loewener_order_property} because $\mSigma_X \leq_L \mSigma_Y$ and $\mV'_X = \mK^{-1}\mSigma_X(\mK^{-1})^T$, $\mV'_Y = \mK^{-1}\mSigma_Y(\mK^{-1})^T$.

\section{Additional Convergence and CLT results for SGD with Constant Step Size and Quadratic Objective Function}\label{appendix:constant_step_size}
Lemma \ref{Lemma:CLT_SGD} has shown the CLT result for general SGD iteration \eqref{eqn:update_rule_general} with diminishing step size. A natural question would be if any CLT result exists for the same SGD iteration with constant step size $\gamma$. For the \textit{i.i.d} inputs and a special case of the iteration
\begin{equation}\label{eqn:35}
    \theta_{t+1} = \theta_t - \gamma(\mA(X_{t+1}) \theta_t - b(X_{t+1})),
\end{equation}
which is usually called linear stochastic approximation in the stochastic approximation literature, it has been studied in \citep{dieuleveut2020bridging,mou2020linear} that $\theta_t$ forms a Markov chain and its time-averaged iterate $\bar{\theta}_t = \frac{1}{t}\sum_{i=0}^{t-1}\theta_i$ converges to the minimizer $\theta^*$ almost surely and a CLT result is given in Theorem 1 \citep{mou2020linear}. However, for Markovian inputs $\{X_t\}_{t\geq 0}$, V. Borkar and S. Meyn mentioned in \citep{borkar2021ode} that the behavior of $(\theta_t,X_t)$ itself is still an open problem under the SGD iteration \eqref{eqn:update_rule_general}. In this part, we propose Lemma \ref{lemma:CLT_cons_stepsize} that studies the special case of the SGD iteration \citep{chen2020explicit} (which studied the diminishing step size) and complements the CLT result for constant step size with time-averaged iterates.

We consider a quadratic objective function 
$$f(\theta) = \frac{1}{n}\sum_{i=1}^n \left(\frac{1}{2}\theta^T\mA\theta - \theta^T\vb(i)\right),$$
where matrix $\mA\in\R^{d\times d}$ is positive definite and vector $\vb(X)\in\R^{d}$ only depends on the state $X\in\cV$ of the Markovian input. Then, the SGD iteration studied in \citep{chen2020explicit} is given as 
\begin{equation}\label{eqn:linear_SA2}
    \theta_{t+1} = \theta_t - \gamma_{t+1}\left(\mA\theta_t - \vb(X_{t+1})\right).
\end{equation}
Here, we study the constant step size $\gamma_t = \gamma, ~\forall t\geq 0$. Define $\bar{\vb} \triangleq \sum_{i\in[n]}\vb(X_i)\pi_i$. The minimizer is given by $\theta^* = \mA^{-1}\bar{\vb}$ such that $\nabla f(\theta^*) = 0$. Then, we have the following CLT result for the SGD update rule \eqref{eqn:linear_SA2} with constant step size and Markovian input $\{X_t\}_{t\geq 0}$.
\begin{lemma}\label{lemma:CLT_cons_stepsize}
Consider the update rule \eqref{eqn:linear_SA2} with positive definite matrix $\mA$ and constant step size $\gamma$ such that $0<\gamma<2/\|\mA\|_2$. Then, for averaged iterates $\bar{\theta}_t = \frac{1}{t}\sum_{i=0}^{t-1}\theta_i$, we have 
\begin{equation}\label{eqn:clt_constant}
    \bar{\theta}_t \xrightarrow[t \to \infty]{a.s.} \theta^*,~~~\text{and}~~~\sqrt{t}(\bar{\theta}_t-\theta^*) \xrightarrow[t \to \infty]{Dist} \cN(0, \mV_{\!\!X}),
\end{equation}
where $\mV_{\!\!X} = \mA^{-1}\mSigma_X(\mA^{-1})^T$ and $\mSigma_X = \lim_{t\to\infty}\frac{1}{t}\E[B_t B_t^T]$, $B_t \triangleq \sum_{s=1}^t (\vb(X_s)-\bar{\vb})$.
\end{lemma}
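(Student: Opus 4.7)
\textbf{Proof plan for Lemma \ref{lemma:CLT_cons_stepsize}.} The plan is to derive a telescoping identity for the averaged iterate that reduces the whole statement to (a) an ergodic law / CLT for the partial sums $B_t = \sum_{s=1}^t(\vb(X_s)-\bar\vb)$, which are readily available for ergodic Markov chains, plus (b) uniform second-moment control of the centered iterate $\tilde\theta_t \triangleq \theta_t - \theta^*$. Subtracting $\theta^* = \mA^{-1}\bar\vb$ from \eqref{eqn:linear_SA2} with constant $\gamma$ gives the affine recursion
\begin{equation*}
\tilde\theta_{t+1} = (\mI - \gamma\mA)\tilde\theta_t + \gamma\bigl(\vb(X_{t+1}) - \bar\vb\bigr).
\end{equation*}
Summing this identity from $t=0$ to $T-1$ and rearranging isolates $\mA$ multiplied by the cumulative sum, yielding the key identity
\begin{equation*}
\mA\sum_{t=0}^{T-1}\tilde\theta_t \;=\; \frac{1}{\gamma}\bigl(\tilde\theta_0 - \tilde\theta_T\bigr) + B_T,
\end{equation*}
which, after dividing by $T$, gives $\mA\bar{\tilde\theta}_T = \tfrac{1}{\gamma T}(\tilde\theta_0-\tilde\theta_T) + B_T/T$. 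This is the constant-step-size analogue of the Polyak--Ruppert manipulation and does all the structural work.

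\textbf{Second-moment control of $\tilde\theta_T$.} Unrolling the recursion gives $\tilde\theta_T = (\mI-\gamma\mA)^T\tilde\theta_0 + \gamma\sum_{s=1}^T (\mI-\gamma\mA)^{T-s}(\vb(X_s)-\bar\vb)$. Under $0<\gamma<2/\|\mA\|_2$, the matrix $\mI-\gamma\mA$ has spectral radius strictly less than $1$, hence $\|(\mI-\gamma\mA)^k\|_2 \le C\rho^k$ for some $\rho\in(0,1)$. Since $\vb(X_s)-\bar\vb$ is uniformly bounded (finite state space), this gives $\sup_T\E\|\tilde\theta_T\|_2^2 < \infty$ --- the step I expect to be slightly delicate because one also has to verify that the Markovian correlations in the noise sum do not inflate this bound, which follows by standard geometric ergodicity arguments (or by coupling, or by a direct computation once $\E\|(\mI-\gamma\mA)^{T-s}(\vb(X_s)-\bar\vb)\|_2^2$ is summed with exponentially decaying weights).

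\textbf{Almost sure convergence.} From $\sup_T\E\|\tilde\theta_T\|_2^2 <\infty$ and Markov's inequality, $\sum_T \Pr(\|\tilde\theta_T\|_2 > \varepsilon T)\le \sum_T O(1/T^2)<\infty$, so Borel--Cantelli gives $\tilde\theta_T/T\to 0$ a.s. The ergodic theorem for the Markov chain $\{X_t\}$ gives $B_T/T \to 0$ a.s. Plugging both into the identity and applying $\mA^{-1}$ (which exists by positive definiteness) yields $\bar{\tilde\theta}_T \to 0$ a.s., i.e., $\bar\theta_T\to\theta^*$.

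\textbf{CLT.} Multiplying the identity by $\sqrt{T}$ gives
\begin{equation*}
\sqrt{T}\,\mA\,\bar{\tilde\theta}_T \;=\; \frac{1}{\gamma\sqrt{T}}(\tilde\theta_0-\tilde\theta_T) + \frac{B_T}{\sqrt{T}}.
\end{equation*}
The bound $\E\|\tilde\theta_T\|_2^2=O(1)$ makes the first term vanish in $L^2$ (hence in probability). The multivariate Markov-chain CLT (Theorem~\ref{Thm:CLT multivariate mcmc} applied to $\vg=\vb$) gives $B_T/\sqrt{T}\xrightarrow{d}\cN(0,\mSigma_X)$. Slutsky's theorem then yields $\sqrt{T}\,\mA\bar{\tilde\theta}_T\xrightarrow{d}\cN(0,\mSigma_X)$, and left-multiplying by $\mA^{-1}$ produces $\sqrt{T}\,\bar{\tilde\theta}_T\xrightarrow{d}\cN(0,\mA^{-1}\mSigma_X(\mA^{-1})^T)$, which is exactly \eqref{eqn:clt_constant}. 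The main obstacle is the uniform-in-$T$ second-moment bound in step~two, which pins the argument down and is the place where the step-size restriction $\gamma<2/\|\mA\|_2$ is actually used.
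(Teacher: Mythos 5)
Your proposal is correct and is in substance the same proof as the paper's: both arguments reduce the lemma to the multivariate Markov-chain CLT for $B_T/\sqrt{T}$ (Theorem \ref{Thm:CLT multivariate mcmc}) plus the contraction $\|\mI-\gamma\mA\|_2<1$ to kill the remainder, and your telescoping identity $\mA\sum_{t=0}^{T-1}\tilde{\theta}_t = \gamma^{-1}(\tilde{\theta}_0-\tilde{\theta}_T)+B_T$ is just a tidier derivation of the paper's decomposition \eqref{eqn:pr-averaging}, which the paper obtains by unrolling the recursion and interchanging sums via $\sum_{i=0}^{t-1}(\mI-\gamma\mA)^i=(\gamma\mA)^{-1}\left(\mI-(\mI-\gamma\mA)^{t}\right)$. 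One small simplification you missed: since the state space is finite and $\|\mI-\gamma\mA\|_2<1$, the unrolled recursion bounds $\|\tilde{\theta}_T\|_2$ by a deterministic constant via the triangle inequality (exactly the paper's estimate \eqref{eqn:42}), so your worry about Markovian correlations and the $L^2$/Borel--Cantelli detour are unnecessary --- $\tilde{\theta}_T/\sqrt{T}\to 0$ holds surely, not merely in probability.
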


\begin{proof}
Let $\Tilde{\theta}_t = \theta_t - \theta^*$ and recall $\theta^* = \mA^{-1}\bar{\vb}$, we can rewrite \eqref{eqn:linear_SA2} as
\begin{equation}\label{eqn:39}
    \Tilde{\theta}_{t+1} = \Tilde{\theta}_t - \gamma (\mA \Tilde{\theta}_t - \vb(X_{t+1}) + \bar{\vb}).
\end{equation}
Recursively solving \eqref{eqn:39} gives 
\begin{equation}\label{eqn:recursive_equation}
        \Tilde{\theta}_{t} = (\mI - \gamma \mA)^t \Tilde{\theta}_{0} - \gamma \sum_{i=1}^t (\mI - \gamma \mA)^{t-i} (\vb(X_i)-\bar{\vb}).
\end{equation}

For averaged iterates $\bar{\theta}_t = \frac{1}{t}\sum_{i=0}^{t-1}\theta_i$, \eqref{eqn:recursive_equation} gives
\begin{equation}\label{eqn:pr-averaging}
\begin{split}
    \bar{\theta}_t - \theta^* &= \frac{1}{t}\sum_{i=0}^{t-1}\Tilde{\theta}_t \\
    &= \frac{1}{t} \sum_{i=0}^{t-1} (\mI - \gamma \mA)^i\Tilde{\theta}_0 -  \frac{\gamma}{t}\sum_{i=1}^{t-1} \sum_{j=1}^i (\mI - \gamma \mA)^{i-j} (\vb(X_j)-\bar{\vb}) \\
    &= \frac{1}{t} \sum_{i=0}^{t-1} (\mI - \gamma \mA)^i\Tilde{\theta}_0 -  \frac{\gamma}{t} \sum_{i=1}^{t-1} \left[\sum_{j=0}^{t-i-1} (\mI - \gamma \mA)^j\right](\vb(X_i)-\bar{\vb}) \\
    &= \frac{1}{t}(\gamma\mA)^{-1}(\mI - (\mI -\gamma\mA)^t)\Tilde{\theta}_0 - \frac{\gamma}{t} \sum_{i=1}^{t-1} (\gamma \mA)^{-1}(\mI - (\mI - \gamma\mA)^{t-i}) (\vb(X_i)-\bar{\vb}), 
\end{split}
\end{equation}
where the third equality comes from rearranging the summation order in the second term on the RHS. The fourth equality comes from the fact that $\sum_{i=0}^{t-1} (\mI - \gamma \mA)^i = (\gamma \mA)^{-1}(\mI - (\mI-\gamma\mA)^{t})$. 

Next we want to show $\lim_{t\to\infty}(\mI - \gamma \mA)^t = \vzero$. Since we assume $0<\gamma<2/\|\mA\|_2$, we have $\|\mI - \gamma \mA\|_2 = \max_{i=1,2,\cdots,n} |1-\gamma \lambda_i(\mA)| < 1$, where $\lambda_i(\mA)>0$ is the $i$-the eigenvalue of the positive definite matrix $\mA$. Then, by submultiplicative property, $\|(\mI - \gamma \mA)^t\|_2 \leq \|\mI - \gamma \mA\|_2^t$ such that $\lim_{t\to\infty} \|(\mI - \gamma \mA)^t\|_2 \leq \lim_{t\to\infty} \|\mI - \gamma \mA\|_2^t = 0$, which implies that $\lim_{t\to\infty}(\mI - \gamma \mA)^t = \vzero$.

Now we want to show $\lim_{t\to\infty}\|\sum_{i=1}^{t-1} (\mI-\gamma \mA)^{t-i}(\vb(X_i)-\bar{\vb})\|_2 < \infty$. Since vector-valued function $\vb(\cdot)$ is defined on the finite state space $\cV$, it is safe to assume $\|\vb(X)-\bar{\vb}\|_2 \leq C$ for some constant $C$. Then,
\begin{equation}\label{eqn:42}
\begin{split}
        \lim_{t\to\infty}\left\|\sum_{i=1}^{t-1} (\mI-\gamma \mA)^{t-i}(\vb(X_i)-\bar{\vb})\right\|_2 &\leq \lim_{t\to\infty}\sum_{i=1}^{t-1} \|(\mI-\gamma \mA)^{t-i}\|_2\|(\vb(X_i)-\bar{\vb})\|_2 \\
        &\leq C \lim_{t\to\infty}\sum_{i=1}^{t-1} \|\mI-\gamma\mA\|_2^{t-i} \\
        &= C \lim_{t\to\infty}\sum_{i=1}^{t-1} \|\mI-\gamma\mA\|_2^{i} < \infty,
\end{split}
\end{equation}
where the first inequality comes from submultiplicative property and triangular inequality, the first equality is by rewriting the index inside the summation, and the third inequality comes from the fact that $\|\mI-\gamma\mA\|_2 < 1$. Then, we have
\begin{equation}\label{eqn:43}
    \lim_{t\to\infty} \frac{1}{t}\sum_{i=1}^{t-1} (\mI-\gamma \mA)^{t-i}(\vb(X_i)-\bar{\vb}) = \vzero,
\end{equation}
and
\begin{equation}\label{eqn:44}
    \lim_{t\to\infty} \frac{1}{\sqrt{t}}\sum_{i=1}^{t-1} (\mI-\gamma \mA)^{t-i}(\vb(X_i)-\bar{\vb}) = \vzero.
\end{equation}

With $\lim_{t\to\infty}(\mI - \gamma \mA)^t = \vzero$ and \eqref{eqn:43}, we have from \eqref{eqn:pr-averaging} that
\begin{equation}
    \lim_{t\to\infty} \bar{\theta}_t - \theta^* = \lim_{t\to\infty} -\frac{\gamma}{t}\sum_{i=1}^{t-1} (\gamma \mA)^{-1}(\vb(X_i)-\bar{\vb})= -\mA^{-1} \lim_{t\to\infty} \frac{1}{t}\sum_{i=1}^{t-1}(\vb(X_i)-\bar{\vb}).    
\end{equation}
From the ergodic theorem for Markov chains (\citep{bremaud2013markov} Theorem 3.3.2), we have $\lim_{t\to\infty} \frac{1}{t}\sum_{i=1}^{t-1}(\vb(X_i)-\bar{\vb}) = \vzero$ and therefore $\lim_{t\to\infty} \bar{\theta}_t = \theta^*$.

To get the CLT result in \eqref{eqn:clt_constant}, we first scale $\bar{\theta}_t - \theta^*$ from \eqref{eqn:pr-averaging}, along with \eqref{eqn:44}, such that
\begin{equation}
\begin{split}
        \lim_{t\to\infty} \sqrt{t}(\bar{\theta}_t - \theta^*) &= \lim_{t\to\infty} -\frac{\gamma}{\sqrt{t}}\sum_{i=1}^{t-1} (\gamma \mA)^{-1}(\vb(X_i)-\bar{\vb}) \\ 
        &= -\mA^{-1} \lim_{t\to\infty} \frac{\sqrt{t-1}}{\sqrt{t}} \left(\frac{1}{\sqrt{t-1}}\sum_{i=1}^{t-1} (\vb(X_i)-\bar{\vb})\right).
\end{split}
\end{equation}
From the CLT of Markov chain in Theorem \ref{Thm:CLT multivariate mcmc}, we know $\frac{1}{\sqrt{t}}\sum_{i=1}^t (\vb(X_i)-\bar{\vb}) \xrightarrow[t \to \infty]{dist}  \cN(0,\mSigma_X)$, where $\mSigma_X = \lim_{t\to\infty} \frac{1}{t}\E[(\sum_{s=1}^t (b(X_i)-\bar{b}))(\sum_{s=1}^t (b(X_i)-\bar{b}))^T]$. This result shows that time-averaged iterate $\bar{\theta}_t$ will guarantee the convergence to the exact solution and we have CLT result for $\sqrt{t}(\bar{\theta}_t - \theta^*)$ too.

Finally, we need to quantify the covariance matrix in the CLT result to $\sqrt{t}(\bar{\theta}_t - \theta^*)$. We will look at $\lim_{t\to\infty} t\E[(\bar{\theta}_t-\theta^*)(\bar{\theta}_t-\theta^*)^T]$. Note that the second term in \eqref{eqn:pr-averaging} is bounded (see \eqref{eqn:42} for the proof) such that the cross term in the outer-product of $\bar{\theta}_t-\theta^*$ will vanish when $t\to\infty$. Then, we have 
\begin{equation}\label{eqn:45}
\begin{split}
       &\lim_{t\to\infty} t(\bar{\theta}_t-\theta^*)(\bar{\theta}_t-\theta^*)^T \\ =& \lim_{t\to\infty}\frac{t-1}{t}\left(-\frac{1}{\sqrt{t-1}}\mA^{-1} \sum_{i=1}^{t-1} (\vb(X_i)-\bar{\vb})\right)\left(-\frac{1}{\sqrt{t-1}}\mA^{-1} \sum_{i=1}^{t-1} (\vb(X_i)-\bar{\vb})\right)^T\\ 
       =& \mA^{-1} \lim_{t\to\infty} \frac{1}{t} \left(\sum_{i=1}^{t-1} (\vb(X_i)-\bar{\vb})\right)\left(\sum_{i=1}^{t-1} (\vb(X_i)-\bar{\vb})\right)^T (\mA^{-1})^T.
\end{split}
\end{equation}
Taking the expectation of \eqref{eqn:45} gives
\begin{equation}
\begin{split}
        &\lim_{t\to\infty} t\E[(\bar{\theta}_t-\theta^*)(\bar{\theta}_t-\theta^*)^T] \\ =& \mA^{-1} \lim_{t\to\infty} \frac{t-1}{t}\E\left[\frac{1}{t-1}\left(\sum_{i=1}^{t-1} (\vb(X_i)-\bar{\vb})\right)\left(\sum_{i=1}^{t-1} (\vb(X_i)-\bar{\vb})\right)^T\right] (\mA^{-1})^T\\ 
        =& \mA^{-1} \mSigma_X (\mA^{-1})^T.
\end{split}
\end{equation}
Therefore, we have
$$\sqrt{t}(\bar{\theta}_t - \theta^*) \xrightarrow[t \to \infty]{dist} \cN(0,\mA^{-1} \mSigma_X (\mA^{-1})^T).$$
\end{proof}

In Lemma \ref{lemma:CLT_cons_stepsize}, $\mA = \nabla^2 f(\theta)$ and $\mSigma_X$, by definition \eqref{eqn:asymptotic covariance matrix}, is an asymptotic covariance matrix of the Markov chain $\{X_t\}_{t\geq 0}$ for vector-valued function $\vb(\cdot)$. Therefore, \eqref{eqn:clt_constant} shares a similar form to \eqref{eqn:error_pr_averaging} in Lemma \ref{Lemma:CLT_SGD}. Our Theorem \ref{Thm:main_result} can be carried over to Lemma \ref{lemma:CLT_cons_stepsize}, which enables us to compare the efficiency ordering of SGD algorithms driven by different stochastic inputs under the update rule \eqref{eqn:linear_SA2} and constant step size.
\section{Proof of Proposition \ref{cor:reweight_nonMarkov}}\label{app:apply_nbrw}

\citep{neal2004improving,lee2012beyond} proposed the guidance by modifying a reversible random walk into a non-Markovian random walk to achieve higher sampling efficiency and it was applied to other applications to improve sampling efficiency (e.g., \citep{li2015random,li2019walking}). Specifically speaking, consider a reversible random walk $\{X_t \}_{t\geq 0}$ (e.g., SRW) with transition matrix $\mP$ and stationary distribution $\vpi$. Let its counterpart (e.g., NBRW) on the augmented state space be given by $\{ Z_t \}_{t\geq 0} \triangleq \{ (Y_{t-1}, Y_{t})\}_{t \geq 0}$, where $Y_{t-1},Y_t \in \cV$ and $Z_0 = (Y_0,Y_0)$. Additionally, $\{Z_t\}_{t\geq 0}$ is \textit{a Markov chain on the augmented state space} 
$$\cE \triangleq \{(i,j): i,j \in \cV ~~s.t.~~ P(i,j) > 0\} \subseteq \cV \times \cV$$ 
with stationary distribution $\vpi'$. For notation simplicity, we use $e_{ij}$ to represent edge $(i,j)$. Note that by definition $e_{ij} \neq e_{ji}$ and we allow $i=j$ if $P(i,j) > 0$, which is a bit different from the edge set that does not include edge $(i,i)$. As proved in \citep{lee2012beyond}, the properties of NBRW $\{Z_t\}_{t\geq 0}$ are detailed in the following theorem.

\begin{theorem}[\citep{neal2004improving} Theorem $2$]\label{theorem:augmented_mc}
Suppose that $\{X_t\}$ is an irreducible, reversible Markov chain on the state space $\cV = \{1,2,\cdots,n\}$ with transition matrix $\mP = \{P(i,j)\}$ and stationary distribution $\vpi$. Construct a Markov chain $\{Z_t\}$ on the augmented state space $\cE$ with transition matrix $\mP' = \{P'(e_{ij},e_{lk})\}$ in which the transition probabilities $P'(e_{ij},e_{lk})$ satisfy the following two conditions: for all $e_{ij},e_{ji}, e_{jk}, e_{kj} \in \cE$ with $i \neq k$, 
\begin{subequations}
\begin{equation}\label{eqn:quasi_balance_equation}
    P(j,i)P'(e_{ij},e_{jk}) = P(j,k)P'(e_{kj},e_{ji}),
\end{equation}
\begin{equation}
    P'(e_{ij},e_{jk}) \geq P(j,k).
\end{equation}
\end{subequations}
Then, the Markov chain $\{Z_t\}_{t\geq 0}$ is irreducible and non-reversible with a unique stationary distribution $\vpi'$ in which
\begin{equation}\label{eqn:stationary_augmented}
    \pi'(e_{ij}) = \pi_i P(i,j) = \pi_j P(j,i), ~~e_{ij} \in \cE.
\end{equation}
Also, for any scalar-valued function $g$, the asymptotic variance $\sigma^2_Z(g)\leq \sigma^2_X(g)$.
\end{theorem}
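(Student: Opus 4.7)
The plan is to verify the claims in sequence, with the variance inequality being the main technical difficulty.

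\textbf{Stationary distribution.} To check invariance of $\vpi'$ under $\mP'$, I would compute $(\vpi' \mP')(e_{jk}) = \sum_{i : e_{ij} \in \cE} \pi'(e_{ij}) P'(e_{ij}, e_{jk})$. Substituting $\pi'(e_{ij}) = \pi_j P(j,i)$ (using reversibility of $\mP$) and applying the quasi-balance condition \eqref{eqn:quasi_balance_equation} to each summand with $i \neq k$, each such term becomes $\pi_j P(j,k) P'(e_{kj}, e_{ji})$; the $i = k$ term is handled separately via reversibility $\pi_k P(k,j) = \pi_j P(j,k)$. Factoring $\pi_j P(j,k)$ out of the sum and using row-stochasticity $\sum_i P'(e_{kj}, e_{ji}) = 1$ yields $\pi_j P(j,k) = \pi'(e_{jk})$.

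\textbf{Irreducibility and non-reversibility.} Irreducibility follows by lifting paths: given $e_{ab}, e_{cd} \in \cE$, pick a $\mP$-path from $b$ to $c$, which exists by irreducibility of $\mP$; condition (b) guarantees that each step of the corresponding edge-path has positive $\mP'$-probability. Non-reversibility follows by checking that detailed balance $\pi'(e_{ij}) P'(e_{ij}, e_{jk}) = \pi'(e_{jk}) P'(e_{jk}, e_{ji})$ would, after cancellation, require $P'(e_{jk}, e_{ji}) = P'(e_{kj}, e_{ji})$, which the quasi-balance condition does not enforce in general.

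\textbf{Variance comparison (main obstacle).} Lift $g : \cV \to \R$ to $\tilde g : \cE \to \R$ via $\tilde g(e_{ij}) \triangleq g(j)$, so that $\sum_{t=1}^T \tilde g(Z_t) = \sum_{t=1}^T g(Y_t)$; the claim then reduces to comparing the asymptotic variance of $\sum_t g(Y_t)$ against that of $\sum_t g(X_t)$, where both chains share stationary distribution $\vpi$. My approach would use a Kipnis-Varadhan-style variational representation of the asymptotic variance in terms of the additive symmetrization of $\mP'$ with respect to $\vpi'$. Conditions (a) and (b) are tailored so that the associated Dirichlet form dominates that of $\mP$ after a natural lifting of test functions, which through the variational formula would yield $\sigma^2_Z(\tilde g) \leq \sigma^2_X(g)$. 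The delicate part is constructing the test-function lifting that simultaneously respects the Dirichlet-form comparison and the $L^2(\vpi')$-inner product with $\tilde g$; this is essentially a Peskun-style ordering argument adapted to the non-reversible augmented setting, where the standard pairwise comparison of transition probabilities needs to be replaced by a comparison involving time-reversed transitions, and where the factor-of-two loss from symmetrization must be absorbed by the extra off-diagonal mass supplied by condition (b).
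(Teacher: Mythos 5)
First, a contextual note: the paper does not prove this statement at all --- it is imported verbatim as Theorem 2 of \citep{neal2004improving} and used as a black box in the proof of Proposition \ref{cor:reweight_nonMarkov} --- so your attempt must be measured against the original source rather than anything in this paper. Your first block is correct and matches the standard verification: writing $\pi'(e_{ij})=\pi_j P(j,i)$, applying \eqref{eqn:quasi_balance_equation} to every summand with $i\neq k$, handling $i=k$ by reversibility of $\mP$, and collapsing the sum via row-stochasticity of $\mP'$ at $e_{kj}$ indeed gives $(\vpi'\mP')(e_{jk})=\pi_j P(j,k)=\pi'(e_{jk})$, with uniqueness following from irreducibility, which your path-lifting argument via condition (b) establishes. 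Your non-reversibility check, however, is misstated: detailed balance for the pair $(e_{ij},e_{jk})$ reads $\pi'(e_{ij})P'(e_{ij},e_{jk})=\pi'(e_{jk})P'(e_{jk},e_{ij})$, involving $P'(e_{jk},e_{ij})$ rather than the $P'(e_{jk},e_{ji})$ you wrote, and the honest argument is structural rather than algebraic: from $e_{jk}$ the chain can only move to edges emanating from $k$, so $P'(e_{jk},e_{ij})=0$ whenever $i\neq k$, while $P'(e_{ij},e_{jk})\geq P(j,k)>0$ by condition (b); detailed balance therefore fails outright at any node $j$ with two distinct neighbors.

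The genuine gap is the variance inequality $\sigma^2_Z(g)\leq\sigma^2_X(g)$, which is the entire analytic content of the theorem and which your Part 3 plans rather than proves. The lifting $\tilde g(e_{ij})\triangleq g(j)$ is the right first move, but the machinery you then invoke does not fit the problem: the Kipnis--Varadhan variational representation characterizes asymptotic variance only for \emph{reversible} kernels, and $\{Z_t\}$ is non-reversible by construction. Passing to the additive symmetrization $(\mP'+(\mP')^{*})/2$ discards exactly the structure the construction is designed to exploit --- the $\vpi'$-adjoint $(\mP')^{*}$ traverses edges in reverse, so the symmetrized chain reintroduces backtracking-type moves, and a Peskun/Dirichlet-form comparison of that symmetrization against a lift of $\mP$ has no identified reason to match, let alone beat, $\sigma^2_X(g)$. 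You flag this yourself (``the factor-of-two loss from symmetrization must be absorbed'') but supply no mechanism; moreover, a Dirichlet-form domination argument would require the variational problem for $\tilde g$ to be controlled within the class of lifted test functions, which fails in general since $\cE$ carries strictly more functions than $\cV$. The proof in the cited source does not symmetrize: it works directly with the non-reversible lifted kernels, using the canonical lift $\check P(e_{ij},e_{jk})=P(j,k)$ --- whose edge trajectory projects to a copy of $\{X_t\}$, so its asymptotic variance for $\tilde g$ is exactly $\sigma^2_X(g)$ --- and feeding conditions \eqref{eqn:quasi_balance_equation} and (b) into a comparison argument tailored to that lifted, non-reversible setting. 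As written, your proposal establishes the stationarity and irreducibility claims but leaves the main inequality unproven, with a central step that is both unconstructed and pointed in a direction where the standard tools are known to break down.
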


Now, we show how the non-Markov random walk with properties in Theorem \ref{theorem:augmented_mc} can be included in Lemma \ref{Lemma:CLT_SGD}. For the original function $G: \R^d \times \cV \to \R^d$, we define another function $\Phi: \R^d \times \cE \to \R^d$ such that $\Phi(\theta, e_{ij}) = G(\theta, j)$. 
Then, the SGD update rule \eqref{eqn:update_rule_general} becomes $\theta_{t+1} = Proj_{\Theta}\left(\theta_t -\gamma_{t+1} \nabla \Phi\left(\theta_t, Z_{t+1}\right)\right)$.
From \eqref{eqn:stationary_augmented}, we have for any $\theta \in \R^d$,
\begin{equation}\label{eqn:mean_field_function_equal}
    \phi(\theta) \!\triangleq\! \E_{Z \sim \vpi'} \Phi(\theta,Z) \!= \!\!\!\sum_{e_{ij} \in \cE}\!\!\Phi(\theta,e_{ij}) \pi'(e_{ij}) \!=\!\!\!\sum_{i,j \in \cV}\!\! G(\theta, j) \pi_j P(j,i) \!=\!\! \sum_{j \in \cV}\frac{1}{n\pi_j} F(\theta, j) \pi_j = f(\theta),
\end{equation}
showing the mean-field function $\phi(\theta)$ for $\Phi(\theta,Z)$ is the same as the objective function $f(\theta)$. 

Next, we show assumptions (A1)-(A5) of Lemma \ref{Lemma:CLT_SGD} still hold for $\{Z_t\}_{t\geq 0}$ on the augmented state space $\cE$ and function $\Phi$. Assumption (A1), (A5) and (A3) hold for function $\Phi(\theta,Z)$ because of our definition $\Phi(\theta, e_{ij}) = G(\theta, j)$. Assumption (A4) is satisfied because from Theorem \ref{theorem:augmented_mc}, $\{Z_t\}_{t\geq 0}$ is an irreducible and non-reversible Markov chain on the augmented state space $\cE$ and there always exists a solution \eqref{eqn:solution_poisson_equation} to \eqref{eqn:poisson_equation}. Assumption (A2) holds because matrix $\mK = \nabla^2 \phi(\theta^*) = \nabla^2 f(\theta^*)$ by \eqref{eqn:mean_field_function_equal}. Therefore, we can say the Markov chain $\{Z_t\}_{t\geq 0}$ on the augmented state space $\cE$, along with the newly defined function $\Phi$, can apply Lemma \ref{Lemma:CLT_SGD}. The asymptotic covariance matrix $\mSigma_{Z} \triangleq \mSigma_Z(\nabla \Phi(\theta^*,\cdot))$ is given as
\begin{equation}\label{eqn:ztoy}
\begin{split}
        \mSigma_Z \!&=\! \text{Var}_{\vpi'}(\nabla \Phi(\theta^*,Z_0)) \!\!+\!\! \sum_{k\geq 1}\!\text{Cov}_{\vpi'}(\nabla \Phi(\theta^*,Z_0),\!\nabla \Phi(\theta^*,Z_k))\text{Cov}_{\vpi'}(\nabla \Phi(\theta^*,Z_0),\!\nabla \Phi(\theta^*,Z_k))^T\\ 
        &=\! \lim_{t \rightarrow \infty} \frac{1}{t}\E\left\{\!\left[\sum_{s=1}^t\left(\nabla \Phi(\theta^*, Z_s) \!-\! \E_{\vpi'}(\nabla \Phi(\theta^*,\cdot))\right)\right]\! \left[\sum_{s=1}^t\left(\nabla \Phi(\theta^*, Z_s) \!-\! \E_{\vpi'}(\nabla \Phi(\theta^*,\cdot))\right)\right]^T  \right\} \\
        &= \!\lim_{t \rightarrow \infty} \frac{1}{t}\E\left\{\!\left[\sum_{s=1}^t\left(\nabla G(\theta^*, Y_s) \!-\! \E_{\vpi}(\nabla G(\theta^*,\cdot))\right)\right]\! \left[\sum_{s=1}^t\left(\nabla G(\theta^*, Y_s) \!-\! \E_{\vpi}(\nabla G(\theta^*,\cdot))\right)\right]^T  \right\} \\
        &= \!\mSigma_{Y}(\nabla G(\theta^*,\cdot)),
\end{split}
\end{equation}
where the third equality comes from \eqref{eqn:stationary_augmented} because
\begin{equation*}
    \E_{\vpi'}[(\nabla \Phi(\theta^*,Z))] = \nabla f(\theta^*) = \sum_{j \in \cV}\frac{1}{n\pi_j} \nabla F(\theta, j) \pi_j = \sum_{j \in \cV} \pi_j \nabla G(\theta,j) = \E_{\vpi}(\nabla G(\theta^*,\cdot)).
\end{equation*}
$\{Y_t\}_{t\geq 0}$ on the node space $\cV$ is the trajectory generated by $\{Z_t\}_{t\geq 0}$ on the augmented state space $\cE$. Let $\mV_Z$ be the covariance matrix generated by the SGD algorithm driven by $\{Z_t\}_{t\geq 0}$. Denote $\mSigma_X \triangleq \mSigma_X(\nabla G(\theta^*,\cdot))$ the asymptotic covariance matrix and $\mV_X$ the covariance matrix in \eqref{eqn:Lyapunov_equation} from the original Markov chain $\{X_t\}$. Then, from Theorem \ref{theorem:augmented_mc} we know the asymptotic variances of NBRW and SRW are ordered for any scalar-valued function. Then, with Theorem \ref{Thm:main_result} (i) we know that the asymptotic covariance matrices of NBRW and SRW are Loewner ordered for any vector-valued function such that $\mSigma_Y(\nabla G(\theta^*,\cdot))\leq_L \mSigma_X(\nabla G(\theta^*,\cdot))$. From \eqref{eqn:ztoy} we have $\mSigma_{Z}\leq_L \mSigma_X$. By applying Theorem \ref{Thm:main_result} (ii), we have $\mV_Z\leq_L \mV_X$.

\section{Proof of Lemma \ref{Lem:Shuffling finite time var}}
Assume we have a vector-valued function $\vg: [n] \to \R^d$. For shuffling without replacement, which traverses every node in each epoch with length $n$, we group all the terms in each $k$-th epoch (shown in Figure \ref{fig:Diagram_G}) and analyze the term $\sum_{i = (k-1)n}^{kn-1} (\vg(X_i) - \E_{\vpi}(\vg)) < \infty$ for $k\in \Z_+$. Note that we have 
\begin{equation}\label{eqn:shuffling_equiv}
    \sum_{i = (k-1)n}^{kn-1} \vg(X_i) = \sum_{j=1}^n \vg(j)
\end{equation}
by definition of shuffling without replacement. By \eqref{eqn:shuffling_equiv} and $\E_{\vpi}(\vg) = \frac{1}{n}\sum_{i=1}^n \vg(i)$, we have
\begin{equation}\label{eqn:shuffling_zero}
    \sum_{i = (k-1)n}^{kn-1} (\vg(X_i) - \E_{\vpi}(\vg)) = \sum_{j=1}^n \vg(j) - n\sum_{i=1}^n \frac{1}{n}\vg(i) = 0.
\end{equation}
\begin{figure}
    \centering
    \includegraphics[width = 0.8\textwidth]{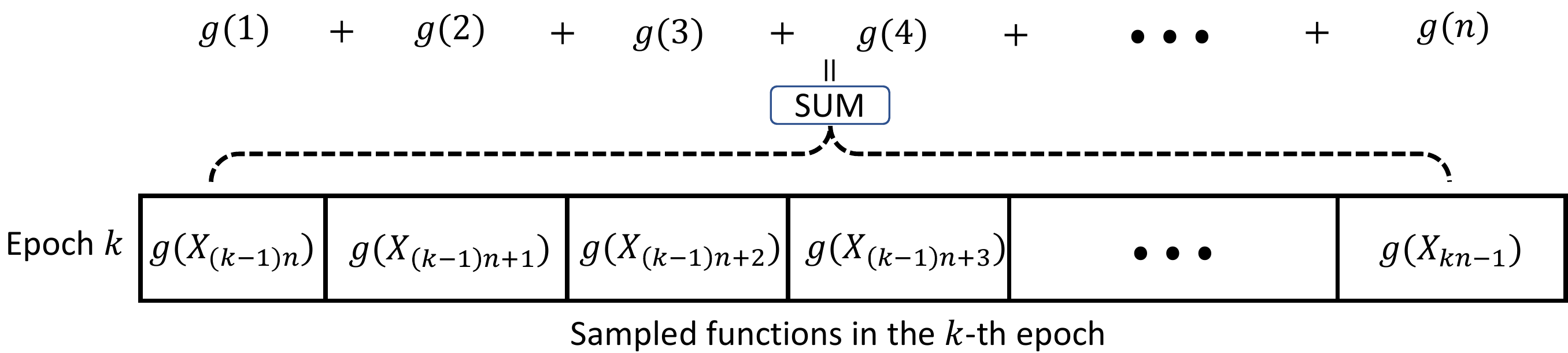}
    \caption{Diagram of the sampled functions in each epoch.}
    \label{fig:Diagram_G}
\end{figure}

With the definition of the asymptotic covariance matrix, we have
\begin{equation}\label{eqn:cov_shuffling}
\begin{split}
        \mSigma_X(\vg) &=\lim_{t\to\infty} \frac{1}{t}\E\left\{\left(\sum_{i=1}^t(\vg(X_s) \!-\! \E_{\vpi}(\vg))\right)\left(\sum_{i=1}^t(\vg(X_s) \!-\! \E_{\vpi}(\vg))\right)^T\right\} \\
        &= \lim_{t\to\infty} \frac{1}{t} \E \Bigg\{\!\!\left[\sum_{i=1}^{s} (\vg(X_i) \!-\! \E_{\vpi}(\vg)) \!+\! \sum_{i = s+1}^{t} \left(\vg(X_i) \!-\! \E_{\vpi}(\vg)\right) \right] \\
        &~~~~~~~~~~~~~~~~~~~~~\cdot\left[\sum_{i=1}^{s} (\vg(X_i) \!-\! \E_{\vpi}(\vg)) \!+\! \sum_{i = s+1}^{t} \left(\vg(X_i) \!-\! \E_{\vpi}(\vg)\right) \right]^{\!\!T}\!\Bigg\} \\
        &= \lim_{t\to\infty} \frac{1}{t} \E \left\{\left[\sum_{i = s+1}^{t} \left(\vg(X_i) \!-\! \E_{\vpi}(\vg)\right) \right]\left[\sum_{i = s+1}^{t} \left(\vg(X_i) \!-\! \E_{\vpi}(\vg)\right) \right]^{T}\right\},
\end{split}
\end{equation}
where $\vpi$ is the uniform stationary distribution, $s\triangleq t - (t ~\text{mod}~ n)$ is the time at which the previous epoch ended before time $t$ and we let $\sum_{i=t+1}^t (\vg(X_i) \!-\! \E_{\vpi}(\vg)) = 0$ by default. The third equality in \eqref{eqn:cov_shuffling} comes from \eqref{eqn:shuffling_zero}. Note that there always exists a constant $D$ such that $|\!|\vg(i)-\E_{\vpi}{\vg}|\!|_2<D$ for any $i\in[n]$ because of the boundedness of function $\vg$. Then, we have for any $t$,
$$\left\|\sum_{i = s+1}^{t} (\vg(X_i) \!-\! \E_{\vpi}(\vg))\right\|_2 \leq \sum_{i=s+1}^{t} \|\vg(X_i) \!-\! \E_{\vpi}(\vg)\|_2 < (t-s-1)D < nD < \infty,$$ 
where the second last inequality holds since $t-s-1 = (t \mod n) - 1 < n$. Back to \eqref{eqn:cov_shuffling}, we have 
$$|\!|\mSigma_X(\vg)|\!|_2 \leq  \lim_{t\to\infty} \frac{1}{t}\E\left[\left\|\sum_{i = s+1}^{t} (\vg(X_i) \!-\! \E_{\vpi}(\vg))\right\|_2^2\right] \leq \lim_{t\to\infty} \frac{nD}{t} = 0,$$
where the first inequality comes from Jensen's inequality. Finally, we have $\mSigma_X(\vg) = \vzero$ such that the asymptotic covariance matrices for both random and single shuffling are zero for any vector-valued function $\vg$.

\section{Proof of Proposition \ref{Cor:Shuffling vs iid} and Extension to Mini-batch Gradient Descent}\label{app: generalized_minibatch_shuffling_proof}

\subsection{Single Shuffling in SGD CLT Analysis}
Single shuffling is seen as a time-homogeneous, irreducible, periodic Markov chain and we know (A4) is the only requirement for Markov chain in the CLT result. As mentioned in \citep{glynn1996liapounov} and \citep{meyn2012markov} Chapter 17, the necessary condition to ensure the existence of function $\Tilde{F}$ as in \eqref{eqn:solution_poisson_equation} is that the inverse $(\mI - \mP + \vone\vpi^T)^{-1}$ exists. This is true for periodic Markov chain, and is shown in the following lemma.
\begin{lemma}\label{lemma:periodic_chain_in_CLT}
The solution \eqref{eqn:solution_poisson_equation} to the Poisson equation \eqref{eqn:poisson_equation} exists for an underlying finite, irreducible periodic Markov chain with transition matrix $\mP\in\R^{m\times m}$, stationary distribution $\vpi$ and period $n\leq m$.
\end{lemma}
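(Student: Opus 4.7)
The plan is to bypass the series expansion \eqref{eqn:solution_poisson_equation2}--\eqref{eqn:solution_poisson_equation} used in Appendix B entirely, since for a periodic chain the matrix $\mQ \triangleq \mP - \vone\vpi^T$ has spectral radius equal to $1$: its eigenvalues are those of $\mP$ with the $1$ at the Perron direction replaced by $0$, and by Perron--Frobenius applied to an irreducible stochastic matrix of period $n$, the other $n$-th roots of unity remain eigenvalues of $\mP$ and hence of $\mQ$. Consequently $\sum_{k \geq 0}\mQ^k$ diverges. The workaround is to prove invertibility of $\mI-\mP+\vone\vpi^T$ directly and verify the closed-form solution by algebraic substitution, never invoking the series.

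For invertibility, I would argue by triviality of the kernel. Suppose $(\mI-\mP+\vone\vpi^T)\vx=\vzero$. Left-multiplying by $\vpi^T$ and using $\vpi^T\mP=\vpi^T$ and $\vpi^T\vone=1$ yields $\vpi^T\vx=\vzero$. Substituting back gives $\mP\vx=\vx$, so $\vx$ lies in the $1$-eigenspace of $\mP$. The crucial point is that, for any finite irreducible non-negative matrix, Perron--Frobenius asserts that the Perron eigenvalue is algebraically simple \emph{regardless of periodicity}; only the presence of other unimodular eigenvalues depends on aperiodicity. Hence $\vx=c\vone$, and $\vpi^T\vx=c=0$, giving $\vx=\vzero$.

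Having established invertibility, I would verify the Poisson equation coordinate-wise. Fix any component $g:[n]\to\R$ of $\nabla F(\theta,\cdot)$ and set $\vu \triangleq (\mI-\mP+\vone\vpi^T)^{-1}g$, so that $\vu-\mP\vu+\vone(\vpi^T\vu)=g$. Pre-multiplying by $\vpi^T$ collapses the first and second terms and gives $\vpi^T\vu=\vpi^T g=\E_\vpi g$. Plugging this identity back into the defining equation yields $(\mI-\mP)\vu = g - \E_\vpi g\cdot\vone$, which is precisely the scalar Poisson equation. Stacking components reconstructs the vector-valued claim, and the extra additive term $-\nabla f(\theta)$ appearing in \eqref{eqn:solution_poisson_equation} is a constant along trajectories (living in the kernel of $\mI-\mP$) and therefore does not disturb the equality. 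Boundedness $\sup_{\theta\in\Theta,z\in[n]}\|\Tilde F(\theta,z)\|_2<\infty$ follows from the fact that $(\mI-\mP+\vone\vpi^T)^{-1}$ is a fixed finite matrix together with (A3) and compactness of $\Theta$. The main obstacle is conceptual rather than computational: one must resist trying to salvage the series argument and accept that the closed-form inverse, even when the associated Neumann series diverges, still delivers a valid Poisson solution.
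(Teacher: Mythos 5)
Your proof is correct, but it reaches the conclusion by a genuinely different route than the paper's proof of Lemma~\ref{lemma:periodic_chain_in_CLT}. The paper establishes invertibility of $\mI-\mP+\vone\vpi^T$ through the full spectral picture: it invokes Perron--Frobenius to place $n$ eigenvalues of $\mP$ uniformly on the unit circle (with the eigenvalue $1$ unique) and the remaining $m-n$ strictly inside, passes to the Jordan normal form $\mP=\mQ\mJ\mQ^{-1}$, writes $\vone\vpi^T=\mQ\mLambda\mQ^{-1}$ with $\mLambda=\text{diag}(0,\dots,0,1)$, observes that $\mI-\mJ+\mLambda$ has nonzero diagonal, and inverts each Jordan block via a nilpotent expansion. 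Your kernel-triviality argument is more elementary and buys a cleaner proof: it needs only the simplicity of the Perron root of an irreducible non-negative matrix --- which, as you correctly emphasize, holds regardless of periodicity (periodicity governs only the presence of additional unimodular eigenvalues) --- plus two one-line computations with $\vpi^T$, avoiding the Jordan form and any enumeration of the peripheral spectrum. Your explicit substitution check that $\vu=(\mI-\mP+\vone\vpi^T)^{-1}g$ satisfies $(\mI-\mP)\vu = g-\E_{\vpi}g\cdot\vone$ also fills a step the paper leaves implicit: the closed form \eqref{eqn:solution_poisson_equation} was originally derived from the Neumann series $\sum_{k\geq0}(\mP-\vone\vpi^T)^k$, which diverges in the periodic case (spectral radius $1$), and the paper's lemma proof only shows the inverse exists without re-verifying that the formula still solves \eqref{eqn:poisson_equation}; your direct verification, together with the observation that the additive term $-\nabla f(\theta)$ lies in the kernel of $\mI-\mP$ and so does not disturb the equation, closes that loop. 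One minor notational slip: you write $g:[n]\to\R$, but in this lemma the chain lives on $m$ states and $n\leq m$ is the \emph{period}, so the components should be functions on $[m]$; this does not affect the argument.
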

\begin{proof}
From Perron–Frobenius theorem for irreducible, non-negative stochastic matrices \citep{seneta2006non}, we know there are $n$ complex eigenvalues uniformly distributed on the unit circle, including the \textit{unique} eigenvalue with value $1$.  Other $m-n$ eigenvalues fall inside the unit circle, but still uniformly distributed on some circles with absolute value strictly smaller than $1$ because transition matrix $\mP$ is similar to $e^{i\omega}\mP$ where $i=\sqrt{-1}$ and $\omega = 2\pi/n$. 

Denote $\lambda_1,\lambda_2,\cdots,\lambda_m=1$ the eigenvalues of the transition matrix $\mP$ and let $\mJ$ be the Jordan norm form. There exists an invertible matrix $\mQ = [\vu_1,\vu_2,\cdots,\vu_m]^T$ and $\mQ^{-1} = [\vv_1,\vv_2,\cdots,\vv_m]$ such that $\mP = \mQ \mJ \mQ^{-1}$ and $\vu_i^T\vv_i = 1$ for all $i\in [m]$. In particular, $\vu_m = \vpi$ and $\vv_m = \vone$. Now, $(\lambda_m=1, \vu_m, \vv_m)$ is also the PF eigenpair for the matrix $\mathbf{\Pi}$, which enables us to write down $\vone\vpi^T = \mQ \mathbf{\Lambda}\mQ^{-1}$, where $\mathbf{\Lambda} = \text{diag}(0,0,\cdots,1)$. Therefore, $\mI - \mP + \vone\vpi^T = \mQ(\mI - \mJ + \mathbf{\Lambda})\mQ^{-1}$. Note that $\mI - \mJ + \mathbf{\Lambda}$ is a new Jordan norm form with \textbf{non-zero} entries on the main diagonal. Assume $\mJ_i = \lambda_i\mI + \mN$ is one of its Jordan block with nilpotent matrix $\mN$ such that $\mN^{p_i} = 0$ for some $p_i\geq 2$, then 
$$\mJ_i^{-1} = \lambda_i^{-1}(\mI + \lambda_i^{-1}\mN)^{-1} = \lambda_i^{-1}(\mI - \lambda_i^{-1}\mN + \cdots + (\lambda_i)^{-p_i+1}\mN^{p_i-1}),$$ 
showing that $\mJ_i^{-1}$ exists for all Jordan blocks in the new Jordan norm form $\mI - \mJ + \mathbf{\Lambda}$ because all $\lambda_i$ are non-zero. Therefore, $(\mI - \mP + \vone\vpi^T)^{-1}$ exists and \eqref{eqn:solution_poisson_equation} holds.
\end{proof}
With Lemma \ref{lemma:periodic_chain_in_CLT}, single shuffling can indeed be included in the SGD CLT result, and its covariance matrix is the zero matrix $\vzero$ (from Lemma \ref{Lem:Shuffling finite time var}).
Random shuffling is a \textit{time-inhomogeneous} Markov chain due to its nature of reshuffling at the beginning of each epoch. Before providing our main proof, we first present the augmentation of the random shuffling sequence which transforms it into a \textit{time-homogeneous} periodic Markov chain on the augmented state space.
\subsection{Augmentation of Random Shuffling for CLT Analysis}\label{app:augmentation_random_shuffling}
By the definition of random shuffling, in each epoch of length $n$, the sampler traverses one permutation sequence drawn uniformly at random from the permutation sequence set with size $n!$. Due to its random nature across each epoch, random shuffling is not a Markov chain on state space $[n]$. In order to include random shuffling in the SGD CLT result, Lemma \ref{Lemma:CLT_SGD}, we need to transform it into a Markov chain on a augmented state space. 

Let $\{X_t\}_{t\geq 0}$ be the sequence generated by random shuffling. We first define an augmented state space $\cS$, where for each state $s_t \triangleq \{\{A_j^{(t)}\}_{j\in[n]},c_t\}\in\cS$, the sequence $\{A_j^{(t)}\}_{j\in[n]}\triangleq \{X_{t-n+1},X_{t-n+2},\cdots,X_t\}$ is of length $n$, and records the history of past $n$ indices until time $t$. The integer $c_t\in\{1,2,\cdots,n\}$ is the time spent in current epoch at time $t$. For examples, consider the state space to be $\cS = \{1,2,\cdots,6\}$ in total and assume the sequence of visited states until $t=8$ is $\{3,6,2,1,5,4,2,5\}$. Here, $\{3,6,2,1,5,4\}$ is one complete permutation sequence in the first epoch and $\{2,5\}$ are in the second epoch. At time $t=8$, the sampler is at index $5$ and the sequence of past $6$ indices is $\{2,1,5,4,2,5\}$ and $c_t=2$, such that $s_8 = \{\{2,1,5,4,2,5\},2\}$. In the next iteration $t=9$, the sequence will be $\{1,5,4,2,5,X\}$ and $c_t=3$, where $X$ is the index that can be chosen from $\{1,3,4,6\}$ uniformly at random because $\{2,5\}$ have been chosen in the current epoch. Then, we have
\begin{equation}
    s_9 = \begin{cases} \{\{1,5,4,2,5,1\},3\} & \text{w.p}~~ 1/4, \\ \{\{1,5,4,2,5,3\},3\} & \text{w.p}~~ 1/4, \\ \{\{1,5,4,2,5,4\},3\} & \text{w.p}~~ 1/4, \\ \{\{1,5,4,2,5,6\},3\} & \text{w.p}~~ 1/4. \end{cases}
\end{equation}
Assume $s_{12} = \{{2,5,6,1,3,4},6\}$ at $t=12$, the next state $s_{13} = \{\{5,6,1,3,4,X\},1\}$ and $X$ is chosen from $\{1,2,\cdots,6\}$ uniformly at random. 

Note that 
\begin{itemize}
    \item We only include \textit{proper combination} of sequence $\{A_j\}_{j\in[n]}$ in the augmented state space, where `proper' means the sequence is possible to appear with the current value of $c_t$. For instance, $\{\{2,1,5,4,2,2\},2\}$ or $\{\{2,1,5,1,2,5\},2\}$ is inproper because $\{2,2\}$ or $\{2,1,5,1\}$ doesn't exist in the permutation sequence in one epoch.
    \item Transition probability $P(s_{t},s_{t+1})$ is possibly non-zero \textbf{only} when $c_{t+1} = c_t + 1$ for $c_t\leq n-1$, or $c_{t+1} = 1$ when $c_t = n$.
\end{itemize}
Next, we show the proposition that will be used later to show that random shuffling can also be fitted into the CLT result.
\begin{proposition}\label{prop:random_shuffling_markov_chain}
$\{s_t\}_{t\geq 0}$ forms a finite, irreducible and periodic Markov chain with period $n$.
\end{proposition}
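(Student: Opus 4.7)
\medskip

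\noindent\textbf{Proof proposal for Proposition \ref{prop:random_shuffling_markov_chain}.}
The plan is to verify the four claims (Markov, finite, irreducible, periodic of period $n$) in that order, each following directly from the bookkeeping baked into the augmented state $s_t=\{\{A_j^{(t)}\}_{j\in[n]},c_t\}$.

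\medskip

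\noindent\emph{Markov property.} Given $s_t$, the counter $c_t$ tells us how many samples of the current epoch have already been drawn, and since $\{A_j^{(t)}\}$ records the past $n$ draws, the last $c_t$ entries of this window are precisely the distinct indices already used in the current epoch (when $c_t<n$), while the first $n-c_t$ entries belong to the previous epoch and are ignored for the next draw. Under random shuffling the next sample $X_{t+1}$ is therefore (i) uniform over $[n]\setminus\{X_{t-c_t+1},\dots,X_t\}$ when $c_t<n$, or (ii) uniform over $[n]$ when $c_t=n$. In both cases the conditional law depends only on information extractable from $s_t$, and the next augmented state $s_{t+1}$ is obtained by shifting the window, appending $X_{t+1}$, and updating $c_{t+1}=c_t+1$ or $c_{t+1}=1$. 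Hence the Markov property holds.

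\medskip

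\noindent\emph{Finiteness.} The counter takes values in $[n]$, and the length-$n$ window takes values in a subset of $[n]^{n}$; combining, $|\cS|\le n\cdot n^{n}<\infty$. (The proper-combination restriction only shrinks this further.)

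\medskip

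\noindent\emph{Irreducibility and period.} The counter evolves deterministically as $c_{t+1}\equiv c_t+1\pmod n$, so from any state $s_t$ with counter $c_t$ the chain can only return to a state with counter $c_t$ at times $t+kn$, $k\in\Z_+$; this already gives period dividing $n$. To see that the period is exactly $n$ and that the chain is irreducible, fix any two states $s,s'\in\cS$ with counters $c,c'$. From $s$ finish the current epoch in $n-c$ steps (the remaining indices are forced to be drawn in some order, all with positive probability), arriving at a state with counter $n$. Since at the end of any epoch the next epoch's permutation is drawn uniformly from all $n!$ permutations, we can, in the following $n$ steps, pick a permutation whose last $c'$ entries agree with the last $c'$ entries of $s'$'s window. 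Then, running one more full epoch chosen to match the first $n-c'$ entries of $s'$'s window completes the construction, reaching $s'$ with positive probability in a finite, easily-computed number of steps. This proves irreducibility. For period exactly $n$: since the counter deterministically cycles $1,2,\dots,n$, a return to $s$ must happen at a time that is a multiple of $n$, and the construction above shows returns at times that are multiples of $n$ are achievable, so $\gcd$ of return times equals $n$.

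\medskip

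\noindent\emph{Main obstacle.} The nontrivial step is the irreducibility argument: one has to handle the counter-alignment (returns are forced to be on the $n$-periodic grid) simultaneously with the need to reach an arbitrary target window, and justify that the ``proper combinations'' we pass through along the way are themselves valid states in $\cS$. This is handled by doing two full epochs after leaving $s$, which lets the first epoch set up the prefix of $s'$'s window and the second epoch set up the suffix, using the independence and uniformity of each epoch's permutation to assign positive probability to the required path.
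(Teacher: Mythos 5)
Your overall strategy mirrors the paper's (the counter $c_t$ forces returns onto the $n$-grid, explicit epoch-by-epoch constructions give irreducibility, and a gcd-of-return-times argument gives the period), and your explicit verification of the Markov property is a welcome addition that the paper leaves implicit in its construction of the transition rules. However, two steps as written do not go through. First, the irreducibility construction is garbled. A target state $s'$ with counter $c'$ has a window whose \emph{last} $c'$ entries are the draws of the current partial epoch and whose \emph{first} $n-c'$ entries are the tail of the \emph{previous} epoch. So after finishing the epoch in progress at $s$ (in $n-c$ steps), the correct path runs one full epoch whose last $n-c'$ entries equal the first $n-c'$ entries of $s'$'s window, and then takes exactly $c'$ steps into the next epoch, matching the last $c'$ entries of $s'$'s window — arriving at $s'$ after $(n-c)+n+c'$ steps. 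Your text does the reverse: a full epoch whose last $c'$ entries match the last $c'$ entries of $s'$'s window, followed by ``one more full epoch'' matching the first $n-c'$ entries. As literally written this terminates at a state with counter $n$ whose window is entirely the second epoch's permutation, which cannot equal $s'$ unless $c'=n$. The paper's sketch (two permutation sequences across two epochs, one containing the prefix of $s'$'s window as its suffix and the other supplying the remaining $c'$ draws) is the corrected version of what you intend; the fix is routine given that each epoch's permutation is uniform over all $n!$ orders and the ``proper combination'' property of $s'$ guarantees the required segments are repetition-free.

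Second, the conclusion that the period is exactly $n$ is incomplete. You correctly observe that returns can only occur at multiples of $n$ (note: this means $n$ divides the period, not that the period divides $n$ as you wrote), but the period is the gcd of the achievable return times, and your construction only exhibits a return at $2n$. If $2n$ were the only exhibited return time, the gcd could a priori be $2n$. The paper is careful precisely here: it shows that $P(s_{t+n}=s\,|\,s_t=s)=0$ can genuinely occur — when the window of $s$ contains an index appearing in both epoch segments, the without-replacement constraint forbids an $n$-step return — and then establishes $P(s_{t+kn}=s\,|\,s_t=s)>0$ for \emph{all} $k\geq 2$, whence $\gcd\{2n,3n,\dots\}=n$. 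Your assertion that ``returns at times that are multiples of $n$ are achievable'' is doing this work without justification. The repair is easy: insert one extra arbitrary full epoch into your $2n$-step loop to obtain a $3n$-step return, and invoke $\gcd(2n,3n)=n$; but as stated the period claim is a genuine gap, and it is exactly the subtlety (no guaranteed return at time $n$) that the paper's case analysis was designed to handle.
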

\begin{proof}
By our construction, the size of choice of $\{A_j\}_{j\in[n]}$ with $c=i$ is $(C_n^i)^2 i!(n-i)!$, because the first $i$ indices has $C_n^i i!$ choices and remaining sequence has $C_n^{n-i}(n-i)!$ choices. The size of the augmented state space is $\sum_{i=1}^n (C_n^i)^2 i!(n-i)!$ and is still \textit{finite}. 

The \textit{irreducibility} can be shown by $P(s_{t+2n}=s'|s_t=s)> 0$ because we can always construct two permutation sequences in two epochs; one including first $i$ indices of $\{A'_j\}_{j\in[n]}$ in state $s'$ and the other including the remaining sequences. 

For \textit{periodicity}, if the sequence $\{A_j^{(t)}\}_{j\in[n]}$ in the current state $s_t=s\in\mathcal{S}$ includes repeated index at time $t$, e.g., index $i\in\{X_{t-n+1},\cdots,X_m\}$ (in the $\frac{m}{n}$-th epoch) and $i\in \{X_{m+1},\cdots,X_t\}$ (in the $(\frac{m}{n}+1)$-th epoch), then $i\notin \{X_{t+1},\cdots,X_{m+n}\}$ due to the nature of shuffling without replacement in an epoch, which leads to $P(s_{t+n}=s|s_t=s) = 0$. In addition, for $k\geq 2$ we can always construct intermediate sequences $\{X_{t+1},\cdots,X_{t+(k-1)n}\}$ such that $\{X_{t-n+1},\cdots,X_m\} = \{X_{t+(k-1)n+1},\cdots, X_{m+kn}\}$ and $\{X_{m+1},\cdots,X_t\} = \{X_{m+kn+1},\cdots,X_{t+kn}\}$, implying that $P(s_{t+kn}=s|s_t=s) > 0$ for $k\geq 2$. On the other hand, if $\{A_j^{(t)}\}_{j\in[n]}$ does not include repeated index, $P(s_{t+kn}=s|s_t=s)> 0$ for $k\in\mathbb{N}$. We also note that $P(s_{t+j}=s|s_t=s) = 0$ for $s\in\mathcal{S}$, $j\neq kn$ and $k\in\mathbb{N}$. Since $\{c_t\}$ by its definition is a periodic sequence $\{1,2,\cdots,n,1,2,\cdots,n,1,2,\cdots\}$ with period of length $n$, we know that $c_t=c_{t+j}$ holds only when $j=kn$ for $k\in\mathbb{N}$. Then, for $j\neq kn$, we have $c_{t+j}\neq c_{t}$ such that $s_{t+j} \neq s_t$, which leads to $P(s_{t+j}=s|s_t=s)=0$ for $j\neq kn$ and $k\in\mathbb{N}$. Therefore, by definition of periodicity, the Markov chain is of period $n$. 
\end{proof}

Together with Lemma \ref{lemma:periodic_chain_in_CLT} and Proposition \ref{prop:random_shuffling_markov_chain}, we can see random shuffling can also be include in the SGD CLT.

\subsection{Extension to Mini-batch Gradient Descent}\label{app:appendixH3}
Mini-batch gradient descent is another popular gradient descent variant and is widely used in the machine learning tools \citep{chollet2015keras,abadi2016tensorflow,paszke2019pytorch} to accelerate the learning process when compared to SGD. Instead of sampling a single element, mini-batch gradient descent samples multiple elements from $[n]$ in each iteration that form a batch.  

To incorporate the notion of mini-batches in our SGD framework, we provide a reformulation of the general SGD iteration based on a similar formulation in \citep{gower2019sgd} for the general analysis of SGD with \textit{i.i.d} inputs. Consider a stochastic process $\{B_t\}_{t \geq 0}$ as the driving sequence, which randomly samples batches of size $S$ (without replacement) from the state space $[n]$, that is $B_t\subset[n]$ and $|B_t| = S$ for all $t\geq0$. Here we assume $[n]\bmod S=0$ for simplicity. $B_t$ will therefore refer to the batch chosen at any time $t>0$. We assume that $B_t$ for all $t>0$ are \textit{i.i.d }random variables drawn from a distribution $\cP$, such that $\cP(B)>0$ is the probability with which a batch $B \subset [n]$ is picked.
We associate with any batch $B$, $\vv(B) \triangleq \left[ \sum_{i \in B}\ve_i \right]/ {\binom{N}{S} \cP(B)}$, where $\ve_i$ is the $i$'th vector of the canonical basis of $\R^d$. We then denote $\vF(\theta) \triangleq [F(\theta,1), \cdots, F(\theta,n)]^T$, and $\nabla \vF(\theta) \triangleq [\nabla F(\theta,1), \cdots , \nabla F(\theta,n)]^T$ for all $\theta \in \Theta$. With this notation, we can rewrite the general update rule for mini-batch SGD as
\begin{equation}\label{eqn:update rule mini-batch}
    \theta_{t+1} = \text{Proj}_{\Theta}\left(\theta_t - \gamma_{t+1} \nabla \vF(\theta_t)^T\vv(B_{t+1})\right). 
\end{equation}
Note that this way of defining the mini-batch based random input ensures that $\E_{\cP}[\vF(\theta)^T\vv(\cdot)] = f(\theta)$ for all $\theta \in \Theta$, maintaining the same objective function irrespective of the distribution from which batches are sampled.

With $X_t = B_t$ for all $t \geq 0$, and $\nabla G(\theta_t,X_{t+1}) = \nabla \vF(\theta_t)^T\vv(B_{t+1})$, the iteration \eqref{eqn:update rule mini-batch} can still be written in the form of \eqref{eqn:update_rule_general} with \textit{i.i.d} input sequence $\{X_t\}_{t\geq 0}$. We can thus apply the CLT for SGD algorithms to the mini-batch SGD with \textit{i.i.d} input, and in a similar fashion as \eqref{eqn:cov_matrix_iid} derive the explicit form of the asymptotic covariance matrix of \eqref{eqn:update rule mini-batch}, that is,
\begin{equation}\label{eqn:asymptotic covariance mini-batch}
    \mSigma_B(\nabla \vF(\theta^*)^T\vv(\cdot)) \triangleq \text{Var}_{B_0 \sim \cP}(\nabla \vF(\theta^*)^T\vv(B_0)).
\end{equation}

In practice, mini-batch gradient descent with shuffling is more widely used than $i.i.d$ sampling \citep{abadi2016tensorflow}, in which $B_t$ is generated by shuffling-based method instead of independent drawn from a distribution.\footnote{The reformulation \eqref{eqn:update rule mini-batch} enables us to analyze mini-batch gradient descent with various stochastic processes that samples $B_t$, not just i.i.d input and shuffling. However, discussing general processes $\{B_t\}_{t\geq0}$ is beyond the scope of this paper. } At the beginning of each epoch, \textit{Mini-batch gradient descent with random shuffling} shuffles the whole dataset $[n]$ and split it into small batches. On the other hand, \textit{mini-batch gradient descent with single shuffling} only shuffles the dataset $[n]$ once before dividing it into batches, sticking to a predetermined sequence of batches for all epochs of the training process. As pointed out by \citep{yun2021minibatch}, there is still a gap between practical implementation and theoretical analysis for mini-batch gradient descent with shuffling. Nevertheless, by extrapolating the analysis from Proposition \ref{Cor:Shuffling vs iid}, we are able to analyze the efficiency ordering of shuffling and \textit{i.i.d} sampling in the mini-batch version, as stated next.

\begin{proposition}\label{Cor:mini-batch with shuffling}
Consider the mini-batch gradient descent \eqref{eqn:update rule mini-batch} with stochastic inputs single/random shuffling $\{X_t\}_{t\geq 0}$ and \textit{i.i.d} sampling $\{Y_t\}_{t\geq 0}$, we have $\theta^X_t, \theta^Y_t \xrightarrow[t \to \infty]{a.s.} \theta^*$ and $\mV_X = \vzero \leq_L \mV_Y$.
\end{proposition}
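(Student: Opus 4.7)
The plan is to recast mini-batch SGD with shuffling as an instance of the general framework \eqref{eqn:update_rule_general}, and then invoke Theorem \ref{Thm:main_result}(ii) by showing that the asymptotic covariance vanishes for shuffling while being Loewner-nonnegative for i.i.d.\ batch sampling. Concretely, I would identify the driving process with $X_t = B_t$ over the finite collection of size-$S$ subsets of $[n]$, and set $\nabla G(\theta, B) \triangleq \nabla \vF(\theta)^T \vv(B)$. Under this identification, assumptions (A1)--(A3), (A5) carry over from the corresponding conditions on $F(\cdot, i)$ since $\nabla G(\theta, B)$ is a finite weighted sum of $\nabla F(\theta, i)$ over $i \in B$; (A2) is unchanged because $f$ is the same objective.

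Next, I would treat the three input processes separately. For i.i.d.\ batch sampling $\{Y_t\}_{t\geq 0}$, assumption (A4) is automatic, Lemma \ref{Lemma:CLT_SGD} applies, and the asymptotic covariance is exactly $\mSigma_Y = \text{Var}_{B \sim \cP}(\nabla \vF(\theta^*)^T \vv(B)) \geq_L \vzero$ as in \eqref{eqn:asymptotic covariance mini-batch}. For single shuffling, the batch sequence cycles deterministically with period $n/S$, making $\{X_t\}$ a finite, irreducible, time-homogeneous periodic Markov chain on the batch space; Lemma \ref{lemma:periodic_chain_in_CLT} supplies the Poisson solution for (A4). Moreover, within each epoch the partition covers $[n]$ exactly once, so
\begin{equation*}
    \sum_{t = (k-1)n/S}^{kn/S - 1} \nabla \vF(\theta)^T \vv(B_t) \;=\; c \cdot \nabla f(\theta)
\end{equation*}
for a constant $c$ independent of the epoch and of the shuffle, which is precisely the epoch-cancellation identity used in \eqref{eqn:shuffling_zero}. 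Repeating the argument of Lemma \ref{Lem:Shuffling finite time var} at the epoch level then yields $\mSigma_X = \vzero$.

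For random shuffling, the batch sequence is time-inhomogeneous on the batch space and cannot be treated directly. I would lift it to a time-homogeneous Markov chain by augmenting the state with the history of batches drawn in the current epoch together with a position counter $c_t \in \{1, \dots, n/S\}$, exactly analogous to the construction in Appendix \ref{app:augmentation_random_shuffling} but with batches replacing individual indices. An argument paralleling Proposition \ref{prop:random_shuffling_markov_chain} shows the augmented chain is finite, irreducible, and periodic with period $n/S$, so Lemma \ref{lemma:periodic_chain_in_CLT} again gives (A4). The epoch-cancellation identity above still applies, since every complete epoch exhausts $[n]$ irrespective of the random permutation, giving $\mSigma_X = \vzero$. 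With $\mSigma_X = \vzero \leq_L \mSigma_Y$ in hand for both shuffling variants, Theorem \ref{Thm:main_result}(ii) yields $\mV_X = \vzero \leq_L \mV_Y$ and the almost-sure convergence claims follow from Lemma \ref{Lemma:CLT_SGD}.

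The main obstacle will be the random-shuffling case: designing the augmented state space so that (i) the chain is genuinely time-homogeneous, (ii) the periodicity is exactly $n/S$ (so that Lemma \ref{lemma:periodic_chain_in_CLT} applies cleanly), and (iii) the epoch-cancellation identity is preserved on the lifted chain. The technical bookkeeping is analogous to Appendix \ref{app:augmentation_random_shuffling}, but one must be careful that the augmented state records enough information (namely which batches remain available in the current epoch) to make the next batch conditionally deterministic in distribution, while not recording so much that irreducibility fails.
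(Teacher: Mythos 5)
Your proposal is correct and follows essentially the same route as the paper's own proof in Appendix~\ref{app:appendixH3}: recasting the iteration via $\nabla G(\theta,B)=\nabla\vF(\theta)^T\vv(B)$, treating single shuffling as a finite irreducible periodic Markov chain on the batch space so that Lemma~\ref{lemma:periodic_chain_in_CLT} supplies (A4), lifting random shuffling to a time-homogeneous periodic chain by augmenting with the recent batch history and a position counter exactly as in Corollary~\ref{cor:mini_batch_random_shuffling} (mirroring Appendix~\ref{app:augmentation_random_shuffling}), and concluding $\mSigma_X=\vzero\leq_L\mSigma_Y$ via the epoch-cancellation argument of Lemma~\ref{Lem:Shuffling finite time var} before applying Theorem~\ref{Thm:main_result}(ii). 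Even the technical caveats you flag (time-homogeneity, period exactly $n/S$, and recording enough history to determine the remaining batches) are precisely the points the paper's augmentation handles.
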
\vspace{-2mm}
\begin{proof}
Let $l \triangleq n/B \in \N$. We first give the following corollary.
\begin{corollary}\label{cor:mini_batch_single_shufffling}
Mini-batch gradient descent with single shuffling is an irreducible, periodic Markov chain with period $l$.
\end{corollary}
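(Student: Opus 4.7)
The plan is to exploit the fact that, unlike random shuffling, single shuffling fixes the entire order of indices once at the start of training, so the induced batch sequence is fully deterministic and simply cycles through $l=n/B$ fixed batches epoch after epoch. Concretely, the initial (one-time) shuffle produces a permutation $\sigma$ of $[n]$, which after splitting into consecutive blocks of size $B$ yields an ordered list of batches $(B_1,B_2,\dots,B_l)$. The driving process is then $\{B_t\}_{t\geq 0}$ with $B_t = B_{((t-1)\bmod l)+1}$.

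The first step is to exhibit the Markov chain explicitly: take the state space to be $\{B_1,\dots,B_l\}\subset \binom{[n]}{B}$ with the deterministic transition rule $B_i \mapsto B_{(i\bmod l)+1}$. This is a well-defined time-homogeneous Markov chain because each batch $B_i$ appears exactly once per epoch and only at position $i$, so knowledge of the current batch pins down the next one without needing further history. The second step is irreducibility: for any $i,j\in\{1,\dots,l\}$, starting at $B_i$ one reaches $B_j$ in exactly $(j-i)\bmod l$ deterministic steps, so all $l$ states communicate and form a single communicating class.

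The third step is to verify the period. Starting from any fixed $B_i$, the set of return times is exactly $\{kl : k\in\mathbb{N}\}$, whose greatest common divisor is $l$; hence the chain has period $l$. Combining these three observations yields the claim.

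I expect the main subtlety to be conceptual rather than technical: one must be careful that the ``single shuffle'' randomness is absorbed into the specification of the chain (i.e.\ we work conditionally on $\sigma$, so the state space itself is random across realizations but the cycle structure and period are not). Once this is granted, the argument is elementary. The payoff is that Corollary \ref{cor:mini_batch_single_shufffling} lets us invoke Lemma \ref{lemma:periodic_chain_in_CLT} to guarantee existence of the Poisson solution in (A4), which is what makes the subsequent appeal to Lemma \ref{Lemma:CLT_SGD} and Theorem \ref{Thm:main_result} legitimate inside the proof of Proposition \ref{Cor:mini-batch with shuffling}.
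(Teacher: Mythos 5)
Your proposal is correct and takes essentially the same approach as the paper: the paper's own proof is a one-line observation that, after the one-time shuffle, the sampler deterministically cycles through the fixed batch sequence $\vv(a(1)),\dots,\vv(a(l))$, forming a finite, irreducible, periodic Markov chain with period $l$. You simply make explicit the routine verifications (the deterministic cyclic transition rule, reachability in $(j-i)\bmod l$ steps, return times $\{kl\}$ with gcd $l$, and conditioning on the realized shuffle) that the paper leaves implicit.
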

\begin{proof}
For single shuffling version, we divide the whole dataset $[n]$ into $B(1),B(2),\cdots,B(l)$ and the corresponding sampling vector will be $\vv(1),\vv(2),\cdots,\vv(l)$. We shuffle the indices once, denoted by $a(1),a(2),\cdots,a(l)$, and stick to this sequence all the time. Then, in each epoch, sampler will update $\theta_t$ according to the sequence $\vv(a(1)),\vv(a(2)),\cdots,\vv(a(l))$, where $\{\vv_t\}_{t\geq 0}$ forms the finite, irreducible periodic Markov chain with period $l$.
\end{proof}
Together with Corollary \ref{cor:mini_batch_single_shufffling} and Lemma \ref{lemma:periodic_chain_in_CLT}, mini-batch gradient descent with single shuffling can be included in the SGD CLT analysis and Theorem \ref{Thm:main_result}. Then, by Lemma \ref{lemma:periodic_chain_in_CLT} and 4.2, mini-batch SGD with single shuffling can be applied to CLT result, which gives the asymptotic covariance matrix $\mSigma_{\vw}(\nabla \vF(\theta^*)^T\vv(\cdot)) = \vzero$ and thus the covariance matrix in the CLT result is also zero.

For random shuffling version, we can use similar method as in Appendix \ref{app:augmentation_random_shuffling} to augment the state space, which forms the corollary as follows.
\begin{corollary}\label{cor:mini_batch_random_shuffling}
$\{x_t\}_{t\geq 0}$ forms a finite, irreducible and periodic Markov chain with period $l$.
\end{corollary}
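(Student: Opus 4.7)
The plan is to adapt the state-space augmentation of Proposition \ref{prop:random_shuffling_markov_chain} from the single-element shuffling setting to the mini-batch setting. Since one epoch now consists of $l = n/S$ batch draws rather than $n$ element draws, the augmented state at time $t$ should be $x_t = (\{A_j^{(t)}\}_{j \in [l]}, c_t)$, where $A_j^{(t)}$ records the last $l$ batches used up to time $t$ (each $A_j^{(t)} \subset [n]$ with $|A_j^{(t)}|=S$) and $c_t \in \{1,\dots,l\}$ indicates the position within the current epoch. The first step of the proof would verify that this choice encodes precisely the information needed to determine the distribution of the next batch: when $c_t<l$, the next batch is drawn uniformly from size-$S$ subsets of $[n] \setminus \bigcup_{j=l-c_t+1}^{l} A_j^{(t)}$ consistent with the current partial permutation; when $c_t=l$, a fresh permutation of $[n]$ is drawn and split into batches. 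Both possibilities depend on $x_t$ alone, establishing the time-homogeneous Markov property.

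I would then verify finiteness, irreducibility, and periodicity in turn. Finiteness is immediate, since there are only finitely many ordered $l$-tuples of size-$S$ subsets of $[n]$ and $c_t$ takes $l$ values; the admissible state space $\cS$ (those $x$ that actually arise under random shuffling) is a subset. For irreducibility, given two admissible states $x, x' \in \cS$, I would explicitly construct an evolution of two consecutive epochs that connects them: first let the remainder of the current epoch complete whatever partial permutation is encoded in $x$, then choose the next epoch so that its final $c_{t'}$ batches agree with the prefix of $x'$. Both constructions occur with positive probability because random shuffling assigns positive mass to every permutation of $[n]$ at the start of each epoch.

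For periodicity, I would exploit the deterministic cyclic structure of $c_t$: since $c_{t+j} = c_t$ forces $l \mid j$, the period must divide $l$. To match $l$ exactly, I would mirror the argument of Proposition \ref{prop:random_shuffling_markov_chain} by picking a state whose recorded batches straddle an epoch boundary so that $P(x_{t+l} = x \mid x_t = x) > 0$, while $P(x_{t+j} = x \mid x_t=x) = 0$ for any $j$ not a multiple of $l$. The main obstacle will be the bookkeeping around admissibility: tuples $(\{A_j\}, c)$ spanning the boundary between two epochs must be compatible with some valid pair of permutations of $[n]$, which is a more intricate combinatorial constraint than in the single-element case (where the analogous constraint reduces to forbidding illegal repetitions within an epoch). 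Once this characterization is in place, Lemma \ref{lemma:periodic_chain_in_CLT} applies to the augmented chain, and together with an epoch-telescoping argument analogous to Lemma \ref{Lem:Shuffling finite time var} it yields $\mSigma_X = \vzero$, closing Proposition \ref{Cor:mini-batch with shuffling}.
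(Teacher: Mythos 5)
Your proposal matches the paper's proof, which likewise augments the state to the last $l$ selected batches together with the within-epoch counter $c_t$ and then invokes the argument of Proposition \ref{prop:random_shuffling_markov_chain} essentially verbatim (finiteness by counting, irreducibility via a two-consecutive-epoch construction, periodicity via the deterministic cycle of $c_t$), so you are spelling out details the paper leaves implicit. One wording slip worth fixing: since $c_{t+j}=c_t$ forces $l \mid j$, all return times are multiples of $l$, so the period is a \emph{multiple} of $l$ rather than a divisor of it --- your subsequent exhibition of a positive-probability return in exactly $l$ steps then pins the period down to exactly $l$, so the conclusion stands.
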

\begin{proof}
Let $\cX$ be the augmented space, where state $x_t \triangleq \{\{W_j^{(t)}\}_{j\in [l]},c_t\}$. Sequence $\{W_j^{(t)}\}_{j\in [l]} = \{\vv_{t-l+1},\vv_{t-l+2},\cdots, \vv_{t}\}$ records the last $l$ selected batches and $c_t\in\{1,2,\cdots,l\}$ is the relative position of the batch in the current epoch at time $t$. The only difference for mini-batch version to the single element version is that we sample one batch of size $B$ without replacement according to the indices yet to be chosen in the current epoch. Similar to the proof in Proposition \ref{prop:random_shuffling_markov_chain}, $\{X_t\}_{t\geq 0}$ is also a finite, irreducible and periodic Markov chain with period $l$.
\end{proof}
Corollary \ref{cor:mini_batch_random_shuffling} and Lemma \ref{lemma:periodic_chain_in_CLT} show that mini-batch gradient descent with random shuffling can also be included in the SGD CLT analysis. However, we still need to check the form of asymptotic covariance matrix due to the augmentation. We follow the same idea from Appendix \ref{app:apply_nbrw} and define a function $\Phi(\theta,x_t) \triangleq \vF(\theta)^T\vv(B_{t})$. Then, from Lemma \ref{Lem:Shuffling finite time var}, asymptotic covariance matrix $\mSigma_{x}$ is given as
\begin{equation}
\begin{split}
        \mSigma_x \!&=\! \text{Var}_{\vpi}(\nabla \Phi(\theta^*,x_0)) \!+\!\! \sum_{k\geq 1}\text{Cov}_{\vpi}(\nabla \Phi(\theta^*,x_0),\nabla \Phi(\theta^*,x_k))\text{Cov}_{\vpi}(\nabla \Phi(\theta^*,x_0),\nabla \Phi(\theta^*,x_k))^T\\ 
        &= \lim_{t \rightarrow \infty} \frac{1}{t}\E\left\{\left[\sum_{s=1}^t\left(\nabla \Phi(\theta^*, x_s) - \E_{\vpi}(\nabla \Phi(\theta^*,\cdot))\right)\right]\!\! \left[\sum_{s=1}^t\left(\nabla \Phi(\theta^*, x_s) - \E_{\vpi}(\nabla \Phi(\theta^*,\cdot))\right)\right]^T \! \right\} \\
        &= \lim_{t \rightarrow \infty} \frac{1}{t}\E\left\{\left[\sum_{s=1}^t\left(\nabla \vF(\theta^*)^T\vv(B_{s}) - \nabla f(\theta^*)\right)\right]\!\! \left[\sum_{s=1}^t\left(\nabla \vF(\theta^*)^T\vv(B_{s}) - \nabla f(\theta^*))\right)\right]^T \! \right\} \\
        &= \mSigma_{x}(\nabla \vF(\theta^*)^T\vv(\cdot)) = \vzero,
\end{split}
\end{equation}
where the third equality comes from the limiting distribution of random shuffling that is uniform. Thus, the covariance matrix of random shuffling in the CLT result is also zero.

Above results show that both single shuffling and random shuffling in mini-batch SGD have higher efficiency than mini-batch SGD with i.i.d sampling.
\end{proof}
Proposition \ref{Cor:mini-batch with shuffling} generalizes Proposition \ref{Cor:Shuffling vs iid} (special case with mini-batch of size $S=1$) in that the same efficiency ordering between shuffling and \textit{i.i.d} input holds true even with mini-batches.

\section{Simulation}\label{app:simulation}
In Appendix \ref{subsection:detail_figure1}, we give the details of our simulation setup for Figure 1, involving three reversible Markov chains - the Metropolis-Hasting random walk (MHRW), a modification of MHRW (Modified-MHRW) and fastest mixing Markov chain (FMMC), each having the uniform distribution as their stationary measure. In Appendix \ref{subsection:additional_result}, we expand upon the numerical results in Section 5 by including additional results for large graphs.

\subsection{Details behind Figure 1}\label{subsection:detail_figure1}

For the random walk SGD (RWSGD) simulation in Figure 1, we consider the problem of minimizing a (scalar-valued) quadratic objective function 
\begin{equation}\label{eqn:obj_simu}
    f(\theta) \triangleq \frac{1}{n}\sum_{i=1}^n F(\theta,i) = \frac{1}{2n}\sum_{i=1}^n (\theta - b(i))^2,
\end{equation}
where $\theta, ~b(i) \in \R$ for $i=1,2,\cdots,n$ and $n$ is the number of nodes on the graph. 
The minimizer is given by $\theta^* \triangleq \arg\min_{\theta} f(\theta) = \frac{1}{n}\sum_{i=1}^n b(i)$. 
The RWSGD iteration for the objective function \eqref{eqn:obj_simu} is then given by
\begin{equation}\label{eqn:iteration_simulation}
    \theta_{t+1} = \theta_t - \gamma_{t+1}(\theta_t - b(X_{t+1})),
\end{equation}
where we choose $\gamma_t = 1/t^{0.9}$ and $\{X_t\}_{t\geq 0}$ is the stochastic input, e.g., MHRW, Modified-MHRW, and FMMC. 


In Figure 1, we simulate the SGD algorithm on two graphs; one is an $8$-node graph $\cG_1$ and the other is a $5$-node graph $\cG_2$. The two graphs are arbitrarily constructed while ensuring connectivity. See Figure \ref{fig:topology_graph} for resulting topologies. 
\begin{figure}[!ht]
    \centering
     \begin{subfigure}[b]{0.47\textwidth}
         \centering
         \includegraphics[width=\textwidth]{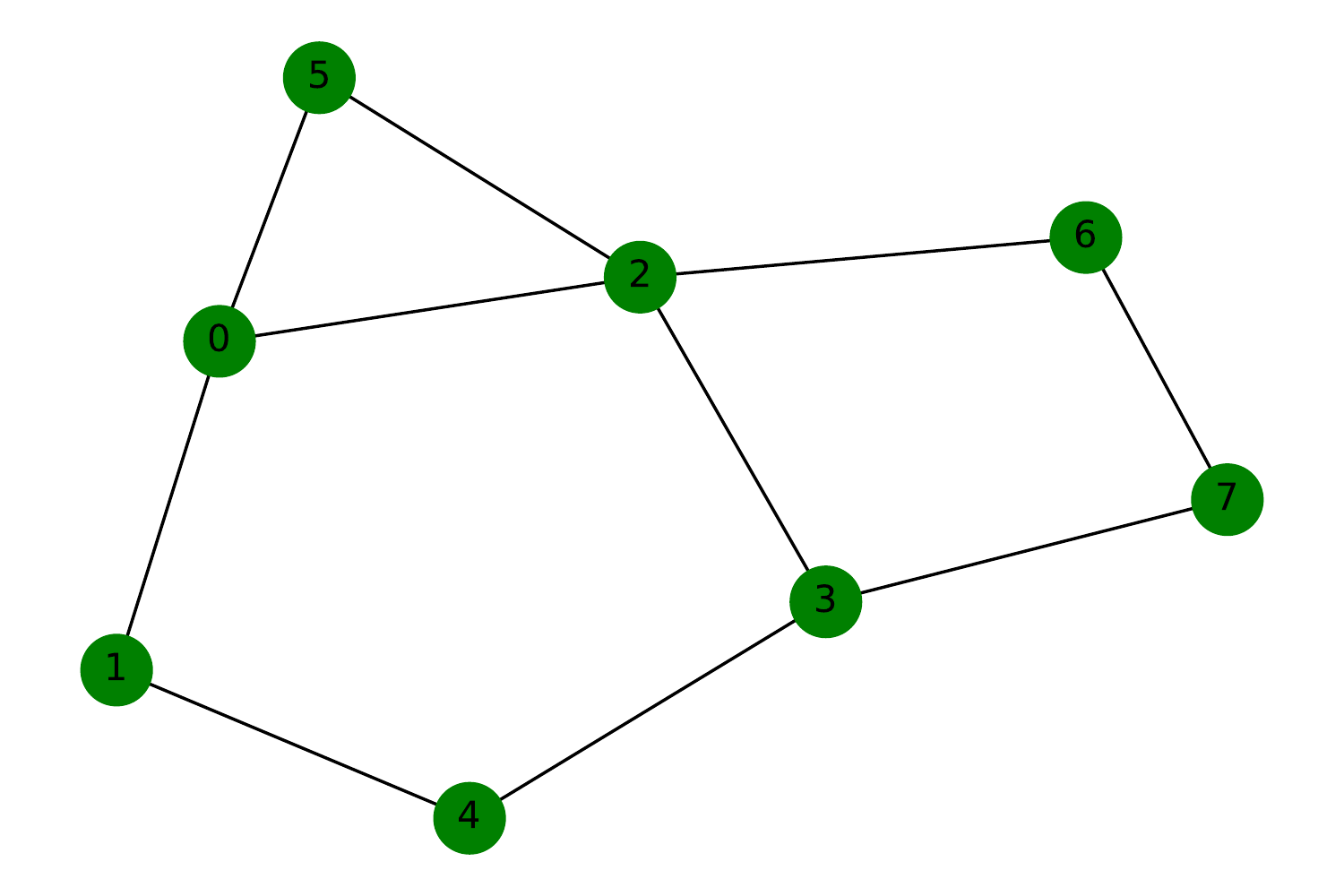}
         \caption{$8$-node graph $\cG_1$.}
         \label{fig:8node}
     \end{subfigure}
     \begin{subfigure}[b]{0.47\textwidth}
         \centering
         \includegraphics[width=\textwidth]{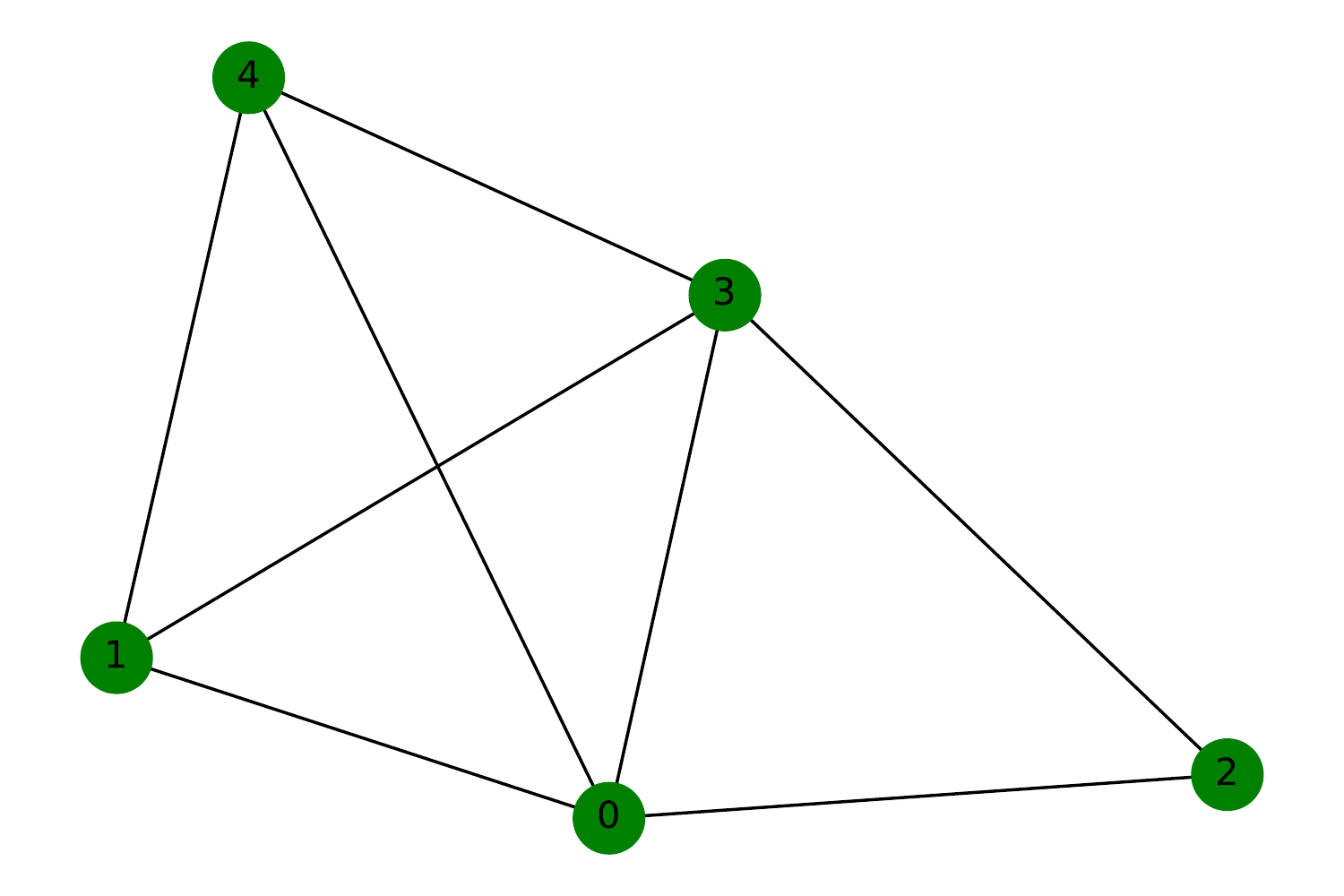}
         \caption{$5$-node graph $\cG_2$.}
         \label{fig:5node}
     \end{subfigure}
    \caption{Topology of two graphs.}
    \label{fig:topology_graph}
\end{figure}


Now, we are ready to introduce the construction of three Markov chains on two graphs in Figure \ref{fig:topology_graph}. 

\textbf{MHRW:} Metropolis-Hasting algorithm \citep{metropolis1953equation} shows that the transition matrix of MHRW is constructed in the following manner:
\begin{equation}
    P(i,j) = \begin{cases} \min\left\{\frac{1}{d_i}, \frac{1}{d_i}\right\}, & j\in N(i), \\ 1 - \sum_{j\in N(i)} P(i,j), & j = i, \end{cases}
\end{equation}
where $d_i$ is the degree of node $i$ and $N(i)$ is the set of node $i$'s neighbors.


\textbf{Modified-MHRW:} To construct a `modified-MHRW', which is more \textit{efficient} than the standard MHRW,\footnote{\textit{Efficiency ordering} of Markov chains is introduced in Definition 3.5. In short, a Markov chain $\{X_t\}_{t\geq 0}$ is more efficient than $\{Y_t\}_{t\geq 0}$ if the asymptotic variances (AV) satisfy $\sigma^2_X(g) \leq \sigma^2_Y(g)$ for any scalar-valued function $g$ with $\E_{\vpi}(g^2) < \infty$, where $\sigma^2_X(g)$ is defined in \eqref{eqn:AV scalar}.} we employ the notion of `Peskun ordering', originated from \citep{peskun1973optimum}.
\begin{definition}[Peskun ordering \citep{peskun1973optimum}]
For two finite, ergodic, reversible Markov chains $\{X_t\}_{t\geq 0}, \{Y_t\}_{t\geq 0}$ on the state space $\cV$ with transition matrices $\mP_X,\mP_Y$ having the same stationary distribution $\vpi$, it is said that $\mP_Y$ dominates $\mP_X$ off the diagonal, written as $\mP_X \preceq \mP_Y$ if $P_{X}(i,j) \leq P_Y(i,j)$ for all $i,j\in\cV$ and $i\neq j$.
\end{definition}
We have the following lemma that connects the Peskun ordering to the efficiency ordering.
\begin{lemma}[\citep{peskun1973optimum} Theorem 2.1.1]\label{lemma:peskun_ordering}
If $\mP_X \preceq \mP_Y$, then $\sigma^2_X(g) \geq \sigma^2_Y(g)$ for any scalar-valued function $g$ with $\E_{\vpi}(g^2) < \infty$, that is, $\{Y_t\}_{t\geq 0}$ is more efficient than $ \{X_t\}_{t\geq 0}$.
\end{lemma}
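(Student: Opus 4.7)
The plan is to cast the asymptotic variance of a reversible chain as a quadratic form in the resolvent $(\mI-\mP)^{-1}$ on $L^2_0(\vpi) \triangleq \{f:\E_\vpi(f)=0\}$, and then invoke the operator monotonicity of matrix inversion. Let $\hat g \triangleq g - \E_\vpi(g)$ and denote the $\vpi$-weighted inner product by $\langle f_1, f_2\rangle_\vpi \triangleq \sum_i \pi_i f_1(i) f_2(i)$. Reversibility makes $\mP$ self-adjoint on $L^2(\vpi)$, so it has a spectral decomposition with real eigenvalues $1=\lambda_1 > \lambda_2 \geq \cdots \geq \lambda_{|\cV|} \geq -1$ and orthonormal eigenvectors $\{\phi_i\}$. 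Writing $\hat g = \sum_{i\geq 2} a_i \phi_i$ and using $\mathrm{Cov}_\vpi(g(X_0),g(X_k)) = \sum_{i\geq 2} a_i^2 \lambda_i^k$ in the Cesaro-style definition of $\sigma^2_\mP(g)$ yields
\[
\sigma^2_\mP(g) \;=\; \sum_{i\geq 2} a_i^2\,\frac{1+\lambda_i}{1-\lambda_i} \;=\; \langle \hat g,\; [2(\mI-\mP)^{-1} - \mI]\,\hat g\rangle_\vpi,
\]
where the resolvent is well-defined on $L^2_0(\vpi)$ by irreducibility (the eigenvalue $1$ being simple), and a possible period-$2$ eigenvalue $\lambda_{|\cV|}=-1$ is handled consistently since both the summand and the Cesaro limit vanish there.

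The next step is to show that the Peskun ordering $\mP_X \preceq \mP_Y$ implies the Loewner inequality $\mI - \mP_X \leq_L \mI - \mP_Y$ as self-adjoint operators on $L^2_0(\vpi)$. Combining reversibility ($\pi_i P(i,j) = \pi_j P(j,i)$) and unit-row-sum gives the Dirichlet form identity
\[
\langle f,(\mI - \mP) f\rangle_\vpi \;=\; \tfrac{1}{2}\sum_{i,j}\pi_i P(i,j)\bigl(f(i)-f(j)\bigr)^2 .
\]
Because $P_Y(i,j) \geq P_X(i,j)$ for every $i\neq j$ (and diagonal entries contribute $0$ to $(f(i)-f(j))^2$), a term-by-term comparison gives $\langle f,(\mI-\mP_Y)f\rangle_\vpi \geq \langle f,(\mI-\mP_X)f\rangle_\vpi$ for every $f \in L^2_0(\vpi)$, which is exactly the claimed operator inequality.

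Finally, both $\mI - \mP_X$ and $\mI - \mP_Y$ are strictly positive operators on $L^2_0(\vpi)$ (their spectra lie in $(0,2]$), so operator monotonicity of inversion on the positive cone delivers $(\mI - \mP_Y)^{-1} \leq_L (\mI - \mP_X)^{-1}$, and hence $2(\mI - \mP_Y)^{-1} - \mI \leq_L 2(\mI - \mP_X)^{-1}-\mI$. Evaluating this quadratic form at $\hat g$ through the representation from the first paragraph yields $\sigma^2_Y(g) \leq \sigma^2_X(g)$ for every $g$ with $\E_\vpi(g^2)<\infty$, as required. The main obstacle is not any single deep estimate but the careful bookkeeping around $L^2_0(\vpi)$: on the full space $\mI-\mP$ is singular because of constants, so both the resolvent representation of $\sigma^2_\mP(g)$ and the invocation of operator monotonicity must be consistently restricted to this subspace; once that is arranged, the remaining steps are a textbook application of the order-reversing property of inversion for positive self-adjoint operators induced by reversible kernels.
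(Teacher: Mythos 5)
Your proof is correct; note first that the paper itself offers no proof of this lemma --- it is imported verbatim as Theorem 2.1.1 of \citep{peskun1973optimum} --- so there is no in-paper argument to match, and your writeup is a self-contained substitute. Your route is the operator-theoretic one later used by Tierney to extend Peskun's theorem to general state spaces: represent $\sigma^2_{\mP}(g)$ as the quadratic form of $2(\mI-\mP)^{-1}-\mI$ in $\hat g$ on $L^2_0(\vpi)$, convert off-diagonal domination into the Loewner inequality $\mI-\mP_X \leq_L \mI-\mP_Y$ through the Dirichlet-form identity (where the diagonal entries contribute nothing to $(f(i)-f(j))^2$, which is precisely why Peskun ordering only needs to constrain off-diagonal entries), and finish with anti-monotonicity of inversion on the positive definite cone. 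Peskun's original argument is genuinely different: a finite-state matrix computation built on the fundamental matrix $(\mI-\mP+\vone\vpi^T)^{-1}$, showing the asymptotic variance is nonincreasing along the segment $(1-\beta)\mP_X+\beta\mP_Y$ via a derivative calculation. Your version is structurally cleaner, makes the role of reversibility transparent (self-adjointness is what powers both the spectral representation and the Dirichlet form), and generalizes beyond finite chains, at the cost of the $L^2_0(\vpi)$ bookkeeping you correctly flag; Peskun's is elementary matrix algebra confined to the finite setting the paper actually uses. Two delicate points you handle correctly that are easy to botch: the period-two eigenvalue $\lambda=-1$, where the geometric series for $\sum_k \mathrm{Cov}_{\vpi}(g(X_0),g(X_k))$ fails to converge absolutely but the Cesaro limit and the eigenvalue $\frac{1+\lambda}{1-\lambda}$ of $2(\mI-\mP)^{-1}-\mI$ both vanish, so the representation survives; and the implicit stationary start in your covariance computation, which is harmless because for finite irreducible chains the limit defining $\sigma^2_X(g)$ in \eqref{eqn:AV scalar} is independent of the initial distribution, though a sentence acknowledging this would make the argument airtight.
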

We can manually construct a more efficient Markov chain by reducing the self-transition probability $P(i,i)$ of the MHRW and redistributing to off-diagonal entries, whenever possible, in a way that each row still sums to one and the resulting matrix is doubly-stochastic (i.e., the resulting Markov chain is reversible w.r.t the uniform distribution). In view of Lemma \ref{lemma:peskun_ordering}, this modification improves the efficiency (smaller AV $\sigma^2$ compared to the standard MHRW).\footnote{Note that there can be many ways to modify the standard MHRW that make the Markov chain more efficient. The pursuit of the `optimal' modification w.r.t the efficiency is out of the scope of this paper.} 


\textbf{FMMC:} FMMC is obtained by solving a semidefinite programming (proposed in problem (6) of \citep{boyd2004fastest}), which gives a Markov chain that minimizes the SLEM of the transition matrix over the entire class of reversible Markov chains w.r.t the uniform stationary distribution for a given graph topology. This is done numerically by using the CVXOPT package \citep{diamond2016cvxpy}. Later we will show in the simulation that FMMC indeed has the smallest SLEM compared to MHRW and Modified-MHRW.


In what follows, we index these three Markov chains with numbers in the subscript: MHRW (indexed by $1$), Modified-MHRW (indexed by $2$), and FMMC (indexed by $3$). For graph $\cG_1$, the transition matrices of MHRW $\mP_{1}^{\cG_1}$, Modified-MHRW $\mP_{2}^{\cG_1}$, and FMMC $\mP_{3}^{\cG_1}$ are given by
\begin{equation}\label{eqn:graph1_transition}
    \begin{split}
        \mP_{1}^{\cG_1} &= \begin{bmatrix} \nicefrac{1}{12} & \nicefrac{1}{3} & \nicefrac{1}{4} & 0 & 0 & \nicefrac{1}{3} & 0 & 0 \\ \nicefrac{1}{3}&\nicefrac{1}{6}&0&0&\nicefrac{1}{2}&0&0&0 \\ \nicefrac{1}{4}&0&0&\nicefrac{1}{4}&0&\nicefrac{1}{4}&\nicefrac{1}{4}&0 \\ 0&0&\nicefrac{1}{4}&\nicefrac{1}{12}&\nicefrac{1}{3}&0&0&\nicefrac{1}{3} \\ 0&\nicefrac{1}{2}&0&\nicefrac{1}{3}&\nicefrac{1}{6}&0&0&0 \\ \nicefrac{1}{3}&0&\nicefrac{1}{4}&0&0&\nicefrac{5}{12}&0&0 \\ 0&0&\nicefrac{1}{4}&0&0&0&\nicefrac{1}{4}&\nicefrac{1}{2} \\ 0&0&0&\nicefrac{1}{3}&0&0&\nicefrac{1}{2}&\nicefrac{1}{6} \end{bmatrix}, \\
        \mP_{2}^{\cG_1} &= \begin{bmatrix} 0 & 0.35 & 0.25 & 0 & 0 & 0.4 & 0 & 0 \\ 0.35&0.02&0&0&0.63&0&0&0 \\ 0.25&0&0&0.25&0&0.25&0.25&0 \\ 0&0&0.25&0&0.37&0&0&0.38 \\ 0&0.63&0&0.37&0&0&0&0 \\ 0.4&0&0.25&0&0&0.35&0&0 \\ 0&0&0.25&0&0&0&0.13&0.62 \\ 0&0&0&0.38&0&0&0.62&0 \end{bmatrix}, \\
        \mP_{3}^{\cG_1} &= \begin{bmatrix} 0.13 & 0.42 & 0.17 & 0 & 0 & 0.28 & 0 & 0 \\ 0.42&0.1&0&0&0.48&0&0&0 \\ 0.17&0&0&0.06&0&0.32&0.45&0 \\ 0&0&0.06&0.14&0.46&0&0&0.34 \\ 0&0.48&0&0.46&0.06&0&0&0 \\ 0.28&0&0.32&0&0&0.4&0&0 \\ 0&0&0.45&0&0&0&0.09&0.46 \\ 0&0&0&0.34&0&0&0.46&0.2 \end{bmatrix}.
    \end{split}
\end{equation}
For graph $\cG_2$, the transition matrices of MHRW $\mP_{1}^{\cG_2}$, Modified-MHRW $\mP_{2}^{\cG_2}$, and FMMC $\mP_{3}^{\cG_2}$ are given by
\begin{equation}\label{eqn:graph2_transition}
    \begin{split}
        \mP_{1}^{\cG_2} &= \begin{bmatrix} 0 & \nicefrac{1}{4} & \nicefrac{1}{4} & \nicefrac{1}{4} & \nicefrac{1}{4} \\ \nicefrac{1}{4} & \nicefrac{1}{6} & 0 & \nicefrac{1}{4} & \nicefrac{1}{3} \\ \nicefrac{1}{4} & 0 & \nicefrac{1}{2} & \nicefrac{1}{4} & 0 \\ \nicefrac{1}{4} & \nicefrac{1}{4} & \nicefrac{1}{4} & 0 & \nicefrac{1}{4} \\ \nicefrac{1}{4} & \nicefrac{1}{3} & 0 & \nicefrac{1}{4} & \nicefrac{1}{6}\end{bmatrix}, \\ 
        \mP_{2}^{\cG_2} &= \begin{bmatrix} 0 & 0.25 & 0.25 & 0.25 & 0.25 \\ 0.25 & 0 & 0 & 0.25 & 0.5 \\ 0.25 & 0 & 0.5 & 0.25 & 0 \\ 0.25 & 0.25 & 0.25 & 0 & 0.25 \\ 0.25 & 0.5 & 0 & 0.25 & 0\end{bmatrix}, \\ 
        \mP_{3}^{\cG_2} &= \begin{bmatrix} 0.09 & 0.25 & 0.33 & 0.08 & 0.25 \\ 0.25 & 0.25 & 0 & 0.25 & 0.25 \\ 0.33 & 0 & 0.34 & 0.33 & 0 \\ 0.08 & 0.25 & 0.33 & 0.09 & 0.25 \\ 0.25 & 0.25 & 0 & 0.25 & 0.25 \end{bmatrix}. \\
    \end{split}
\end{equation}
In both \eqref{eqn:graph1_transition} and \eqref{eqn:graph2_transition}, observe that Modified-MHRW and MHRW follow the Peskun ordering, i.e., $\mP_1^{\cG_1}\preceq \mP_2^{\cG_1}$ and $\mP_1^{\cG_2}\preceq \mP_2^{\cG_2}$, such that Modified-MHRW is more efficient than MHRW according to Lemma \ref{lemma:peskun_ordering}. In addition, the SLEMs of these matrices are given in Table \ref{tab:SLEM}, where FMMC has the smallest SLEM in both graphs compared to MHRW and Modified-MHRW. Interestingly, Modified-MHRW has larger SLEM than MHRW in graph $\cG_1$, which means Modified-MHRW can mix slower than MHRW to the stationary distribution.
\begin{table}[!ht]
    \centering
    \begin{tabular}{|c|c|c|}
         \hline
         &  $\cG_1$ & $\cG_2$ \\
         \hline
         MHRW ($\beta_1$) &  0.761 & 0.500\\
         \hline
         Modified-MHRW ($\beta_2$) & 0.868 & 0.500\\
         \hline
         FMMC ($\beta_3$) & 0.712 & 0.408\\
         \hline
    \end{tabular}
    \vspace{2mm}
    \caption{SLEMs of the transition matrices in \eqref{eqn:graph1_transition} and \eqref{eqn:graph2_transition}.}
    \label{tab:SLEM}
\end{table}

In Figure \ref{fig:my_label}, we show the simulation result of each Markov chain in the RWSGD algorithm with iteration \eqref{eqn:iteration_simulation} w.r.t MSE $\E\|\theta_t-\theta^*\|_2^2$ in graph $\cG_1$ and $\cG_2$.\footnote{The reason we plot the same curves in each graph will be explained in the next paragraph.} In both graphs, Modified-MHRW (green curve) performs better than MHRW (red curve) and FMMC (blue curve) with smallest MSE while it has the largest SLEM shown in Table \ref{tab:SLEM}. This implies that the order of SLEM does not reflect the order of MSE in the RWSGD algorithm.

\begin{figure}[!ht]
    \centering
     \begin{subfigure}[b]{0.49\textwidth}
         \centering
         \includegraphics[width=\textwidth]{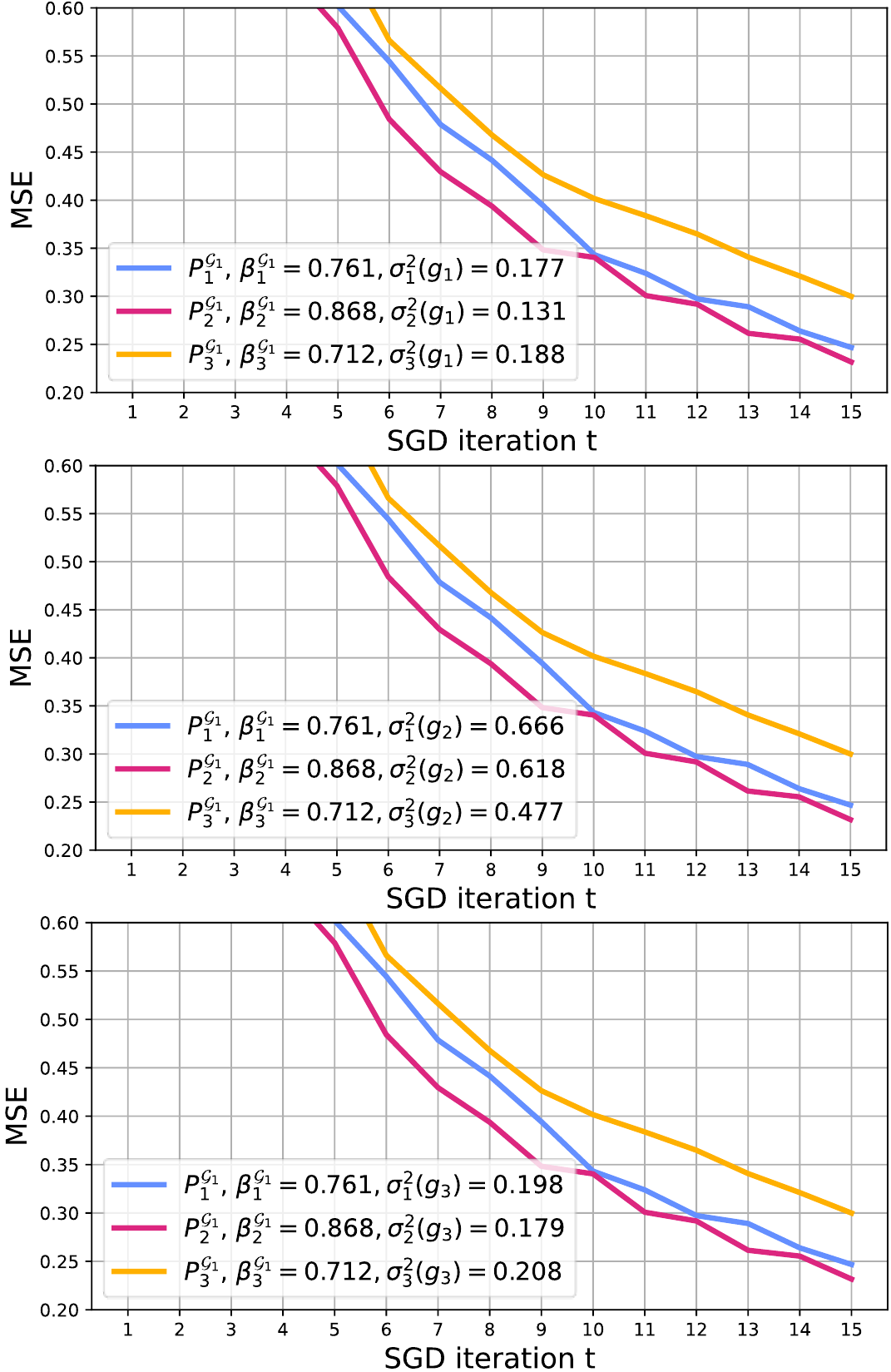}
         \caption{Graph $\cG_1$}
         \label{fig:fmmcvspeskun1}
     \end{subfigure}
     \begin{subfigure}[b]{0.49\textwidth}
         \centering
         \includegraphics[width=\textwidth]{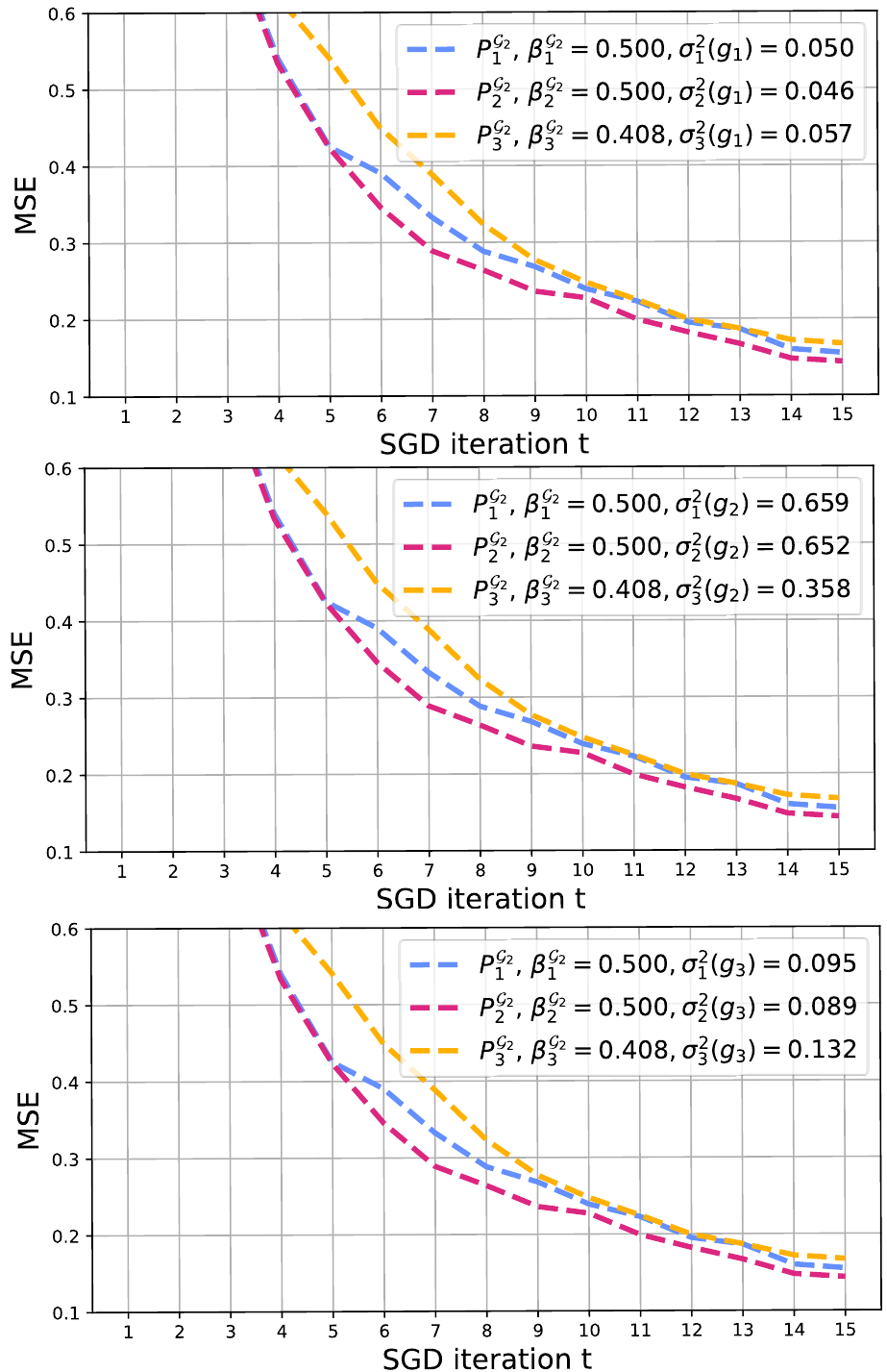}
         \caption{Graph $\cG_2$}
         \label{fig:fmmcvspeskun2}
     \end{subfigure}
    \caption{MSE $\E\|\theta_t-\theta^*\|_2^2$ of three Markov chains in the SGD algorithm with iteration \eqref{eqn:iteration_simulation}.}
    \label{fig:my_label}
\end{figure}


In Figure \ref{fig:my_label}, we repeat the plot in each graph three times with three different values of AVs inside the legend, the reason being that we want to see if the performance of each Markov chain is related to the AV $\sigma^2(g)$ and its test function $g$, other than SLEM solely. In the top row of Figure \ref{fig:my_label}, as well as in Figure 1, we choose the test function $g_1(i) = \nabla F(\theta^*,i)$ for $i=1,2,\cdots,n$, where $\nabla F(\theta,i)$ is the gradient of the local function $F(\theta,i)$ \eqref{eqn:obj_simu} w.r.t $\theta$. In the middle row of Figure \ref{fig:my_label}, the test function is $g_2(i) = d_i$, which estimates the average degree of the graph. In the bottom row of Figure \ref{fig:my_label}, the test function is $g_3(i) = \mathbbm{1}_{\{i = 1\}}$, which estimates the probability of visiting node $1$. We include the AVs of all three test functions $g_1,g_2,g_3$ in the legend of Figure \ref{fig:my_label}, e.g., $\sigma^2_3(g_1)$ is the AV of the test function $g_1$ for FMMC.\footnote{The AV of the test function for each Markov chain is calculated by running a stochastic simulation for a long time and directly computing according to the definition in \eqref{eqn:AV scalar}.} We observe that $\sigma^2_1(g_1) < \sigma_3^2(g_1)$ and $\sigma^2_1(g_3) < \sigma_3^2(g_3)$ while $\sigma^2_1(g_2) > \sigma_3^2(g_2)$ in both graphs. This means MHRW and FMMC are \textit{not} efficiency ordered, which is possible because efficiency ordering is a partial order such that not every two Markov chains can be ordered. On the other hand, in both graphs, $\sigma^2_1(g_k) > \sigma^2_2(g_k)$ for $k=1,2,3$, which is consistent with the fact that the constructed Modified-MHRW is more efficient than MHRW. Regarding the MSE, we find that Modified-MHRW performs better than MHRW in both graphs, which is in line with the efficiency ordering. This leads us to conjecture that two efficiency ordered Markov chains might also have their performance in the RWSGD algorithm ordered in the same way. 


\begin{remark}\label{remark:discussion_SLEM}
Section 1.1 in \citep{sun2018markov} numerically compares the performance of a reversible Markov chain and its non-reversible counterpart in the RWSGD algorithm w.r.t SLEM, and shows that the non-reversible counterpart with smaller SLEM performs better. The main theorem therein is also applicable to the comparison of two reversible Markov chains. However, as shown in Figure \ref{fig:my_label}, we provide examples to show that a reversible Markov chain with smaller SLEM does not necessarily lead to smaller MSE in the RWSGD algorithm. Note that our results do not contradict the simulation results in Section 1.1 of \citep{sun2018markov}, since it is possible for a Markov chain to have both smaller AV and smaller SLEM; which could be the case for the non-reversible counterpart in \citep{sun2018markov}, although they didn't specify the AV in their simulation. Moreover, their main theorem is an upper bound to the error terms considered, which means that an SLEM-based ordering does not guarantee a performance ordering of the error terms themselves, as also exemplified in Figure \ref{fig:my_label}. All of these together imply that SLEM alone cannot be the sole indicator of performance of the Markov chains as input sequences for RWSGD algorithms. 
\qed
\end{remark}

\subsection{Numerical Results on Large Graphs}\label{subsection:additional_result}
We first specify the process of dataset generation for the sum-of-nonconvex functions $\hat{f}(\theta)$ in \eqref{eqn:obj}. We generate random vectors $\va_1,\cdots\va_n,\vb \in \R^{10}$ uniformly from $[0,1]$ and ensure the invertibility of $\sum_{i=1}^n \va_i\va_i^T$. Then, we randomly select half of the matrices in $\{\mD_i\}_{i\in[n]}$ and assign $+1.1$ to their $j$-th diagonal; other matrices are assigned $-1.1$ to $j$-th diagonal. We repeat the above process for all diagonal values $j=1,2,\cdots,10$. This process guarantees $\sum_{i=1}^n \mD_i = \mathbf{0}$. 

We perform additional simulations on graph `AS-733' \citep{leskovec2005graphs} with $6474$ nodes, and graph `wikiVote' \citep{leskovec2010signed} with $889$ nodes with the same objective functions $\Tilde{f}(\theta)$ and $\hat{f}(\theta)$ in \eqref{eqn:obj}. The simulation results are given in Figure \ref{fig:as733} and \ref{fig:wiki}. We plot the curves of NBRW and SRW in the insets of Figures \ref{fig:as733_1}, \ref{fig:as733_2}, \ref{fig:wiki_1}, and \ref{fig:wiki_2}, with the same x,y axes but at linear scale, to better observe the difference in their performance. For both objective functions, NBRW has smaller MSE than SRW and both random and single shuffling perform better than uniform sampling, e.g., Figure \ref{fig:as733_1} and \ref{fig:wiki_1}.\footnote{The curves of NBRW and SRW in Figure \ref{fig:as733_1} and \ref{fig:wiki_1} appear flat because they are plotted in the same figure with uniform sampling and single/random shuffling, which have much smaller MSE. We plot the comparison between NBRW and SRW separately in the inset.} This demonstrate that NBRW and SRW are efficiency-ordered, which also holds for random/single shuffling and uniform sampling. Note that since we simulate on large graphs, for the logistic regression problem, the SGD algorithm with NBRW and SRW is yet to enter the asymptotic regime even in the $100,000$-th iteration, which can be explained by the blue and green increasing curves in the inset of Figure \ref{fig:as733_2} and \ref{fig:wiki_2}. On the other hand, the curve of uniform sampling becomes flat and the curves of single/random shuffling are starting to go down in Figure \ref{fig:as733_2}, \ref{fig:as733_4} and \ref{fig:wiki_2}, \ref{fig:wiki_4}, implying that they have entered the asymptotic regime. These results are consistent to the observations in Figure 2, which support our theory.

\begin{figure}[!ht]
    \centering
     \begin{subfigure}[b]{0.61\textwidth}
         \centering
         \includegraphics[width=\textwidth]{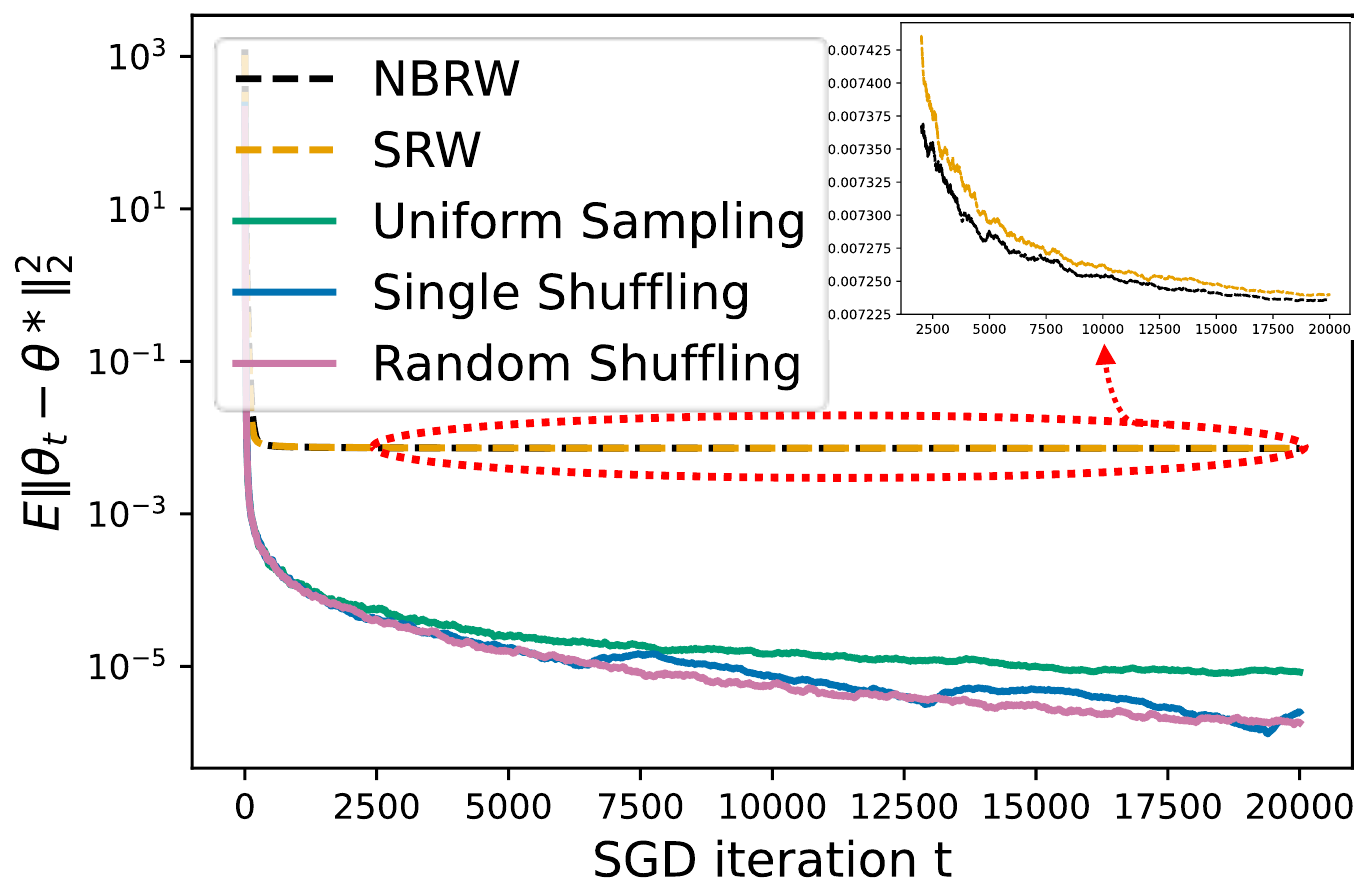}
         \caption{Logistic regression (MSE)}
         \label{fig:as733_1}
     \end{subfigure}
     \begin{subfigure}[b]{0.61\textwidth}
         \centering
         \includegraphics[width=\textwidth]{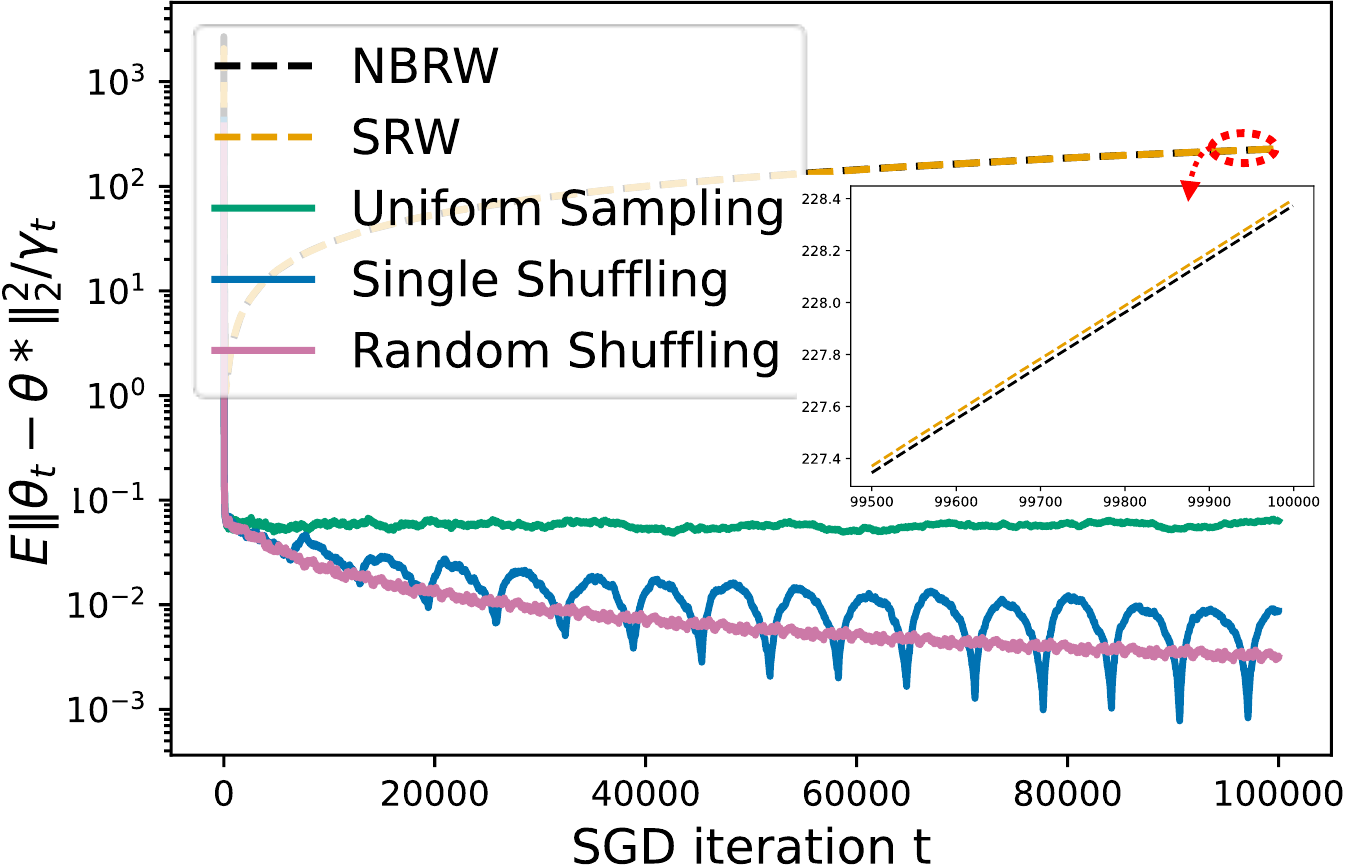}
         \caption{Logistic regression (scaled MSE)}
         \label{fig:as733_2}
     \end{subfigure}
     \begin{subfigure}[b]{0.61\textwidth}
         \centering
         \includegraphics[width=\textwidth]{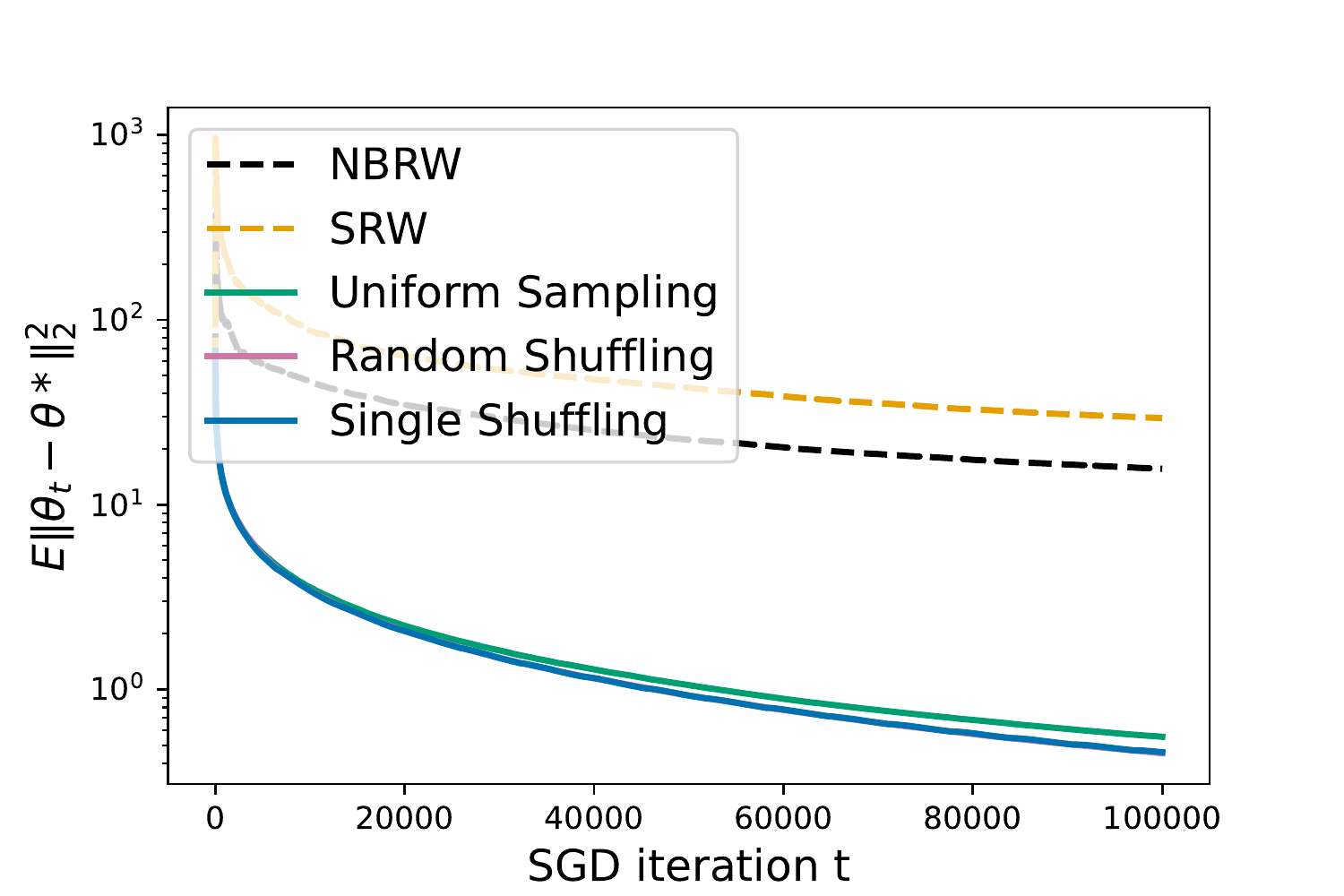}
         \caption{sum of non-convex fn. (MSE)}
         \label{fig:as733_3}
     \end{subfigure}
     \begin{subfigure}[b]{0.61\textwidth}
         \centering
         \includegraphics[width=\textwidth]{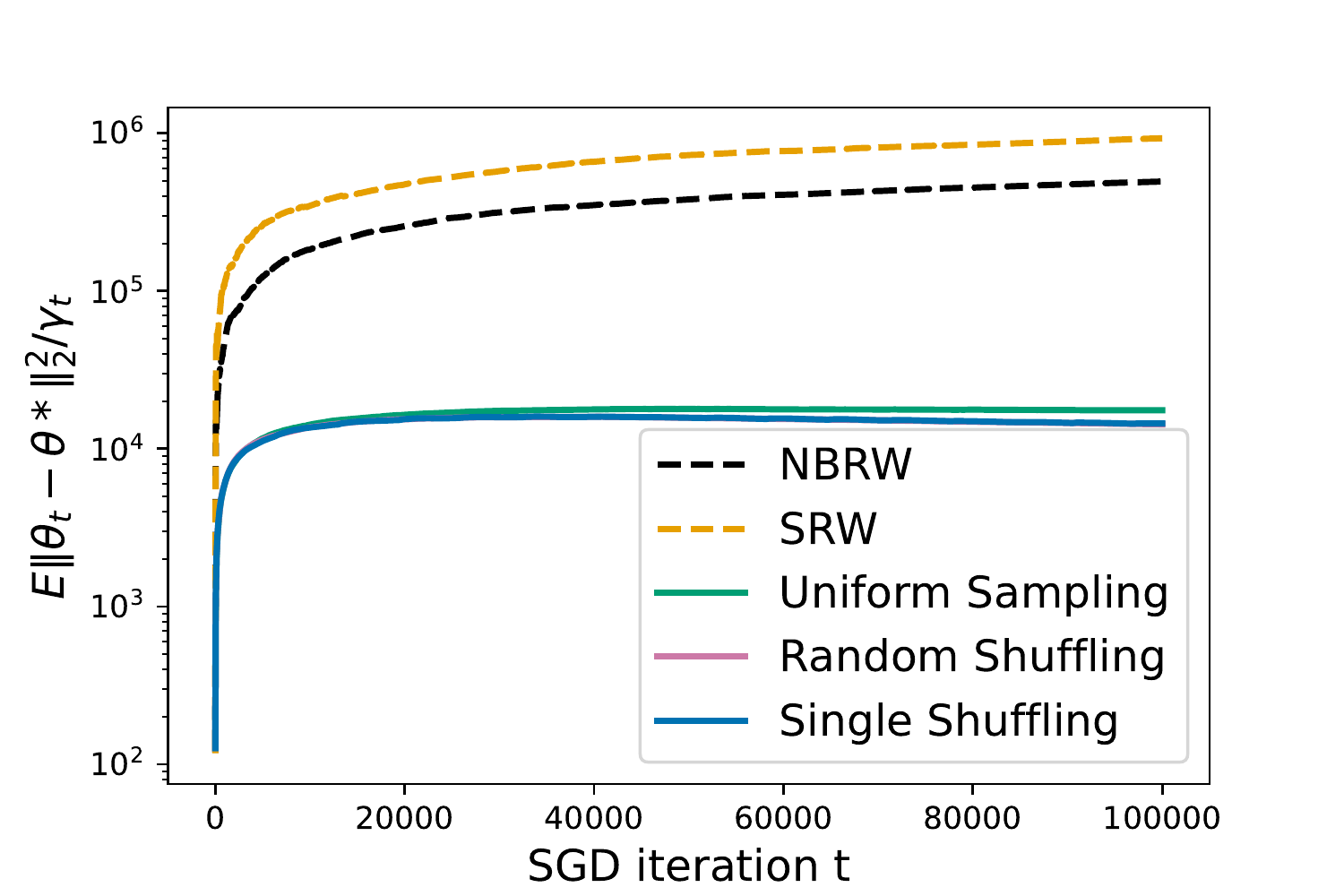}
         \caption{sum of non-convex fn. (scaled MSE)}
         \label{fig:as733_4}
     \end{subfigure}
     \vspace{-2mm}
     \caption{Performance comparison of different stochastic inputs on the graph `AS-733'.} 
     \label{fig:as733}
     \vspace{-2mm}
\end{figure}

\begin{figure}[!ht]
    \centering
     \begin{subfigure}[b]{0.61\textwidth}
         \centering
         \includegraphics[width=\textwidth]{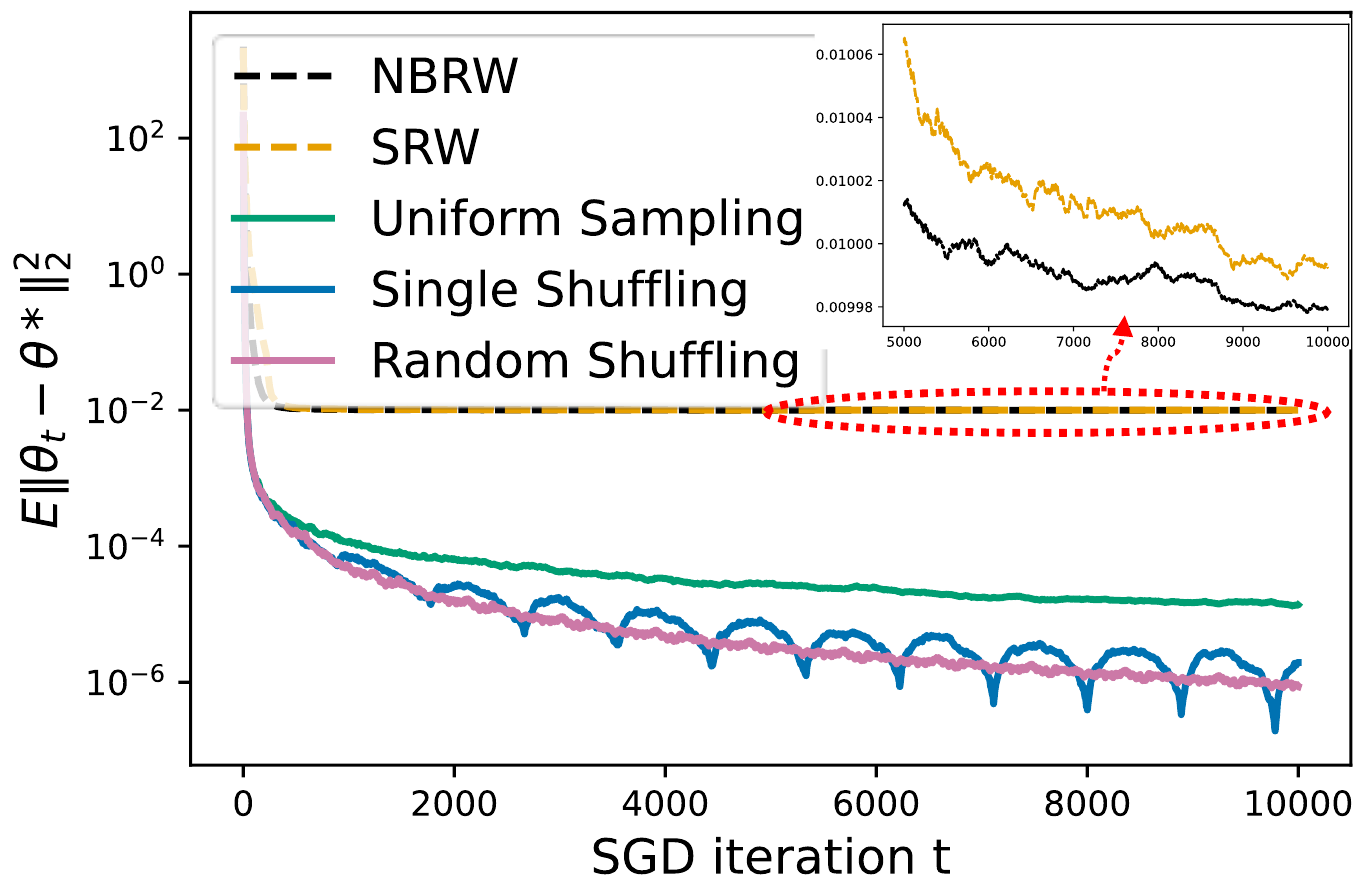}
         \caption{Logistic regression (MSE)}
         \label{fig:wiki_1}
     \end{subfigure}
     \begin{subfigure}[b]{0.61\textwidth}
         \centering
         \includegraphics[width=\textwidth]{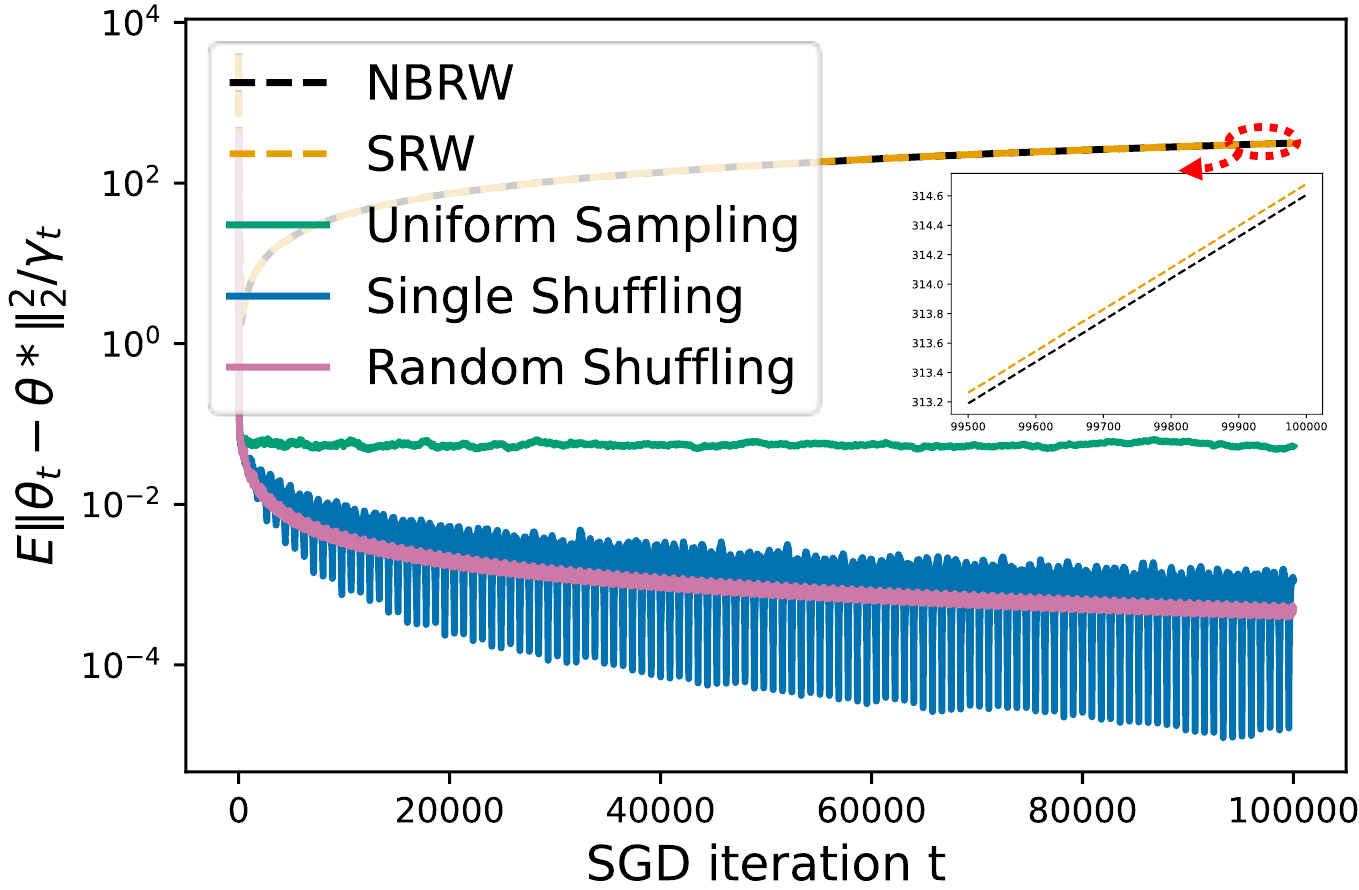}
         \caption{Logistic regression (scaled MSE)}
         \label{fig:wiki_2}
     \end{subfigure}
     \begin{subfigure}[b]{0.61\textwidth}
         \centering
         \includegraphics[width=\textwidth]{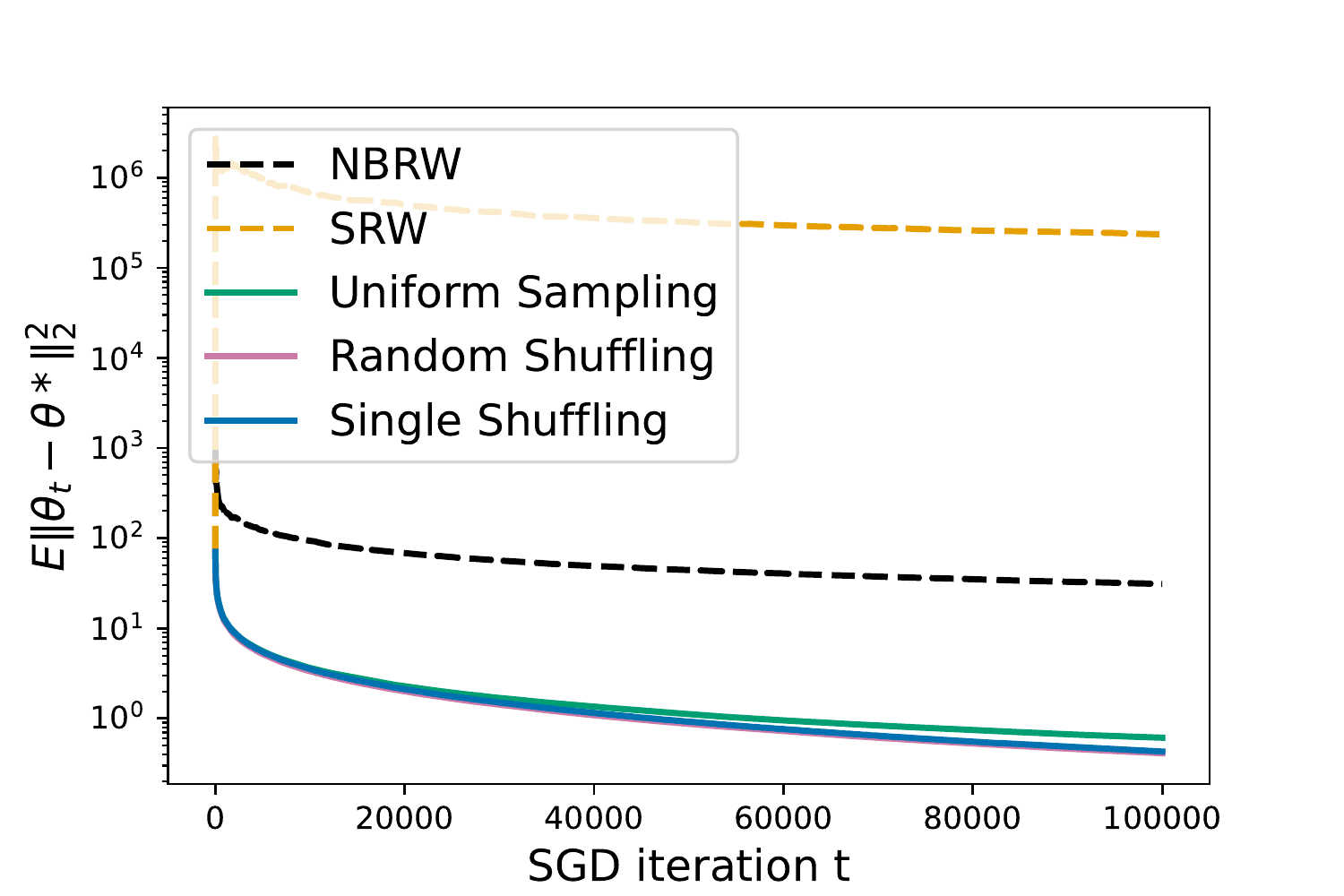}
         \caption{sum of non-convex fn. (MSE)}
         \label{fig:wiki_3}
     \end{subfigure}
     \begin{subfigure}[b]{0.61\textwidth}
         \centering
         \includegraphics[width=\textwidth]{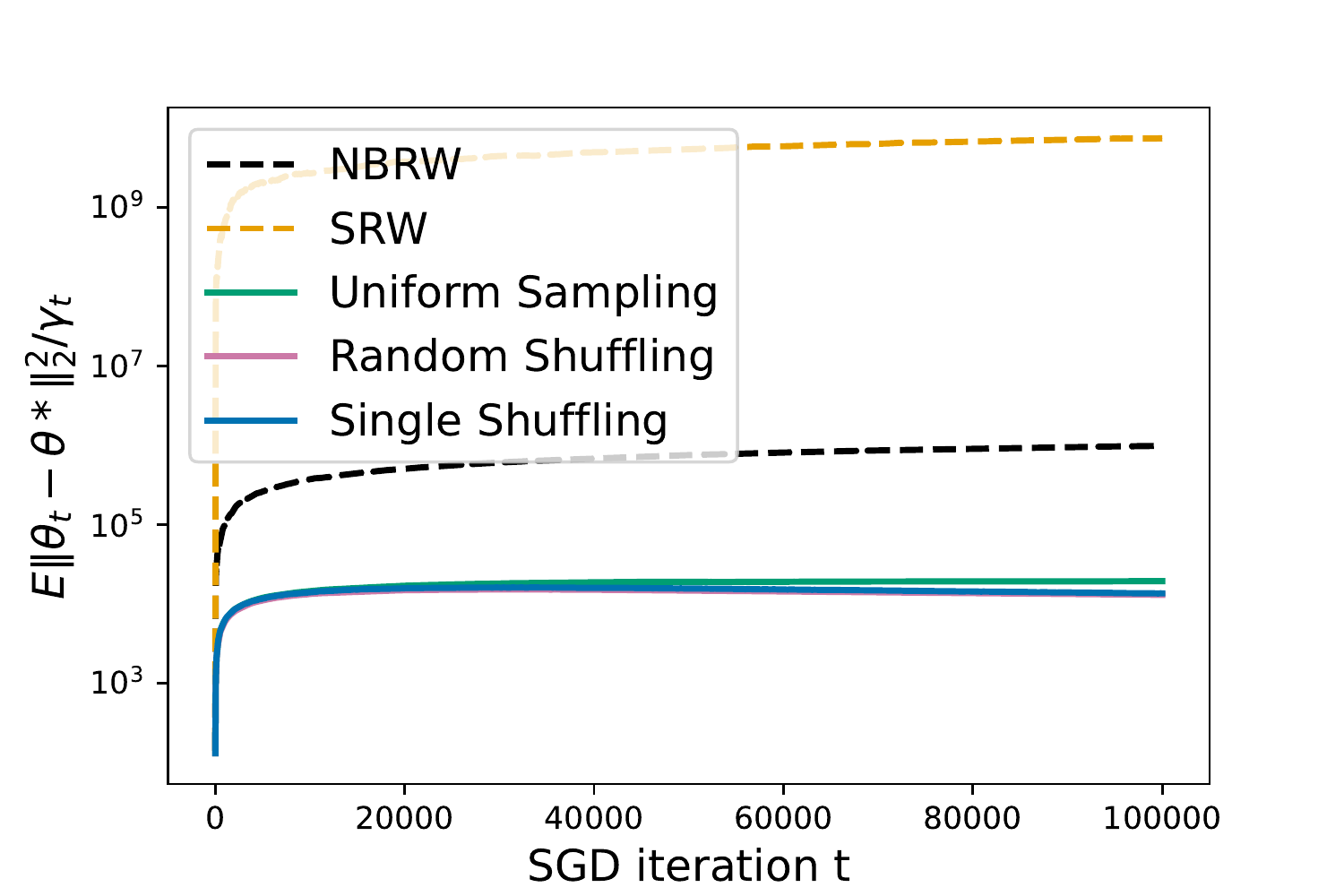}
         \caption{sum of non-convex fn. (scaled MSE)}
         \label{fig:wiki_4}
     \end{subfigure}
     \vspace{-2mm}
     \caption{Performance comparison of different stochastic inputs on the graph `wikiVote'.} 
     \label{fig:wiki}
     \vspace{-2mm}
\end{figure}

\subsection{Additional Simulations on Non-convex Objective Function and SGD Variants}\label{app:simulation_ext}

Regarding the SGD variants other than the vanilla SGD, central limit theorem (CLT) is less well studied in the literature. To list a few, \citep{lei2020variance} studied variance reduced SGD (SVRG) and obtained the CLT for constant step size. \citep{barakat2021convergence} analyzed Adam and their follow-up \citep{barakat2021stochastic} extended the CLT for a general SGD algorithm, which includes Stochastic Heavy Ball (SHB), Nesterov accelerated SGD (NaSGD) and Adam. \citep{li2022revisiting} established the CLT for momentum SGD (mSGD) and NaSGD under more general conditions on the step size. However, all of these recent works focus only on the Martingale difference noise 
$$\mathbb{E}[\delta_{t+1}|\mathcal{F}_{t} \triangleq \sigma(\theta_0,X_0,X_1,\cdots,X_t)] = \mathbb{E}[\nabla f(\theta_t) - \nabla F(\theta_t,X_{t+1})|\mathcal{F}_{t}] = 0,$$
which is equivalent to saying that the input $\{X_t\}_{t\geq 0}$ is independently sampled from some identical distribution for each time $t$ (\textit{i.i.d} input sequence). Meanwhile, for Markovian inputs, $$\mathbb{E}[\delta_{t+1}|\mathcal{F}_{t}]= \sum_{i\in[n]}\pi_i \nabla F(\theta_t,i) - \sum_{i\in[n]}P(X_{t},i)\nabla F(\theta_t,i) \neq 0$$
because $\pi_i \neq P(X_{t},i)$ in general (unless $\{X_t\}_{t\geq 0}$ is an \textit{i.i.d} sequence). It remains an open problem to obtain the CLT for these SGD variants with general Markovian inputs, which would be a prerequisite for our efficiency ordering. Indeed, one of our future works is to theoretically prove the CLT results for SGD variants with Markovian inputs and to carry over our efficiency ordering of different stochastic inputs.

Next, we simulate two SGD variants, i.e., Nesterov accelerated SGD (NaSGD) and ADAM, on graph ``AS-733'' (as used in Appendix \ref{subsection:additional_result}) with two pair of stochastic inputs, i.e., NBRW versus SRW and shuffling methods versus \textit{i.i.d} input sequence, with respect to both convex objective function and non-convex objective function. We choose the convex objective function $\hat{f}(\theta)$ from \eqref{eqn:obj} such that
\begin{equation}
    \hat{f}(\theta) \!=\! \frac{1}{n}\sum_{i=1}^n \theta^T(\va_i\va_i^T \!+\! \mD_i)\theta \!+\! \vb^T\theta,
\end{equation}
where $\sum_{i=1}^n \va_i\va_i^T$ is invertible and $\sum_{i=1}^n \mD_i = \mathbf{0}$. We can see $\nabla^2 \hat{f}(\theta) = \frac{2}{n}\sum_{i=1}^n \va_i\va_i^T$ is a positive semi-definite matrix and $\hat{f}(\theta)$ is convex. Then, we modify matrices $\{\mD_i\}_{i\in[n]}$ such that the first element on the main diagonal of each matrix $\mD_i$ is subtracted by $0.1$, and we denote the new matrices as $\{\mM_i\}_{i\in[n]}$. We define a new function $\hat{g}(\theta)$ such that
\begin{equation}
    \hat{g}(\theta) =\frac{1}{n}\sum_{i=1}^n \theta^T(\mathbf{a}_i\mathbf{a}_i^T \!+\! \mathbf{M}_i)\theta \!+\! \mathbf{b}^T\theta.
\end{equation}
We numerically compute $\nabla^2 \hat{g}(\theta) = \frac{2}{n}\sum_{i=1}^n (\va_i\va_i^T+\mM_i)$ and ensure it has at least one negative eigenvalue such that the objective function $\hat{g}(\theta)$ is non-convex. For Nesterov accelerated SGD, we employ the following iteration \citep{li2022revisiting}
\begin{equation}\label{eqn:nasgd}
\begin{split}
    \theta_{t+1} &= u_t - \gamma_{t+1} \nabla G(u_{t},X_{t+1}),\\
    u_{t+1} &= \theta_{t+1} + \beta_{t+1} (\theta_{t+1} -\theta_t),
\end{split}
\end{equation}
where $\gamma_t = 1/0.9^t$ and $\beta_{t+1} \equiv \beta = 0.5$ in our settings. For ADAM, we use the following iteration \citep{Kingma2015}
\begin{equation}\label{eqn:adam}
\begin{split}
    g_{t+1} &= \nabla G(\theta_t,X_{t+1}), \\
    m_{t+1} &= \alpha_1 m_t + (1-\alpha_1) g_{t+1}, \\
    v_{t+1} &= \alpha_2 v_t + (1-\alpha_2) g_{t+1}^2, \\
    m' &= m_{t+1}/(1-\alpha_1^t), \\
    v' &= v_{t+1}/(1-\alpha_2^t), \\
    \theta_{t+1} &= \theta_t - \gamma_t m'/(\sqrt{v'} + \epsilon),
\end{split}
\end{equation}
where $\gamma_t=1/0.9^t$, $\alpha_1 = 0.9, \alpha_2 = 0.999$, $\epsilon = 10^{-8}$, $g_{t+1}^2$ is the element-wise square for the vector $g_{t+1}$ and $\sqrt{v'}$ is the element-wise square root for the vector $v'$.  
In both \eqref{eqn:nasgd} and \eqref{eqn:adam}, function $G(\theta,i) = \theta^T(\va_i\va_i^T+\mD_i)\theta + \vb^T\theta$ for convex objective function $\hat{f}(\theta)$ and $G(\theta,i) = \theta^T(\va_i\va_i^T+\mM_i)\theta + \vb^T\theta$ for non-convex objective function $\hat{g}(\theta)$.

The insets of Figure \ref{fig:8} are to enlarge the curves of NBRW and SRW in ADAM algorithm for the iteration $t\in[40000,50000]$ to make them more distinguishable. In Figure \ref{fig:8}, we show that for both convex and non-convex objective functions, the curves of NBRW are always below those of SRW in vanilla SGD, NaSGD and ADAM, respectively. This not only supports our Theorem \ref{Thm:main_result} on vanilla SGD and both convex and non-convex objective functions, but also suggest that the efficiency ordering is still valid for other SGD variants. In Figure \ref{fig:9}, we also empirically test the performance of shuffling methods and uniform sampling on vanilla SGD, NaSGD and ADAM with non-convex objective function $\hat{g}(\theta)$. In all three SGD iterations, we show that shuffling methods are better than uniform sampling, although the gap between shuffling methods and uniform sampling is small in NaSGD and ADAM in Figure \ref{fig:9b} and \ref{fig:9c} and the reason could be that these SGD variants implicitly include the ``momentum'' may decrease the effect of the correlation from the stochastic inputs. We also notice from Figure \ref{fig:8} that for a given stochastic input, NaSGD and ADAM are better than vanilla SGD, while in Figure \ref{fig:9} the result is reversed. Currently, we only know that those SGD variants NaSGD and ADAM work better than vanilla SGD in practice for \textit{i.i.d} input sequence. It remains an open problem for SGD variants with general Markovian inputs, and thus, it's possible that Markovian inputs can influence the performance of NaSGD and ADAM, compared to vanilla SGD. In any case, Figure \ref{fig:8} and Figure \ref{fig:9} still validate our efficiency ordering.

\begin{figure}[!ht]
    \centering
     \begin{subfigure}[b]{0.9\textwidth}
         \centering
         \includegraphics[width=\textwidth]{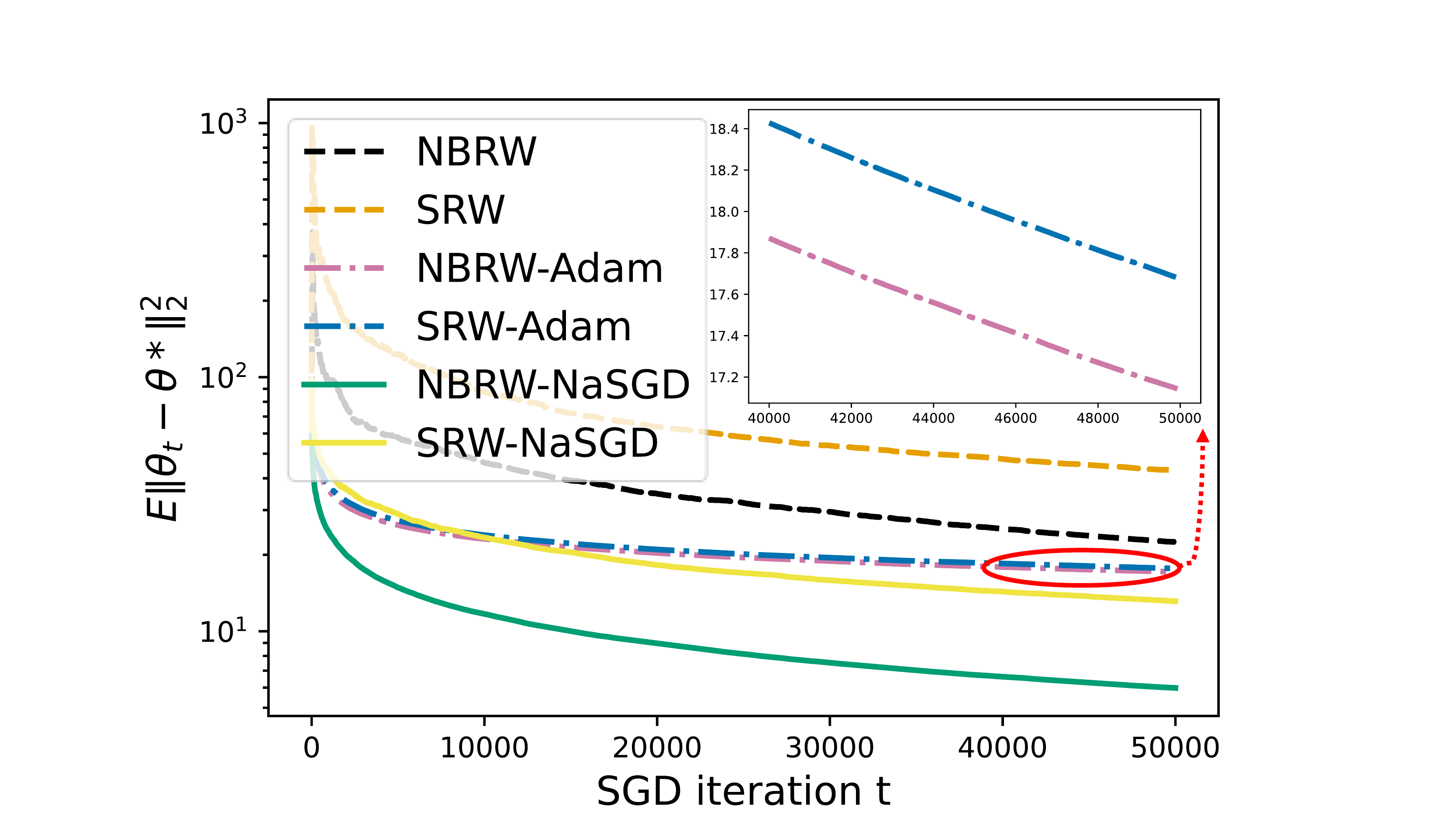}
         \caption{sum-non-convex functions $\hat{f}(\theta)$}
         \label{fig:8a}
     \end{subfigure}\quad
     \begin{subfigure}[b]{0.9\textwidth}
         \centering
         \includegraphics[width=\textwidth]{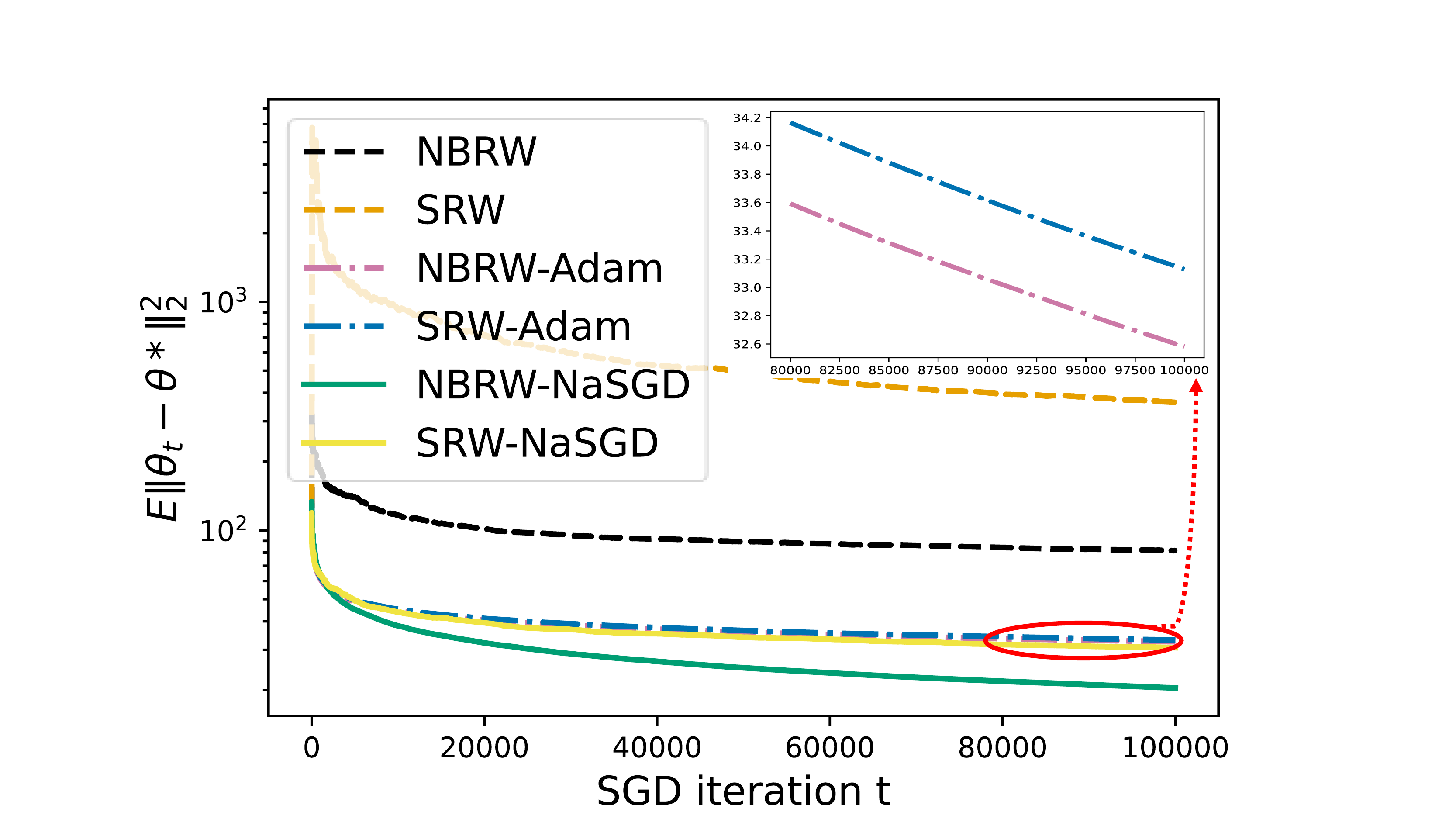}
         \caption{non-convex function $\hat{g}(\theta)$}
         \label{fig:8b}
     \end{subfigure} 
     \vspace{-2mm}
     \caption{Performance comparison of NBRW and SRW in vanilla SGD, NaSGD and ADAM algorithms on the graph ``AS-733''.}
     \label{fig:8}
     \vspace{-2mm}
\end{figure}

\begin{figure}[!ht]
    \centering
     \begin{subfigure}[b]{0.75\textwidth}
         \centering
         \includegraphics[width=\textwidth]{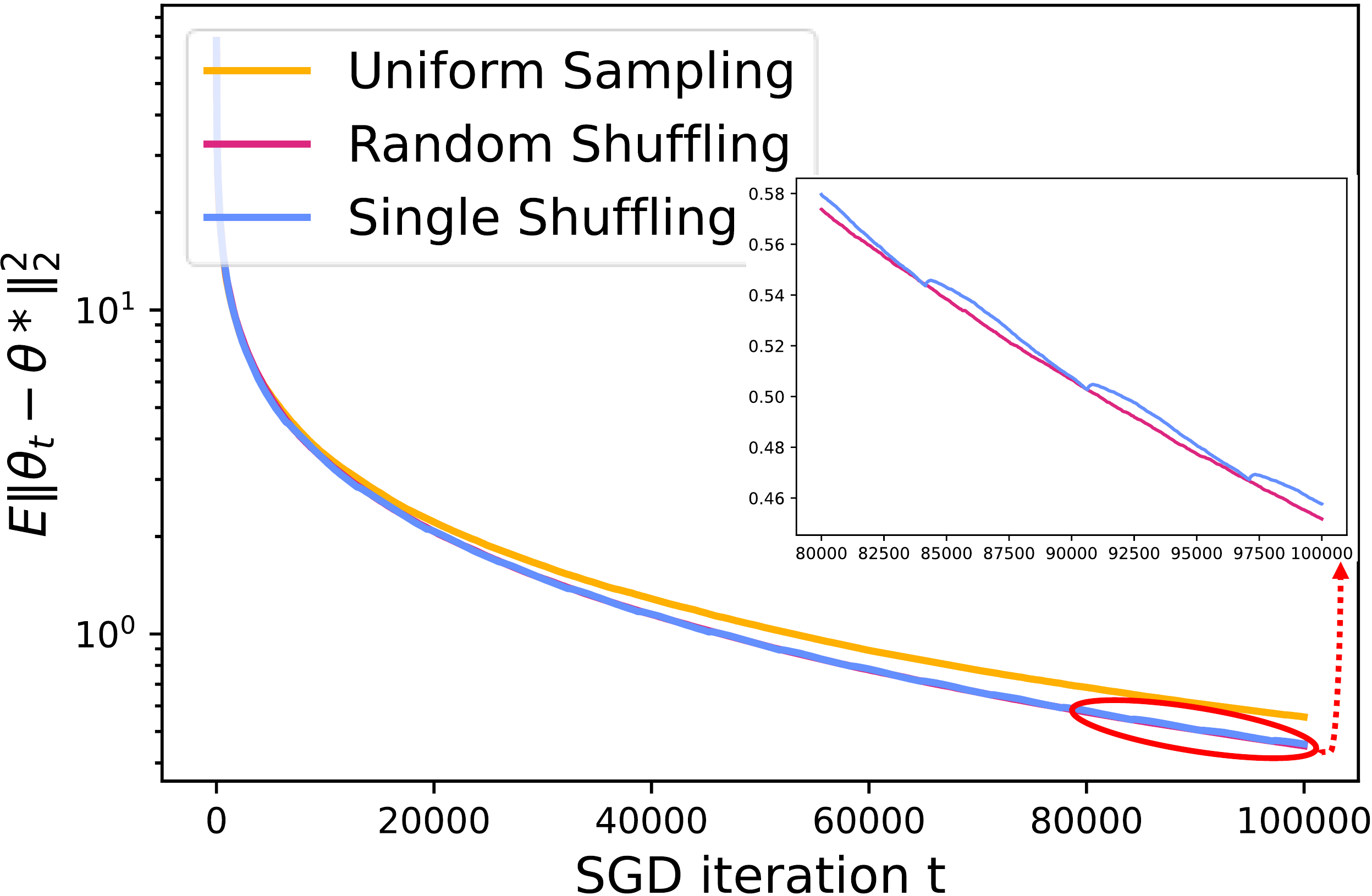}
         \caption{Vanilla SGD}
         \label{fig:9a}
     \end{subfigure}\quad
     \begin{subfigure}[b]{0.75\textwidth}
         \centering
         \includegraphics[width=\textwidth]{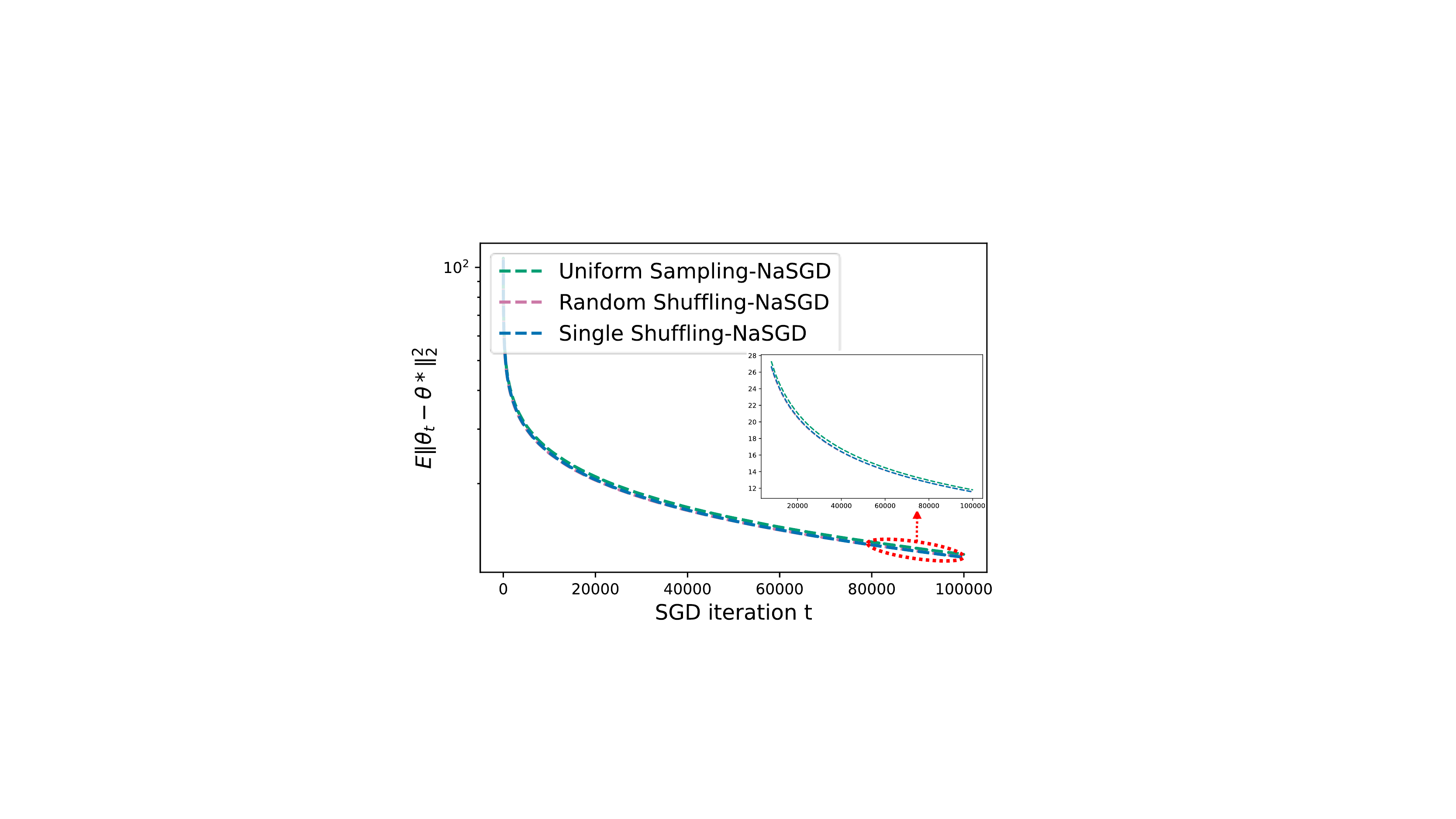}
         \caption{NaSGD}
         \label{fig:9b}
     \end{subfigure} 
     \vspace{-2mm}
     \begin{subfigure}[b]{0.8\textwidth}
         \centering
         \includegraphics[width=\textwidth]{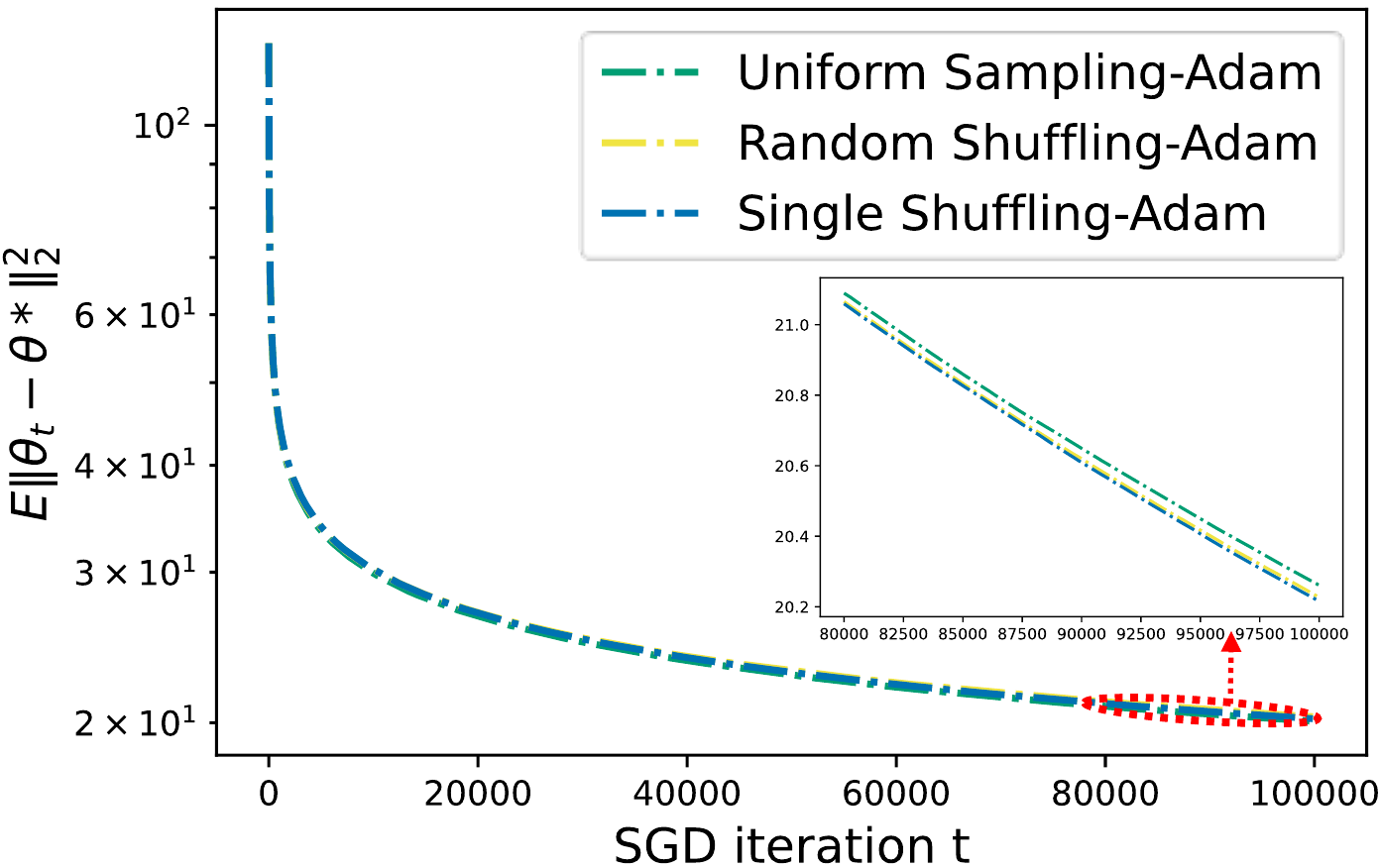}
         \caption{ADAM}
         \label{fig:9c}
     \end{subfigure}
     \caption{Performance comparison of shuffling methods and uniform sampling with non-convex objective function $\hat{g}(\theta)$ on the graph ``AS-733''.}
     \label{fig:9}
     \vspace{-2mm}
\end{figure}

\bibliographystyle{plain}
\bibliography{main.bib}

\end{document}